%% file: 000_main_arxiv.tex
\documentclass{article}
\usepackage[preprint,nonatbib]{neurips_2022}
\makeatletter
\renewcommand{\@noticestring}{Preprint.}
\makeatother
\input{001_preamble}
\newcommand{\PaperArxivLayout}{}
\input{002_macro}
\input{003_metadata}
\title{\papertitle}
\author{\paperauthors\\\paperaffiliations}
\date{}

\begin{document}
\hypersetup{pdftitle={\papertitle},pdfauthor={\paperauthors}}
\maketitle
\input{009_abst_intro}
\input{010_body}
\input{015_equivalent_models}
\input{020_simulation}
\input{070_discussion}
\input{072_conclusion}
\FloatBarrier
\bibliographystyle{apalike}
\bibliography{references}

\clearpage
\appendix
\input{090_appendix}
\end{document}

%% file: 001_preamble.tex
\usepackage[utf8]{inputenc}
\usepackage[T1]{fontenc}
\usepackage{amsmath,amssymb,amsthm,mathtools,mathrsfs}
\usepackage{graphicx,booktabs,array,longtable,microtype,xcolor,placeins}
\usepackage{wrapfig,needspace}
\usepackage{tikz}
\usetikzlibrary{arrows.meta,positioning}
\DeclareSymbolFont{independencesymbols}{U}{txsyc}{m}{n}
\DeclareMathSymbol{\independent}{\mathrel}{independencesymbols}{121}
\usepackage[round,authoryear]{natbib}
\usepackage[hidelinks]{hyperref}
\usepackage[capitalize,noabbrev]{cleveref}
\graphicspath{{figs/}}
\theoremstyle{plain}
\newtheorem{theorem}{Theorem}[section]
\newtheorem{prop}[theorem]{Proposition}
\newtheorem{lemma}[theorem]{Lemma}
\newtheorem{cor}[theorem]{Corollary}
\theoremstyle{definition}

\newtheorem{assumption}[theorem]{Assumption}

\newtheorem{remark}[theorem]{Remark}
\crefformat{equation}{(#2#1#3)}
\Crefformat{equation}{(#2#1#3)}
\crefrangeformat{equation}{(#3#1#4)--(#5#2#6)}
\Crefrangeformat{equation}{(#3#1#4)--(#5#2#6)}
\crefmultiformat{equation}{(#2#1#3)}{ and~(#2#1#3)}{, (#2#1#3)}{, and~(#2#1#3)}
\Crefmultiformat{equation}{(#2#1#3)}{ and~(#2#1#3)}{, (#2#1#3)}{, and~(#2#1#3)}
\crefname{assumption}{Assumption}{Assumptions}
\Crefname{assumption}{Assumption}{Assumptions}
\crefname{cor}{Corollary}{Corollaries}
\Crefname{cor}{Corollary}{Corollaries}
\crefname{prop}{Proposition}{Propositions}
\Crefname{prop}{Proposition}{Propositions}
\numberwithin{equation}{section}

%% file: 002_macro.tex
\newcommand{\R}{\mathbb{R}}
\newcommand{\C}{\mathbb{C}}
\newcommand{\N}{\mathbb{N}}
\newcommand{\E}{\mathbb{E}}
\newcommand{\Prob}{\mathbb{P}}
\newcommand{\Var}{\operatorname{Var}}

\DeclareMathOperator{\Tr}{Tr}
\DeclareMathOperator{\tr}{tr}
\DeclareMathOperator{\diag}{diag}
\DeclareMathOperator{\rank}{rank}
\DeclareMathOperator{\softmax}{softmax}

\newcommand{\ntoken}{n}
\newcommand{\dimqk}{d_{\mathrm K}}
\newcommand{\dimvalue}{d_{\mathrm V}}
\newcommand{\dimout}{d_{\mathrm O}}
\newcommand{\Normal}{\mathcal N}
\input{002_notation}

\ifdefined\PaperMonochrome
  \colorlet{blockfirst}{black}
  \colorlet{blocklast}{black}
\else
  \definecolor{blockfirst}{RGB}{0,91,103}
  \definecolor{blocklast}{RGB}{100,58,103}
\fi
\newcommand{\firstblock}[1]{\textcolor{blockfirst}{#1}}
\newcommand{\lastblock}[1]{\textcolor{blocklast}{#1}}

%% file: 002_notation.tex
\newcommand{\Gmark}{\mathrm{G}}            % Gaussian side
\newcommand{\allheads}[1]{#1_{\oplus}}     % direct sum over the heads
\newcommand{\cntr}[1]{\mathring{#1}}       % conditional centering of rows

\newcommand{\Ahead}[1]{A_{#1}^{\perp}}     % centered attention of head #1
\newcommand{\Aperp}{A^{\perp}}             % ordinary concatenation
\newcommand{\scaledA}{\sqrt{\ntoken}\Aperp}
\newcommand{\MHA}{\operatorname{MHA}}
\newcommand{\MHAperp}{\operatorname{MHA}^{\perp}}
\newcommand{\MHAG}{\operatorname{MHA}^{\Gmark}}

\newcommand{\featfn}{f}                    % entry function of the features
\newcommand{\Yfh}[1]{Y_{#1}^{\featfn}}     % features of head #1
\newcommand{\Yf}{Y^{\featfn}}              % all heads
\newcommand{\Yfc}{\cntr{Y}^{\featfn}}      % rows centered given the keys
\newcommand{\noiseh}[1]{W_{#1}}            % comparison noise
\newcommand{\auxnoise}{\widetilde W}       % auxiliary noise
\newcommand{\YGh}[1]{Y^{\Gmark}_{#1}}        % Gaussian model of head #1
\newcommand{\YG}{Y^{\Gmark}}               % all heads
\newcommand{\YGcov}[1]{Y^{\Gmark}[#1]}     % auxiliary noise, row covariance #1/n
\newcommand{\YGSig}{\YGcov{\covf}}
\newcommand{\YGC}{\YGcov{\covG}}

\newcommand{\rowgram}{\widehat\Gamma}    % Gram matrix n^{-1}\sum_i y_iy_i^T of the rows

\newcommand{\covK}{\Sigma_K}
\newcommand{\covf}{\Sigma_\featfn}
\newcommand{\covG}{\Sigma_{\Gmark}}
\newcommand{\covKh}[1]{\Sigma_{K,#1}}
\newcommand{\covfh}[1]{\Sigma_{\featfn,#1}}
\newcommand{\covGh}[1]{\Sigma_{\Gmark,#1}}
\newcommand{\covKent}[1]{(\covK)_{#1}}
\newcommand{\covV}{\Sigma_{\mathrm V}}
\newcommand{\covVh}[1]{\Sigma_{\mathrm V,#1}}
\newcommand{\covVO}{\Sigma_{\mathrm{VO}}}

\newcommand{\Pperpall}{\allheads{P}^\perp}
\newcommand{\Kall}{\allheads{K}}
\newcommand{\Vall}{\allheads{V}}
\newcommand{\Aperpall}{\allheads{A}^{\perp}}

\newcommand{\ZV}{Z_V}
\newcommand{\ZVG}{Z_V^{\Gmark}}
\newcommand{\ZKV}{Z_{\mathrm{KV}}}
\newcommand{\ZKVG}{Z_{\mathrm{KV}}^{\Gmark}}

\newcommand{\keyratio}{\gamma_{\mathrm K}}
\newcommand{\valueratio}{\gamma_{\mathrm V}}
\newcommand{\outputratio}{\gamma_{\mathrm O}}
\newcommand{\nparam}{N_{\mathrm{par}}} % number of distinct Q/K/V/O weights
\newcommand{\freepois}{\pi}
\newcommand{\dilate}[2]{\mathcal D_{#1}\left(#2\right)}

\newcommand{\Stransform}{\mathscr{S}}
\newcommand{\MP}{\mathrm{MP}}
\newcommand{\attnlimit}{\nu_{\infty,h}}
\newcommand{\kvmap}{\varphi_{\mathrm{KV}}}
\newcommand{\vlaw}{\mu_{\mathrm V}}
\newcommand{\volaw}{\mu_{\mathrm{VO}}}
\newcommand{\kvlaw}{\mu_{\mathrm{KV}}}
\newcommand{\pushforward}[2]{{#1}_\# #2}

%% file: 003_metadata.tex
\newcommand{\papertitle}{Gaussian Equivalence for Multi-Head Self-Attention}
\newcommand{\paperauthors}{Tomohiro Hayase, Ryo Karakida}
\newcommand{\paperaffiliations}{Artificial Intelligence Research Center (AIRC), AIST\\RIKEN AIP}

%% file: 009_abst_intro.tex
\begin{abstract}
A theoretical understanding of multi-head self-attention is fundamental
to the study of modern neural networks.
Using random matrix theory, we establish Gaussian equivalence for
multi-head self-attention:
replacing softmax attention with rescaled scores plus Gaussian noise
preserves the limiting spectral law of the centered output.
This equivalence also covers value and output projections that depend
on the keys.
The resulting laws separate the effects of head allocation
and projection widths, and distinguish spectrum-preserving across-head
sharing from within-head key--value dependence.
\end{abstract}

\section{Introduction}
\label{sec:introduction}
Multi-head self-attention \citep{vaswani2017attention} is a core component
of Transformer architectures. Developing a mathematical account of its
behavior is a fundamental objective in the theory of deep learning.
For feedforward networks, random matrix theory has provided precise
descriptions of nonlinear models
\citep{pennington2017nonlinear,louart2018random}.
Gaussian equivalence \citep{peche2019note,benigni2021eigenvalue}
offers a useful principle:
a complex nonlinear model can share its limiting spectrum with a
simpler Gaussian model. We develop this approach for multi-head self-attention.

For a single softmax attention matrix, \citet{hayase2026gaussian}
establish Gaussian equivalence in a proportional regime with fixed
inverse temperature. Their analysis retains the bilinear query--key
scores and rowwise softmax normalization, yielding a limiting law for
the smaller squared singular values. This bulk law complements the
rank-collapse analysis of
\citet{naitsaada2025mind}: as the context length grows, one singular
value remains macroscopic, while the others lie on a smaller scale.
Together, these results describe both the dominant common component
and the distribution of feature strengths beneath it.

Multi-head attention (MHA) \citep{vaswani2017attention} applies each attention
matrix to its value features, concatenates the head outputs, and
applies an output projection.
These operations determine the representation passed to subsequent
layers. The spectral moments of a concatenated Gram matrix contain
products of different heads, and a value projection may depend on the
keys that also determine attention. Marginal single-head laws do not
specify these interactions. A layer-level comparison is therefore
needed to understand how head allocation, projection widths, and
weight sharing affect the output representation.

We establish this comparison for orthogonal inputs, Gaussian queries
and keys, proportional widths, and fixed head count $h$ and inverse
temperature $\beta$. The joint family of keys and value/output weights is
independent of the queries; the projections satisfy operator-norm
bounds and may depend on the keys. The proof replaces entire
concatenated rows with Gaussian rows of matching covariance and
controls the exponential directly. The comparison is preserved under
key-dependent right multiplication satisfying \cref{ass:right-multipliers}
(\cref{lem:replacement}). Under a deterministic limiting spectrum for
the conditional covariance of the Gaussian output rows, the centered
MHA output and its Gaussian equivalent converge to the same spectral law, almost surely
both weakly and in every fixed moment (\cref{thm:main}).

Frozen-feature readouts provide a controlled setting for isolating
the effects of architectural choices on representations and readout learning.
The resulting spectral laws provide quantitative predictions for
linear-readout learning from frozen multi-head self-attention features
(\cref{sec:consequences}). At a fixed projection parameter count,
changing the head count can produce multiple estimation-error peaks.
Across-head K/V sharing preserves the limiting mean training-loss
curve at fixed head count and widths, allowing parameter savings to
be reinvested in wider heads. Complete within-head K/V tying changes
the spectrum and can reduce estimation error at the same widths
and parameter count as GQA.
At a fixed projection parameter count, inverse temperature can also
reverse which Q/K--V width allocation yields lower estimation error.
Together, these examples illustrate how Gaussian equivalence makes
the learning consequences of head counts, projection widths, sharing,
and temperature analytically accessible.

\subsection{Related Work}
The closest spectral analyses are those of
\citet{naitsaada2025mind} and \citet{hayase2026gaussian}.
The former study a spectral gap for query--key attention and, in their
Appendix A.4, compute an output bulk for a random Markov surrogate
with independent pre-normalization entries. The latter preserve the
bilinear score structure and prove a single-head Gaussian equivalence
whose bulk generally differs from that surrogate's law.
\citet{liao2026interpolation} analyze nonlinear attention interpolation
in a proportional regime with random inputs and fixed structured weights,
and discuss an extension to finitely many heads.
We instead fix the input and compare softmax output spectra across
concatenated heads and key-dependent random projections.

At fixed sequence length, \citet{hron2020infinite} derive neural
network Gaussian process (NNGP) and neural tangent kernel (NTK)
limits for attention with infinitely many heads, and
\citet{sakai2025infinite} characterize non-Gaussian output limits
with finitely many heads. \citet{bordelon2024infinite} study
transformer training dynamics in limits of head width, head count,
and depth, also keeping sequence length fixed.
We establish spectral Gaussian equivalence as token count and
projection widths grow proportionally at fixed head count,
retaining finite width-to-token ratios in the limiting output spectrum.

Spectral descriptions also connect feature representations to learning.
\citet{dobriban2018high} analyze ridge prediction risk with general
feature covariance, and \citet{mei2022generalization} obtain precise
risk asymptotics for random-feature regression.
\citet{hu2023universality} establish Gaussian equivalence for training
and generalization errors in random-feature learning.
The relation between regularization and early stopping
\citep{ali2019continuous} also motivates studying learning curves.
Multiple-descent analyses identify distinct sources of risk peaks
\citep{dascoli2020triple,meng2024multiple}.
For frozen MHA features, \cref{sec:consequences} specifies gradient-flow
and ridge models whose training loss and coefficient estimation error
can be predicted from the GE spectrum.

Head number and width have been studied through expressivity and
training. \citet{bhojanapalli2020low} identify restrictions on attention
maps imposed by head width, while \citet{michel2019sixteen} investigate
pruning heads from trained models. Unequal Q/K and V widths are
used in LeViT and Thin Keys, Full Values
\citep{graham2021levit,yao2026thinkeys}.
Our experiments vary head count at fixed per-head widths and compare
width allocations at fixed projection parameter count.
The spectral laws connect these choices to readout behavior for
randomly initialized attention features.

Multi-query attention \citep{shazeer2019fast} and grouped-query attention (GQA)
\citep{ainslie2023gqa} share keys and values across heads to reduce the
cost of decoding. Their sharing pattern motivates our comparison
between independent and shared head families.
Within-head K/V tying is studied by \citet{kayyam2026projections}.
Our laws distinguish the two sharing mechanisms: across-head sharing
preserves the explicit Gaussian bulk laws at fixed head count and
widths, while within-head K/V dependence changes the covariance
governing the output.

Rank collapse and signal propagation provide a complementary
perspective. \citet{dong2021attention} prove rank collapse with depth
in pure attention stacks, and \citet{noci2022signal} connect collapse
to vanishing query and key gradients. Propagation theories
\citep{dinan2023effective,kedia2024transformers,cowsik2025geometric} track activation or
gradient statistics across layers. Our proportional-limit analysis
separates the common mode from the smaller feature directions in head
concatenation and follows the centered spectrum through projections.

%% file: 010_body.tex
\section{Model and Assumptions}
\label{sec:setting}
\subsection{Multi-Head Self-Attention}
We consider multi-head self-attention with $h$ heads
(see Appendix~\ref{app:notation} for notation).
Let $X\in\R^{\ntoken\times d}$ contain the token representations as
rows; $\ntoken$ is the context length.
For each head $r\in\{1,\ldots,h\}$, let
$W_r^Q,W_r^K\in\R^{d\times\dimqk}$ and
$W_r^V\in\R^{d\times \dimvalue}$ be the query, key, and value weights,
and let $W^O\in\R^{h\dimvalue\times \dimout}$ be the output weight.
The projected matrices are $Q_r=XW_r^Q$, $K_r=XW_r^K$, and
$V_r=XW_r^V$. For inverse temperature $\beta>0$, define
\[
 S_r=\frac{Q_rK_r^\top}{\sqrt{\dimqk}},
 A_r(X)=\softmax(\beta S_r).
\]
The output concatenates the heads and applies $W^O$:
\[
 \MHA(X)=[A_1(X)V_1,\ldots,A_h(X)V_h]W^O.
\]
Softmax acts on each row, brackets denote horizontal concatenation, and
$\MHA(X)\in\R^{\ntoken\times \dimout}$.

\subsection{Assumptions}
\label{sec:gaussian-input}
Throughout, $X=X_\ntoken\in\R^{\ntoken\times d}$, $\ntoken\le d$, is
deterministic with $XX^\top/d=I_\ntoken$
\citep{naitsaada2025mind,hayase2026gaussian}.
The query weights $(W_r^Q)_r$ have jointly i.i.d.\ $\Normal(0,1/d)$ entries.
Each $W_r^K$ has i.i.d.\ $\Normal(0,1/d)$ entries, with arbitrary
dependence or sharing across heads.
Thus each $Q_r$ and $K_r$ has i.i.d.\ $\Normal(0,1)$ entries.

The value and output weights are random matrices. The joint family of
key, value, and output weights is independent of the query weights.
The laws of the value and output weights are otherwise arbitrary: they
may be non-Gaussian, dependent on the keys, and dependent or shared
across heads. Write $W^O=G_O/\sqrt{h\dimvalue}$. We assume
that the operator norms satisfy, almost surely,
\begin{equation}\label{eq:weight-norms}
 \max\{\|V_1\|,\ldots,\|V_h\|,\|G_O\|\}=O(\sqrt\ntoken).
\end{equation}

We work in the proportional regime. The head count $h$ and the inverse
temperature $\beta$ are fixed, and with $d=d(\ntoken)$ and
$\dimqk=\dimqk(\ntoken)$,
\begin{equation}\label{eq:aspect}
 \ntoken,d,\dimqk\longrightarrow\infty,
 \frac{h\dimqk}{\ntoken}\longrightarrow\keyratio\in(0,\infty).
\end{equation}
The value and output widths satisfy
\begin{equation}\label{eq:widths}
 \frac{h\dimvalue}{\ntoken}\longrightarrow\valueratio\in(0,\infty),
 \frac{\dimout}{\ntoken}\longrightarrow\outputratio\in(0,\infty).
\end{equation}
All limits are along \eqref{eq:aspect} and \eqref{eq:widths}.
All sizes are realized on a common probability space.

\section{Gaussian Equivalence}
\label{sec:equivalence}
\label{sec:main-theorem}
Write $A_r=A_r(X)$. Let $\mathbf1_\ntoken\in\R^\ntoken$ be the
all-ones vector and set $P=\mathbf1_\ntoken\mathbf1_\ntoken^\top/\ntoken$
and $\Ahead{r}=A_r-P$. The centered output is
\begin{equation}\label{eq:centered-mha}
 \MHAperp(X)=[\Ahead{1}V_1,\ldots,\Ahead{h}V_h]W^O.
\end{equation}
The removed component has rank at most one, since every block
$PV_r$ has identical rows. We study the squared singular values of
$\MHAperp(X)$.

\ifdefined\PaperArxivLayout
 \input{012_attention_structure}
\fi
The Gaussian model replaces each $\sqrt\ntoken\Ahead{r}$ by a linear
function of the scores plus independent noise.
Set $\theta_1=e^{\beta^2}-1$ and $\theta_2=\beta^2$.
Let $\noiseh{1},\ldots,\noiseh{h}\in\R^{\ntoken\times\ntoken}$ have jointly independent
$\Normal(0,1)$ entries, independent of all weights at the same size.
The Gaussian model for head $r$ is
\begin{equation}\label{eq:linear-model}
 \YGh{r}=\frac{\sqrt{\theta_2}S_r+
                    \sqrt{\theta_1-\theta_2}\noiseh{r}}{\sqrt{\ntoken}}.
\end{equation}
The resulting Gaussian layer is
\begin{equation}\label{eq:gaussian-layer}
 \MHAG(X)
   =[\YGh{1}V_1,\ldots,\YGh{h}V_h]W^O.
\end{equation}
\ifdefined\PaperArxivLayout\else
 \input{012_attention_structure}
\fi
\Cref{fig:self-attention-structure} illustrates this replacement
through the value and output projections.

The key Gram matrix is the conditional covariance of a score row.
For each head, put
\begin{equation}\label{eq:key-gram}
 \covKh{r}=\frac{K_rK_r^\top}{\dimqk},
 \covGh{r}=\theta_2\covKh{r}+(\theta_1-\theta_2)I_\ntoken,
\end{equation}
and define $\covK=\bigoplus_r\covKh{r}$ and
$\covG=\bigoplus_r\covGh{r}
=\theta_2\covK+(\theta_1-\theta_2)I_{h\ntoken}$.
With $\Vall=V_1\oplus\cdots\oplus V_h$, set
\begin{equation}\label{eq:population}
 \covVO=\frac1\ntoken(W^O)^\top\Vall^\top \covG\Vall W^O .
\end{equation}
Given the key, value, and output weights, the rows of $\MHAG(X)$
are i.i.d.\ $\Normal(0,\covVO)$.

For a positive semidefinite $N\times N$ matrix $C$, let $\mu_C$
assign weight $1/N$ to each eigenvalue. Write $\nu_M=\mu_{MM^\top}$
for the distribution of squared singular values, including zeros,
and $s_1(M)$ for the largest singular value.
Let $\freepois(\mu,t)$ be the compound free Poisson law with
jump law $\mu$, rate $t$, and free cumulants
$\kappa_q=t m_q(\mu)$, where $m_q(\mu)=\int x^q\,d\mu(x)$.

\begin{theorem}\label{thm:main}
Suppose that almost surely $\mu_{\covVO}$ converges weakly to a
deterministic law $\volaw$. Then, almost surely, weakly and in every
fixed moment,
\begin{equation}\label{eq:main-centered}
 \lim_{\ntoken\to\infty}\nu_{\MHAperp(X)}
 =\lim_{\ntoken\to\infty}\nu_{\ntoken^{-1/2}\MHAG(X)}
 =\freepois(\volaw,\outputratio).
\end{equation}
The uncentered output $\MHA(X)$ has the same weak
limit, and almost surely,
\begin{equation}\label{eq:main-leading}
 \frac{s_1(\MHA(X))}{\sqrt \dimout}
 =\frac{\|\mathbf1_\ntoken^\top[V_1,\ldots,V_h]W^O\|_2}
        {\sqrt{\ntoken \dimout}}+o(1).
\end{equation}
\end{theorem}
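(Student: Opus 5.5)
We will prove the theorem in three steps: the Gaussian side, the comparison between softmax and Gaussian outputs, and the rank-one leading component.

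\emph{Gaussian side.} Condition on the key, value and output weights. The query weights are independent of them, and the noise $W_r$ is independent of all weights. Hence the rows of $\MHAG(X)$ are i.i.d.\ $\Normal(0,\covVO)$, as noted before the theorem. So $\ntoken^{-1/2}\MHAG(X)=\ntoken^{-1/2}Z\,\covVO^{1/2}$ with $Z\in\R^{\ntoken\times\dimout}$ a standard Gaussian matrix independent of $\covVO$. We then use the dual Gram matrix $\ntoken^{-1}\covVO^{1/2}Z^\top Z\covVO^{1/2}$, whose size is $\dimout\times\dimout$. Its law is a sample-covariance (Marchenko--Pastur type) law with population $\covVO$ and aspect ratio $\dimout/\ntoken\to\outputratio$.

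By the standard free multiplicative convolution, or Silverstein's theorem, applied conditionally, the nonzero part converges almost surely. We translate back to the $\ntoken\times\ntoken$ matrix $MM^\top$, adjusting the zero masses. Its free cumulants are then $\kappa_q=\outputratio m_q(\volaw)$, i.e.\ the limit is $\freepois(\volaw,\outputratio)$. Moment convergence needs uniform moment bounds. These follow from \eqref{eq:weight-norms} and $\|\covKh{r}\|=O(1)$ almost surely, since together they give $\|\covVO\|=O(1)$. Moment convergence for Gaussian Wishart-type matrices is then standard.

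\emph{Comparison.} We show $\nu_{\MHAperp(X)}$ and $\nu_{\ntoken^{-1/2}\MHAG(X)}$ have the same almost sure limit in every fixed moment. This is where we invoke \cref{lem:replacement}. It is the row-replacement comparison for the concatenated centered attention $\scaledA=[\sqrt\ntoken\Ahead{1},\ldots,\sqrt\ntoken\Ahead{h}]$ versus $\YG$. It is stable under key-dependent right multipliers satisfying \cref{ass:right-multipliers}. We apply it with right multiplier $\Vall W^O$ and the scaling $\sqrt\ntoken$, chosen so that $\MHAperp(X)=\ntoken^{-1/2}\scaledA\,\Vall W^O$. We must check \cref{ass:right-multipliers}: $\Vall W^O$ is independent of the queries and satisfies $\|\Vall W^O\|=O(\sqrt{\ntoken/\ntoken})\cdot O(\sqrt\ntoken)$ in the appropriate normalization. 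This follows from \eqref{eq:weight-norms} and $W^O=G_O/\sqrt{h\dimvalue}$ with $h\dimvalue\asymp\ntoken$.

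The lemma then gives $\frac1\ntoken\tr(MM^\top)^q$ differences tending to zero almost surely for each $q$. Since the Gaussian side has compactly supported limit, its moments determine the law, so weak convergence follows. The main obstacle lies inside this step: controlling the softmax's exponential and the random normalizer, and the cross-head products within each moment. That is what \cref{lem:replacement} packages. Our only remaining work is verifying its hypotheses and converting its conclusion (in expectation plus concentration, if stated that way) to almost sure statements via Borel--Cantelli with variance bounds of order $\ntoken^{-2}$.

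\emph{Uncentered output and leading singular value.} Write $\MHA(X)=\MHAperp(X)+\mathbf1_\ntoken u^\top/\ntoken$ with $u^\top=\mathbf1_\ntoken^\top[V_1,\ldots,V_h]W^O$. This perturbation has rank one, so by the rank inequality the empirical laws differ by $O(1/\ntoken)$ in Kolmogorov distance, giving the same weak limit.

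For \eqref{eq:main-leading}, note $\|\mathbf1_\ntoken u^\top/\ntoken\|=\|u\|_2/\sqrt\ntoken$. Weyl's inequality gives $|s_1(\MHA(X))-\|u\|/\sqrt\ntoken|\le\|\MHAperp(X)\|$, so it suffices to show $\|\MHAperp(X)\|=o(\sqrt{\dimout})$. Since $\dimout\asymp\ntoken$, this reduces to $\|\Ahead{r}\|=o(1)$, together with $\|V_r\|=O(\sqrt\ntoken)$ and $\|W^O\|=O(1)$.

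The operator-norm bound $\|\Ahead{r}\|=O(\ntoken^{-1/2})$ almost surely should follow from the attention structure in \cref{sec:main-theorem}. It can also be obtained directly. Each row of $A_r$ is $e^{\beta s}/\sum e^{\beta s}$, and a law of large numbers gives normalizers $\approx\ntoken e^{\beta^2/2}$. Then $\sqrt\ntoken\Ahead{r}$ is close to a matrix with entries $\ntoken^{-1/2}(e^{\beta S_{ij}-\beta^2/2}-1)$, whose norm is $O(1)$ by moment or $\varepsilon$-net bounds. This yields $s_1$ error $O(1)=o(\sqrt{\dimout})$. We would cite the existing single-head norm bound rather than redo it.
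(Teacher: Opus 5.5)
Your proposal is correct and follows the paper's proof essentially step for step: the conditional sample-covariance identification of the Gaussian side (the paper's \cref{lem:gaussian-law} with $D_m=\covVO$), the transfer to $\MHAperp(X)$ by \cref{lem:replacement} with $T_\ntoken=\ntoken^{-1/2}\Vall W^O$, compact support to upgrade moments to weak convergence, the rank-one argument for $\MHA(X)$, and Weyl's inequality for \eqref{eq:main-leading}. One small caution: the claimed $O(1)$ norm bound for the exponential feature matrix ``by moment or $\varepsilon$-net bounds'' is not routine, because the entries are lognormal and bilinearly dependent; you only need $\|\Aperp\|=o(1)$, however, and \cref{lem:attention-norm} supplies $\|\scaledA\|=\ntoken^{o(1)}$, which suffices.
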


The bulk limit is proved in \cref{sec:main-proof};
\cref{app:leading-output} proves \eqref{eq:main-leading}.

The theorem reduces the bulk limit to a sample covariance model:
the softmax enters the limit through the two coefficients $\theta_1$
and $\theta_2$, and the weights enter through the spectrum of
$\covVO$. The leading singular value is determined by the
common-row component after the value and output projections.

\begin{table*}[t]
\centering
\small
\renewcommand{\arraystretch}{1.15}
\caption{Equivalent random matrix models, defined in
\cref{sec:transformations}. Each step preserves all fixed spectral
moments after right multiplication under \cref{ass:right-multipliers}. The third
column states the control used.}
\label{tab:transformations}
\begin{tabular}{@{}>{\raggedright\arraybackslash}p{.10\textwidth}
                  >{\raggedright\arraybackslash}p{.42\textwidth}
                  >{\raggedright\arraybackslash}p{.42\textwidth}@{}}
\toprule
Model & Transformation & Control used\\
\midrule
$\scaledA$ & Centered multi-head self-attention & Initial model\\
$\Yf$ & Replace the row denominators by a constant & Operator-norm error $\ntoken^{-1/2+o(1)}$; rank at most $h$\\
$\Yfc$ & Subtract the conditional row mean & Rank at most one; norms $\ntoken^{o(1)}$\\
$\YGSig$ & Gaussian rows with matching covariance & Quadratic-form concentration; self-averaging\\
$\YGC$ & Approximate $\covf$ by $\covG$ & Normalized Frobenius error $O(\sqrt{\log\ntoken/\ntoken})$\\
$\YG$ & Use the original queries and independent noise & Same conditional law; self-averaging\\
\bottomrule
\end{tabular}
\end{table*}

%% file: 012_attention_structure.tex
\ifdefined\PaperArxivLayout
 \begin{wrapfigure}{R}{0.50\textwidth}
 \small
\else
 \begin{figure}[t]
\fi
 \centering
 \ifdefined\PaperMonochrome
  \includegraphics[width=\ifdefined\PaperArxivLayout\linewidth\else7.98cm\fi]{figs/attention_structure_mono.pdf}
 \else
  \includegraphics[width=\ifdefined\PaperArxivLayout\linewidth\else7.98cm\fi]{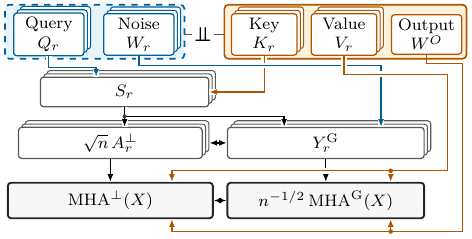}
 \fi
 \caption{Gaussian equivalence through concatenation and projections.
 The \ifdefined\PaperMonochrome dashed and solid\else blue dashed and orange solid\fi\
 families are independent of each other.}
 \label{fig:self-attention-structure}
\ifdefined\PaperArxivLayout
 \end{wrapfigure}
\else
 \end{figure}
\fi

%% file: 015_equivalent_models.tex
\section{Proof of Equivalence}
\label{sec:transformations}
\label{app:deterministic-comparisons}
We prove the bulk limit in \cref{thm:main} by comparing the centered
attention concatenation with its Gaussian counterpart.
Write $A=[A_1,\ldots,A_h]$ and define its centered concatenation
$\Aperp=[\Ahead{1},\ldots,\Ahead{h}]$, so that
\begin{equation}\label{eq:mha-factorization}
 \MHAperp(X)=\Aperp\Vall W^O.
\end{equation}
Write $\YG=[\YGh{1},\ldots,\YGh{h}]$ for the Gaussian concatenation.
This factorization separates the attention comparison from the value
and output projections.

\Cref{tab:transformations} summarizes the four steps from $\scaledA$ to $\YG$:
normalizer removal, centering, Gaussian replacement, and covariance identification.
Each step gives vanishing differences of all fixed spectral moments almost surely after
admissible right multiplication, carrying the comparison through the value
and output projections.

We use $m_q(\nu_M)=\tr_\ntoken[(MM^\top)^q]$ with normalized trace
$\tr_N=N^{-1}\Tr$.
By \eqref{eq:mha-factorization} and \eqref{eq:gaussian-layer},
the original and scaled Gaussian outputs are
$\scaledA T_\ntoken$ and $\YG T_\ntoken$ with
$T_\ntoken=\ntoken^{-1/2}\Vall W^O$, so each step is proved after right
multiplication by a matrix of the following kind.
\begin{assumption}[Conditions on right multipliers]\label{ass:right-multipliers}
The sequence $T_\ntoken\in\R^{h\ntoken\times {\dimout}_\ntoken}$, with
${\dimout}_\ntoken\ge1$ deterministic, satisfies $\|T_\ntoken\|=O(1)$ a.s.
At each size, the joint family $((K_{r,\ntoken})_{r=1}^h,T_\ntoken)$
is independent of the queries $(Q_{r,\ntoken})_{r=1}^h$.
\end{assumption}
Every comparison below uses Gaussian noise independent of the joint
family of queries, keys, and $T_\ntoken$ at each size.
\Cref{sec:main-proof} verifies that $\ntoken^{-1/2}\Vall W^O$
satisfies \cref{ass:right-multipliers}. Replacing $T_\ntoken$ by
$(T_\ntoken T_\ntoken^\top)^{1/2}$ preserves the left Gram matrices
and the multiplier norm. Until \cref{sec:main-proof}, we write
$T\in\R^{hn\times hn}$ for this positive semidefinite multiplier.

\subsection{Removing the Row Normalizers}
\label{sec:headwise-adaptations}
Put $P^\perp=I_\ntoken-P$. Row stochasticity gives $A_rP=P$ and
$\Ahead{r}=A_rP^\perp$. Define the row denominators
$Z_{r,i}=\sum_j e^{\beta S_{r,ij}}$ and the entrywise function
$\featfn(x)=e^{\beta x-\beta^2/2}-1$. For $\chi\sim\Normal(0,1)$,
$\theta_1=\E[\featfn(\chi)^2]$ and $\theta_2=(\E[\featfn'(\chi)])^2$.
The normalized denominators $Z_{r,i}/\ntoken$ concentrate around
$e^{\beta^2/2}$, yielding exponential feature matrices. Set
\begin{align}
 \Yfh{r}&=\frac{\featfn(S_r)}{\sqrt{\ntoken}},&
 D_r&=\frac{\diag(Z_{r,1},\ldots,Z_{r,\ntoken})}
                  {\ntoken e^{\beta^2/2}},\label{eq:Dh}
\end{align}
and put $\Yf=[\Yfh{1},\ldots,\Yfh{h}]$. Since $P^\perp\mathbf1_\ntoken=0$ and
$D_r\sqrt{\ntoken}A_r=e^{\beta S_r-\beta^2/2}/\sqrt{\ntoken}$, the row
normalizers are the only remaining difference:
\begin{equation}\label{eq:exact-normalizer}
 \Yfh{r}P^\perp=D_r\sqrt{\ntoken}\Ahead{r}.
\end{equation}
Write $\Pperpall=P^\perp\oplus\cdots\oplus P^\perp$.
\begin{lemma}[Row normalizers]\label{lem:normalizer}
For any $q\in\N$, $m_q(\nu_{\scaledA T})-m_q(\nu_{\Yf T})\to0$ a.s.
The operator norms satisfy
$\|\scaledA-\Yf \Pperpall\|\le\ntoken^{-1/2+o(1)}$ a.s.
\end{lemma}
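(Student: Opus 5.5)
The plan is to use the exact identity \eqref{eq:exact-normalizer}, which gives $\sqrt\ntoken\Ahead{r}=D_r^{-1}\Yfh{r}P^\perp$ for each head. The operator-norm statement then reduces to two facts: concentration of the normalizers $D_r$ around the identity, and a norm bound on $\Yfh{r}$. Writing $D=\bigoplus_r D_r$, acting blockwise on the concatenation, we have
\[
 \scaledA-\Yf\Pperpall=\bigl[(D_1^{-1}-I)\Yfh{1}P^\perp,\ldots,(D_h^{-1}-I)\Yfh{h}P^\perp\bigr],
\]
so $\|\scaledA-\Yf\Pperpall\|\le\sum_r\|D_r^{-1}-I\|\,\|\Yfh{r}\|$.

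\emph{Step 1: normalizers.} Condition on $K_r$. Then row $i$ of $S_r$ is $\Normal(0,\covKh{r})$, and
\[
 Z_{r,i}/\ntoken=\ntoken^{-1}\sum_j e^{\beta S_{r,ij}}.
\]
Its conditional mean is $\ntoken^{-1}\sum_j e^{\beta^2(\covKh{r})_{jj}/2}$. Since $(\covKh{r})_{jj}=\|k_j\|^2/\dimqk=1+O(\sqrt{\log\ntoken/\dimqk})$ uniformly in $j$ a.s., this mean is $e^{\beta^2/2}(1+\ntoken^{-1/2+o(1)})$.

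For the fluctuation I would compute the conditional variance, which is
\[
 \ntoken^{-2}\sum_{j,l}\bigl(\E e^{\beta(S_{ij}+S_{il})}-\E e^{\beta S_{ij}}\E e^{\beta S_{il}}\bigr)=\ntoken^{-2}\sum_{j,l}c_jc_l\bigl(e^{\beta^2(\covKh{r})_{jl}}-1\bigr)
\]
with $c_j$ bounded. Because the off-diagonal entries satisfy $(\covKh{r})_{jl}=O(\sqrt{\log\ntoken/\dimqk})$ and $\|\covKh{r}\|=O(1)$, we can expand $e^{x}-1$ to second order. This gives variance $O(\ntoken^{-1})$, using $\mathbf1^\top\covKh{r}\mathbf1=O(\ntoken)$ and the Frobenius bound.

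A variance bound alone does not give a uniform maximum over $i$, so higher concentration is needed. I would obtain it via Gaussian concentration of the Lipschitz-type function $q\mapsto\ntoken^{-1}\sum_j e^{\beta\langle q,k_j\rangle/\sqrt{\dimqk}}$, after truncating to the event $\max_{ij}|S_{ij}|\le C\sqrt{\log\ntoken}$. On that event the gradient norm is $\ntoken^{-1/2+o(1)}$. A union bound over $i,r$ followed by Borel--Cantelli then gives
\[
 \max_{r,i}|Z_{r,i}/(\ntoken e^{\beta^2/2})-1|\le\ntoken^{-1/2+o(1)}\quad\text{a.s.,}
\]
hence $\|D_r^{-1}-I\|\le\ntoken^{-1/2+o(1)}$.

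\emph{Step 2: norm of the features.} I need $\|\Yfh{r}\|\le\ntoken^{o(1)}$, at least after the projection $P^\perp$; indeed only $\Yfh{r}P^\perp$ enters the difference above. I would split $\featfn(S_r)$ into its conditional row mean $\E[\featfn(S_{r,ij})\mid K]$ and a centered remainder. The conditional mean is $\featfn$ evaluated through the diagonal of $\covKh{r}$, which is small entrywise, $\ntoken^{-1/2+o(1)}$, with rank-one-plus-small structure, so its contribution is controlled. For the centered remainder, the rows are independent given $K_r$, and I would bound $\E\Tr(YY^\top)^p$ using a moment method with truncation. Alternatively, a cruder bound suffices: expanding the exponential in Hermite polynomials of the scores gives $\featfn(S)=\beta S+\text{higher chaos}$, and each chaos term has a norm of order $\sqrt\ntoken$ up to logarithmic factors. \textbf{I expect this step to be the main obstacle}, because the entries of $\featfn(S_r)$ are heavy-tailed lognormals correlated through shared keys. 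The truncation at $|S_{ij}|\le C\sqrt{\log\ntoken}$ costs the factor $\ntoken^{o(1)}$, which is permitted.

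\emph{Step 3: moments.} Combining Steps 1 and 2, $\|\scaledA-\Yf\Pperpall\|\le\ntoken^{-1/2+o(1)}$. Since $\Yf\Pperpall=\Yf-\Yf(I-\Pperpall)$ and $I-\Pperpall$ has rank $h$, the matrices $\scaledA T$ and $\Yf T$ differ by a perturbation of operator norm $\ntoken^{-1/2+o(1)}$ plus a perturbation of rank at most $h$. All operator norms involved are $\ntoken^{o(1)}$, using $\|T\|=O(1)$. For the $q$-th moment, the telescoping bound
\[
 |\tr_\ntoken(MM^\top)^q-\tr_\ntoken(NN^\top)^q|\le\ntoken^{-1}\cdot 2q\,\rank(M-N)\max(\|M\|,\|N\|)^{2q}
\]
handles the low-rank part. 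The analogous bound with $\|M-N\|$ in place of $\rank(M-N)/\ntoken$ handles the small-norm part. Together these give differences of order $\ntoken^{-1/2+o(1)}+h\ntoken^{-1+o(1)}\to0$ a.s., proving the lemma.
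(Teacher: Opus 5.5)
Your architecture is the paper's. The identity \eqref{eq:exact-normalizer} reduces the norm bound to $\max_r\|D_r^{-1}-I_\ntoken\|\,\|\Yfh{r}\|$, and your Step~3 is exactly its moment argument via \eqref{eq:op-moment-bound} and \eqref{eq:rank-moment-bound}.

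Step~1 takes a different but valid route. Because you condition on the keys, the scores in a row are correlated, so you need Gaussian concentration in the query with a truncated gradient. This is cleanest through the $L^s$ Maurey--Pisier bound \eqref{eq:Gaussian-Lp}. The gradient satisfies $\|\nabla_q(Z_{r,i}/\ntoken)\|\le\beta\ntoken^{-1/2}\|K_r/\sqrt{\dimqk}\|\max_je^{\beta S_{r,ij}}$, whose $L^s$ norm is $\ntoken^{-1/2+o(1)}$ by \cref{lem:gaussian-maximum}. This removes both the truncation and the variance computation. The paper instead conditions on the query row, so that the $e^{\beta S_{r,ij}-\beta^2/2}$ are i.i.d.\ over $j$. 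Rosenthal's inequality, Markov, a union bound over $h\ntoken$ rows and Borel--Cantelli then suffice, using only the Gaussian law of $K_r$ and no key-Gram regularity.

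The genuine gap is Step~2, which you leave as two sketches. It carries the real content of the lemma. The trivial Frobenius bound gives only $\|\Yfh{r}\|\le\ntoken^{1/2+o(1)}$, which together with Step~1 makes the difference $\ntoken^{o(1)}$ instead of vanishing. Both of your sketches hinge on an input you never state: the conditional row covariance of the centered features has bounded operator norm.

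Given the keys, the rows $\xi_i^\top$ of $\sqrt\ntoken\Yfc$ are independent with covariance $\covf$. By \eqref{eq:exact-covariance}, $\covfh{r}=D_{0,r}\bigl(\sum_{j\ge1}\beta^{2j}\covKh{r}^{\circ j}/j!\bigr)D_{0,r}$ with $D_{0,r}=\diag(e^{\beta^2((\covKh{r})_{ii}-1)/2})$. The Schur-multiplier bound $\|R^{\circ j}\|\le(\max_iR_{ii})^{j-1}\|R\|$ then yields $\|\covfh{r}\|=O(1)$ on $\Omega_n(M)$ (\cref{lem:covariance}).

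From there the argument closes as follows. A rowwise cutoff $\|\xi_i\|\le\sqrt\ntoken e^{C\sqrt{\log\ntoken}}$ comes from a union bound over the scores and \eqref{eq:exponential-envelope}. Matrix Chernoff \eqref{eq:matrix-chernoff} applied to $\ntoken^{-1}\sum_i\xi_i\xi_i^\top$ then gives $\|\Yfc\|^2\le Ce^{C\sqrt{\log\ntoken}}\log\ntoken$ eventually a.s.\ (\cref{lem:truncated-norm}). The conditional mean adds a rank-one matrix of norm $O(\sqrt{\log\ntoken})$, which gives \eqref{eq:feature-norms}; this is the bound the paper's proof cites.

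Your Hermite route is not actually cruder. It would need a bound on $\|\covKh{r}^{\circ k}\|$ for the covariance of every chaos, a matrix concentration estimate for each chaos, and summability in $k$.

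One small point in Step~3: you need $\|\Yf\|=\ntoken^{o(1)}$, not only $\|\Yf\Pperpall\|$. The remaining block is $\Yfh{r}P=\ntoken^{-1/2}(D_r-I_\ntoken)\mathbf1_\ntoken\mathbf1_\ntoken^\top$, whose norm is at most $\sqrt\ntoken\max_i|D_{r,ii}-1|=\ntoken^{o(1)}$ by Step~1, so this is easily filled.
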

\begin{proof}
The diagonal matrices $D_r$ satisfy
$\max_r\|D_r-I_\ntoken\|\le\ntoken^{-1/2+\varepsilon}$ eventually at
any fixed $\varepsilon>0$, by \cref{lem:normalizer-array}, the
rectangular-array form of the normalizer concentration in
\citet[Section 4.1]{hayase2026gaussian}; hence
$\max_r\|D_r^{-1}-I_\ntoken\|\le\ntoken^{-1/2+o(1)}$. With
$\|\Yf\|=\ntoken^{o(1)}$ (\eqref{eq:feature-norms}) and a countable
intersection over $\varepsilon=1/j$, the identity
\eqref{eq:exact-normalizer} gives the operator-norm bound through
$\|\sqrt{\ntoken}\Ahead{r}-\Yfh{r}P^\perp\|
\le\|D_r^{-1}-I_\ntoken\|\|\Yfh{r}\|$. The difference
$\Yf \Pperpall-\Yf$ has rank at most $h$, and all norms stay
$\ntoken^{o(1)}$ after right multiplication by $T$
(\cref{lem:attention-norm}), so \eqref{eq:op-moment-bound} and
\eqref{eq:rank-moment-bound} of \cref{app:moment-stability} give the
moment comparison.
\end{proof}

\subsection{Centering the Rows}
\label{sec:conditional-covariance}
The step of \cref{sec:joint-replacement} replaces rows by Gaussian rows
of the same conditional covariance, so we record that covariance and
subtract the conditional mean. Condition on
$\mathcal F_n=\sigma(K_{1,\ntoken},\ldots,K_{h,\ntoken},T_\ntoken)$.
The rows of $\Yf$ are then independent.
Each concatenated score row has law $\Normal(0,\covK)$, giving
\begin{equation}\label{eq:main-mean}
 \sqrt{\ntoken}\E[(\Yf)_{ij}\mid\mathcal F_n]
 =e^{\beta^2(\covKent{jj}-1)/2}-1,
\end{equation}
which is independent of $i$ and small because the key norms are nearly
equal. Write $\covf$ for the conditional covariance of a row of
$\sqrt{\ntoken}\Yf$. Its head blocks are $\covfh{r}$, with zero
cross-head blocks by query independence (\cref{fig:block-structure}).
Subtracting the conditional mean from each row gives $\Yfc$.
\begin{lemma}[Conditional centering]\label{lem:centering}
For any $q\in\N$, $m_q(\nu_{\Yf T})-m_q(\nu_{\Yfc T})\to0$ a.s.
\end{lemma}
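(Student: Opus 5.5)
The difference $\Yf-\Yfc$ is the matrix of conditional row means. By \eqref{eq:main-mean} this matrix has identical rows: it equals $\ntoken^{-1/2}\mathbf1_\ntoken m^\top$, where $m\in\R^{h\ntoken}$ has entries $m_j=e^{\beta^2(\covKent{jj}-1)/2}-1$. The difference therefore has rank at most one. The plan is to invoke the rank-perturbation moment bound \eqref{eq:rank-moment-bound} of \cref{app:moment-stability}. For that bound we need all operator norms involved, after right multiplication by $T$, to be $\ntoken^{o(1)}$ almost surely.

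\textbf{Step 1: the mean matrix.} Bound $\|\ntoken^{-1/2}\mathbf1_\ntoken m^\top T\|\le\|m\|_2\|T\|$. Each $\covKent{jj}=\|k_j\|^2/\dimqk$ is an average of $\dimqk$ i.i.d.\ $\chi^2_1$ variables. Gaussian concentration and a union bound over the $h\ntoken$ indices then give $\max_j|\covKent{jj}-1|\le C\sqrt{\log\ntoken/\dimqk}$ eventually almost surely, via Borel--Cantelli with polynomially small failure probabilities. Hence $\max_j|m_j|=O(\sqrt{\log\ntoken/\ntoken})$, and $\|m\|_2=O(\sqrt{h\log\ntoken})=\ntoken^{o(1)}$. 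In fact the mean matrix is only of size $\sqrt{\log\ntoken}$ in operator norm, which is much stronger than needed.

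\textbf{Step 2: conclusion.} We have $\|\Yf T\|\le\|\Yf\|\|T\|=\ntoken^{o(1)}$ by \eqref{eq:feature-norms} and \cref{ass:right-multipliers}. The triangle inequality then gives $\|\Yfc T\|=\ntoken^{o(1)}$ as well. Write $m_q(\nu_{M})=\tr_\ntoken[(MM^\top)^q]$ for $M=\Yf T$ and $M'=\Yfc T$, where $M-M'$ has rank at most one. Expanding the $2q$-fold product and telescoping, each term containing the difference has rank at most one. Its trace is bounded by its operator norm, which is $\ntoken^{o(1)}$. Dividing by $\ntoken$ gives $|m_q(\nu_M)-m_q(\nu_{M'})|\le 2q\,\ntoken^{-1}\ntoken^{o(1)}\to0$ almost surely. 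This is exactly \eqref{eq:rank-moment-bound}.

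\textbf{Main obstacle.} The only delicate point is the almost-sure, uniform-in-$j$ control of the key norms. This is needed so that $m$ is small simultaneously for all columns along the whole sequence. The rank bound itself requires only $\|m\|_2=\ntoken^{o(1)}$, so even crude control suffices: for example, $\max_j\covKent{jj}\le 2$ eventually almost surely, which gives $\|m\|_2=O(\sqrt{\ntoken})$. A polynomial bound of this kind is too weak, however, because the $\ntoken^{o(1)}$ norm hypothesis is what makes $\ntoken^{-1}$ times the rank-one trace vanish. So I would prove the $\sqrt{\log\ntoken/\dimqk}$ concentration explicitly, using the $\chi^2$ tail bound and Borel--Cantelli. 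The arbitrary dependence of keys across heads is harmless here, since the union bound does not need independence.
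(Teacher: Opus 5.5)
Your argument is correct and is the paper's proof. The difference $\Yf-\Yfc=\ntoken^{-1/2}\mathbf1_\ntoken m^\top$ is rank one, with entries $O(\sqrt{\log\ntoken}/\ntoken)$ by \cref{lem:gaussian-keys} and hence norm $O(\sqrt{\log\ntoken})$; combined with $\|\Yf\|=\ntoken^{o(1)}$ from \eqref{eq:feature-norms}, the rank bound \eqref{eq:rank-moment-bound}, or your equivalent telescoping bound, gives the comparison. (Contrary to your closing remark, the crude bound $\max_j\covKent{jj}\le2$ does not ``suffice'': it only gives $\|m\|_2=O(\sqrt\ntoken)$, which lets the rank-one spike dominate the moments, as your next sentence concedes, so the $\sqrt{\log\ntoken/\dimqk}$ concentration is genuinely needed.)
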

\begin{proof}
All rows of $\Yf-\Yfc$ equal the conditional mean, so the difference
has rank one, and its entries are $O(\sqrt{\log\ntoken}/\ntoken)$ by
\eqref{eq:main-mean} and \cref{lem:gaussian-keys}, so its norm is
$O(\sqrt\ntoken\sqrt{\log\ntoken/\ntoken})=\ntoken^{o(1)}$. The rank estimate
\eqref{eq:rank-moment-bound} of \cref{app:moment-stability}, with
$\max\{\|\Yf\|,\|\Yfc\|\}=\ntoken^{o(1)}$ from \eqref{eq:feature-norms},
proves the comparison.
\end{proof}
\subsection{Concentration and Gaussian Comparison}
\label{sec:joint-replacement}
We construct Gaussian rows with the conditional covariance of the
centered feature rows, then apply the same right multiplier $T$. Let
$\auxnoise\in\R^{\ntoken\times h\ntoken}$ have i.i.d.\ $\Normal(0,1)$ entries,
independent of $\mathcal F_n$ and the queries, and set
\begin{equation}\label{eq:gaussian-auxiliary}
 \YGcov{\Lambda}=\frac{\auxnoise\Lambda^{1/2}}{\sqrt{\ntoken}}
\end{equation}
for $\mathcal F_n$-measurable $\Lambda\ge0$.
The rows of $\YGSig$ and $\Yfc$ have the same conditional covariance.
\begin{lemma}[Gaussian replacement]\label{lem:direct-replacement}
For any $q\in\N$,
\[
 m_q(\nu_{\Yfc T})-m_q(\nu_{\YGSig T})\longrightarrow0
 \quad\text{a.s.}
\]
\end{lemma}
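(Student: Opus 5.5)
We compare the two models conditionally on $\mathcal F_n$. Given $\mathcal F_n$, the rows $y_1,\ldots,y_\ntoken\in\R^{h\ntoken}$ of $\sqrt\ntoken\Yfc$ are i.i.d.\ with mean zero and covariance $\covf$. The Gaussian rows $g_i=\auxnoise_{i\cdot}\covf^{1/2}$ of $\sqrt\ntoken\YGSig$ have the same first two conditional moments. Both models share the multiplier $T$, and only rows are replaced, so the natural route is the quadratic-form (resolvent/moment) method for sample covariance matrices with general rows.

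\textbf{Step 1: reduce to a deterministic-equivalent comparison.} Write $M=\Yfc T$ and $M^\Gmark=\YGSig T$. The rows of $\sqrt\ntoken M$ are $x_i=T^\top y_i$, with conditional covariance $C=T^\top\covf T$, and the Gaussian rows $T^\top g_i$ have the same covariance. The moments $m_q(\nu_{MM^\top})=\tr_\ntoken[(MM^\top)^q]$ coincide with $\frac{\dimout}{\ntoken}\tr[(M^\top M)^q]$ up to the zero-eigenvalue count. Here $M^\top M=\ntoken^{-1}\sum_i x_ix_i^\top$. It therefore suffices to show that the Stieltjes transforms of $\nu_{MM^\top}$ and $\nu_{M^\Gmark M^{\Gmark\top}}$ have vanishing difference at each fixed $z\in\C\setminus\R_+$ a.s. We also need uniform moment bounds: these come from $\|M\|,\|M^\Gmark\|\le\ntoken^{o(1)}$ (\eqref{eq:feature-norms}, $\|T\|=O(1)$, and the standard Gaussian bound). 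Combined with a truncation argument and the moment-stability estimates of \cref{app:moment-stability}, they upgrade Stieltjes convergence to convergence of every fixed moment. Rather than an $n^{o(1)}$ norm, I would aim for $\|M\|=O(1)$ off an event of summable probability, via \cref{lem:attention-norm}. Otherwise I would compare moments directly by expanding $\tr[(M^\top M)^q]$ and bounding the non-pair terms using the row independence.

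\textbf{Step 2: the quadratic-form concentration.} By the Silverstein/Bai leave-one-out argument, the Stieltjes transform of $\ntoken^{-1}\sum_ix_ix_i^\top$ is asymptotically determined by $C$ alone. This holds provided that, for each fixed $z$ and deterministic (given $\mathcal F_n$ and the other rows) matrices $B$ with $\|B\|=O(1)$,
\[
\max_i\Bigl|\ntoken^{-1}x_i^\top Bx_i-\ntoken^{-1}\Tr(BC)\Bigr|\to0 \quad\text{a.s.}
\]
For Gaussian rows this is Hanson--Wright. For $x_i=T^\top y_i$, the row $y_i$ is built from $\featfn(\langle q_i,k_j\rangle/\sqrt{\dimqk})$ with a Gaussian query row $q_i$ independent of $\mathcal F_n$. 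The form $y_i^\top\widetilde By_i$ with $\widetilde B=TBT^\top$ is therefore a smooth function of the Gaussian vector $q_i\in\R^{h\dimqk}$. I would prove its concentration by Gaussian Poincaré or log-Sobolev inequalities applied on the event where $|S_{r,ij}|\le C\sqrt{\log\ntoken}$ and the key Gram matrices have bounded norm (\cref{lem:gaussian-keys}). The gradient in $q_i$ is $\dimqk^{-1/2}K^\top\diag(\featfn'(s_i))\widetilde By_i$. Its norm is $O(\ntoken^{o(1)}\|y_i\|\cdot\|K\|/\sqrt{\dimqk})=O(\ntoken^{1/2+o(1)})$, so the fluctuation of the form divided by $\ntoken$ is $\ntoken^{-1/2+o(1)}$. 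Borel--Cantelli with a union over $i$ then gives the a.s.\ statement. The exponential tails of $\featfn$ are exactly why we must control the exponential directly: the truncation to $|s_{ij}|\le C\sqrt{\log \ntoken}$ costs only $e^{O(\sqrt{\log\ntoken})}=\ntoken^{o(1)}$.

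\textbf{Step 3: self-averaging and conclusion.} With the quadratic-form bound, both resolvent traces satisfy the same approximate Marchenko--Pastur fixed-point equation driven by $\mu_C$ with ratio $\dimout/\ntoken$. Stability of that equation at fixed $z$ shows that the two Stieltjes transforms differ by $o(1)$. Martingale (McDiarmid over rows) concentration of each resolvent trace around its conditional expectation, with bounded differences $O(1/\ntoken)$ from rank-one updates, gives the a.s.\ version. Note that no convergence of $\mu_C$ is needed, since we compare two models with identical $C$. The main obstacle is Step 2: establishing uniform quadratic-form concentration for rows with exponential nonlinearities of Gaussian scores, sharp enough ($o(1)$ after dividing by $\ntoken$) and with summable failure probabilities, given that the entries of $y_i$ are not independent across $j$ because they share $q_i$.
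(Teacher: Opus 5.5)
Your Step 2 is the paper's key estimate, and Steps 2--3 would plausibly give a vanishing difference of Stieltjes transforms. The gap is the passage from resolvents to moments in Step 1, and moments are what the lemma asserts. A comparison of resolvent traces at fixed $z$ controls only bounded continuous test functions. To include $x\mapsto x^q$ you need uniform integrability on the feature side, for example $\limsup_\ntoken m_{2q}(\nu_{\Yfc T})<\infty$ or $\|\Yfc T\|=O(1)$ a.s.

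Neither is available. \eqref{eq:feature-norms} and \cref{lem:attention-norm} give only $\ntoken^{o(1)}$; the matrix Chernoff argument behind them loses a factor $e^{C\sqrt{\log\ntoken}}\log\ntoken$. An $O(1)$ bound for rows whose coordinates have log-normal tails would need a Bai--Yin type argument that is not available here. With norms of size $\ntoken^{o(1)}$, a vanishing fraction $\varepsilon_\ntoken$ of eigenvalues (say $\varepsilon_\ntoken=1/\log\ntoken$) of size $\varepsilon_\ntoken^{-1/q}$ still carries a $q$th moment of order one. The perturbation bounds of \cref{app:moment-stability} do not help either. $\Yfc T$ and $\YGSig T$ are related only in law, not by a coupling that is small in norm, Frobenius norm, or rank. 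Bounding $m_{2q}(\nu_{\Yfc T})$ is essentially as hard as the lemma, so your fallback of expanding the trace powers is where the proof must happen, and it is left unspecified.

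The paper does this work with a Lindeberg swap on the moments themselves (\cref{lem:one-row-replacement,lem:gaussian-replacement-criterion}). When one feature row $y$ is replaced by a matched Gaussian row, \eqref{eq:rank-one-trace-expansion} writes the change of $\tr_{h\ntoken}[(R+yy^\top/\ntoken)^q]$ as $(h\ntoken)^{-1}$ times products of quadratic forms $y^\top R^ay/\ntoken$, with $R$ independent of the swapped row. The one-factor terms agree exactly by covariance matching. By H\"older, products of $k\ge2$ factors differ in expectation by $\ntoken^{-1/2+o(1)}\|R\|^{q-k}$. Since $\E\|R\|^s=\ntoken^{o(1)}$ (\cref{lem:truncated-norm}), each swap costs $\ntoken^{-3/2+o(1)}$, and all $\ntoken$ swaps cost $\ntoken^{-1/2+o(1)}$. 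The $\ntoken^{o(1)}$ norm growth is harmless here because it enters only through fixed powers against a polynomial gain.

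Almost sure convergence then comes from Gaussian Poincar\'e variance bounds of order $\ntoken^{-2+o(1)}$ (\cref{lem:selfaverage}). McDiarmid is not usable for moments, because bounded differences fail. This route needs the quadratic-form bound in conditional $L^s$ for every $s$, uniformly over deterministic symmetric $H$ with the factor $\|H\|$. Your Step 2 should be set up the same way. Poincar\'e cannot simply be applied ``on the event'' $|S_{r,ij}|\le C\sqrt{\log\ntoken}$ without a smooth truncation, and a variance bound alone is not summable after your union over $\ntoken$ rows. The paper gets the $L^s$ bound from the Maurey--Pisier inequality \eqref{eq:Gaussian-Lp} together with exponential moments of the score maximum (\cref{lem:gaussian-maximum}), giving $\|J_y\|_{L^{2s}}=\ntoken^{o(1)}$. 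This is your gradient computation made rigorous.
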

\begin{proof}
Condition on $\mathcal F_n$. Write a row of $\Yfc T$ as
$y^\top/\sqrt n$ and put $\Gamma=\E[yy^\top]$.
Here $\|U\|_{L^s}=(\E\|U\|^s)^{1/s}$ is conditional, using the
absolute value, Euclidean norm, or operator norm as appropriate.
To bound the query derivatives, fix $M\ge1$ and work on $\Omega_n(M)$,
defined by
\begin{equation}\label{eq:replacement-regularity}
\begin{gathered}
 \|T\|\le M,\qquad \max_r\|\covKh{r}\|\le M,\\
 \max_{r,i,j}|(\covKh{r}-I_n)_{ij}|\le M\sqrt{\log n/n}.
\end{gathered}
\end{equation}
Gaussian keys and the multiplier assumption give eventual membership
in some $\Omega_n(M)$ a.s. (\cref{lem:gaussian-keys}).

By \cref{lem:feature-reduction}, it suffices, for each fixed $s\ge1$,
to prove $\|{y^\top Hy-\Tr(\Gamma H)}\|_{L^s}\le n^{1/2+o(1)}\|H\|$
uniformly on $\Omega_n(M)$ and over deterministic symmetric
$H\in\R^{hn\times hn}$.
Let $u=(u_r)_{r=1}^h$ collect independent queries $u_r\sim\Normal(0,I_{\dimqk})$
and put $J_y=D_u y$.
The Maurey--Pisier inequality \eqref{eq:Gaussian-Lp} and H\"older give
\begin{equation}\label{eq:quadratic-derivative}
 \|y^\top Hy-\Tr(\Gamma H)\|_{L^s}
 \le 2C_s\|H\|\|J_y\|_{L^{2s}}\|y\|_{L^{2s}}.
\end{equation}

Put $z_r=K_r u_r/\sqrt{\dimqk}$ and $L=\max_{r,j}|z_{rj}|$.
The conditional mean does not depend on $u$, so
\begin{equation}\label{eq:jacobian}
 J_y=T\left(\bigoplus_{r=1}^h
 \left[\diag(\featfn'(z_r))\frac{K_r}{\sqrt{\dimqk}}\right]\right),
\end{equation}
where $\featfn'$ acts entrywise. Since
$\|T\|\max_r\|K_r/\sqrt{\dimqk}\|\le M^{3/2}$,
\begin{equation}\label{eq:jacobian-bound}
 \|J_y\|\le M^{3/2}\max_{r,j}|\featfn'(z_{rj})|
 \le M^{3/2}\beta e^{\beta L-\beta^2/2}.
\end{equation}
Each score has variance at most $M$, so \cref{lem:gaussian-maximum}
with $N=hn$ yields
\begin{equation}\label{eq:jacobian-moment}
 \|J_y\|_{L^{2s}}
 \le e^{c\sqrt{\log n}}=n^{o(1)},
\end{equation}
where $c>0$ depends only on $s,M,\beta,h$.
Together with $\|y\|_{L^{2s}}=O(\sqrt n)$ from uniform coordinate moments,
\eqref{eq:quadratic-derivative} gives the required concentration bound.
\end{proof}

\subsection{Identifying the Conditional Covariance}
\label{sec:identification}
The diagonal entries give the variances of the feature coordinates;
the off-diagonal entries describe correlations inherited from the keys.
\begin{lemma}[Covariance identification]\label{lem:covariance-replacement}
For every positive integer $q$,
\[
 m_q(\nu_{\YGSig T})-m_q(\nu_{\YG T})\longrightarrow0
 \quad\text{a.s.}
\]
\end{lemma}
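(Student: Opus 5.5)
The comparison splits into two steps, matching the last two rows of \cref{tab:transformations}: first replace $\YGSig$ by $\YGC$, then replace $\YGC$ by $\YG$.

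\textbf{Step 1: from $\YGSig$ to $\YGC$.} Both matrices are built from the same noise $\auxnoise$, so
\[
 \YGSig T-\YGC T=\frac{\auxnoise(\covf^{1/2}-\covG^{1/2})T}{\sqrt\ntoken}.
\]
I will bound the normalized Frobenius norm of $\covf-\covG$ by $O(\sqrt{\log\ntoken/\ntoken})$ on the event $\Omega_n(M)$ of \eqref{eq:replacement-regularity}.

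\emph{Cross-head blocks.} These vanish for both matrices: $\covf$ has zero cross-head blocks by query independence, and $\covG$ is block diagonal by definition.

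\emph{Within-head entries.} For a centered Gaussian pair of variances $a,b$ and covariance $c$, the function $f(x)=e^{\beta x-\beta^2/2}-1$ gives, after subtracting the conditional means,
\[
 \operatorname{Cov}\bigl(f(\chi_1),f(\chi_2)\bigr)
 =e^{\beta^2(a+b-2)/2}\bigl(e^{\beta^2 c}-1\bigr).
\]
\begin{itemize}
\item On the diagonal ($a=b=c=(\covKh{r})_{jj}$), this equals $\theta_1+O(\sqrt{\log\ntoken/\ntoken})$, up to the negligible squared-mean correction.
\item Off the diagonal, $c=(\covKh{r})_{jk}=O(\sqrt{\log\ntoken/\ntoken})$. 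A Taylor expansion gives $\beta^2 c+O(c^2)+O(|c|\,|a+b-2|)$. The error is $O(\log\ntoken/\ntoken)$ entrywise, and summing $\ntoken^2$ such entries gives normalized Frobenius error $O(\log\ntoken/\sqrt\ntoken)\cdot\ntoken^{-1/2}$, which is fine.
\end{itemize}
This matches $\theta_2\covKh{r}+(\theta_1-\theta_2)I$.

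\emph{Passing to square roots.} Because $\covG\ge\theta_1-\theta_2>0$ is bounded below, the operator monotonicity/Lipschitz bound for the square root gives $\|\covf^{1/2}-\covG^{1/2}\|_F\le c\|\covf-\covG\|_F$. Combined with $\|\auxnoise/\sqrt\ntoken\|=O(1)$ a.s., this yields a normalized Frobenius difference that is $o(1)$. The operator norms of both matrices are $O(1)$, so the Frobenius moment stability estimate from \cref{app:moment-stability} (the Hölder/telescoping bound $|m_q(\nu_{M_1})-m_q(\nu_{M_2})|\lesssim \|M_1-M_2\|_{F}/\sqrt\ntoken$ times powers of the norms) gives the comparison.

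\textbf{Step 2: from $\YGC$ to $\YG$.} Conditionally on $\mathcal F_n$, a row of $\YG$ is $(\sqrt{\theta_2}\,(K_r u_r/\sqrt{\dimqk})+\sqrt{\theta_1-\theta_2}\,w_r)_r/\sqrt\ntoken$, with independent Gaussian query rows $u_r$ and noise rows $w_r$.
\begin{itemize}
\item These rows are independent across $i$ and have law $\Normal(0,\covG/\ntoken)$, which is exactly the law of a row of $\YGC$.
\item $T$ is $\mathcal F_n$-measurable, so $\YG T$ and $\YGC T$ have the same conditional law given $\mathcal F_n$, and hence the same conditional law of every moment $m_q$.
\end{itemize}
Equality in law alone does not give almost sure closeness on the common probability space. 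I will therefore use self-averaging: show that $m_q(\nu_{YT})-\E[m_q(\nu_{YT})\mid\mathcal F_n]\to0$ a.s. for each of the two models. This follows from Gaussian concentration of the Lipschitz function of the Gaussian vector (queries and noise, or $\auxnoise$) given by a truncated moment, as in the self-averaging used for \cref{lem:direct-replacement}. The key fact is that the operator norms are $O(1)$ with overwhelming probability on $\Omega_n(M)$. This gives a summable variance bound of order $\ntoken^{-2}$, and Borel--Cantelli finishes. The two conditional expectations coincide, so the difference tends to $0$.

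\textbf{Main obstacle.} I expect the hardest part to be the entrywise covariance expansion in Step 1. It must control the mean correction and the dependence on key-norm fluctuations uniformly, so that the Frobenius error is summable across $\ntoken^2$ entries. The regularity event \eqref{eq:replacement-regularity} is what I would lean on for this.
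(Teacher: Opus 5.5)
Your proposal is correct and follows the paper's proof essentially step by step. You use the exact Gaussian exponential formula for $\covf$, match it entrywise with $\covG$, apply the square-root bound using $\covG\ge(\theta_1-\theta_2)I$, couple through common noise with the Frobenius moment estimate, and finish with equal conditional laws of $\YGC T$ and $\YG T$ plus self-averaging. The only slip is cosmetic: your uniform entrywise bound $O(\log n/n)$ over $hn^2$ entries gives $\|\covf-\covG\|_F/\sqrt n=O(\log n/\sqrt n)$, not the announced $O(\sqrt{\log n/n})$ (the paper gets the sharper rate from $|E_{ij}|=O(\delta_n|\rho_{ij}|)$ and $\sum_{ij}\rho_{ij}^2=\Tr(\covK^2)\le hnM^2$), but either rate is $o(1)$ and suffices.
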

\begin{proof}
Condition on $\mathcal F_n$ and work on $\Omega_n(M)$ from
\cref{sec:joint-replacement} for fixed $M$.
Write $\rho_{ij}=(\covK)_{ij}$ and
$\delta_n=\max_{i,j}|(\covK-I_{hn})_{ij}|$.
Gaussian exponential moments give
\begin{equation}\label{eq:main-covariance}
 (\covf)_{ij}
 =e^{\beta^2(\rho_{ii}+\rho_{jj}-2)/2}(e^{\beta^2\rho_{ij}}-1).
\end{equation}
On the diagonal, $\rho_{ii}=1+O(\delta_n)$, so
$(\covf)_{ii}=\theta_1+O(\delta_n)$.
Thus $\theta_1=e^{\beta^2}-1$ is the limiting variance of each
centered feature coordinate.

Off the diagonal, $|\rho_{ij}|\le\delta_n$.
Taylor expansion gives
$e^{\beta^2\rho_{ij}}-1=\beta^2\rho_{ij}+O(\rho_{ij}^2)$,
and the prefactor in \eqref{eq:main-covariance} is $1+O(\delta_n)$.
Since $\rho_{ij}^2\le\delta_n|\rho_{ij}|$, we obtain
$(\covf)_{ij}=\theta_2\rho_{ij}+O(\delta_n|\rho_{ij}|)$.
Thus $\theta_2=\beta^2$ is the coefficient of the leading
correlations between distinct coordinates.

The term $\theta_2\covK$ contributes $\theta_2+O(\delta_n)$
on the diagonal. Adding the remaining variance gives
\[
 \covG=\theta_2\covK+(\theta_1-\theta_2)I_{hn}.
\]
\Cref{lem:entrywise-covariance-comparison} turns these uniform entrywise estimates
into the claimed moment comparison.
In $\YG$, the linear scores carry the leading correlations,
while independent noise supplies the remaining variance.
\end{proof}

\subsection{Conclusion of the Proof}
\label{sec:main-proof}
The four comparisons combine into the following lemma.
\begin{lemma}[Gaussian equivalence under right multiplication]\label{lem:replacement}
Under \cref{ass:right-multipliers}, with the comparison noise chosen
as above, for every positive integer $q$,
\[
 m_q(\nu_{\scaledA T_\ntoken})-m_q(\nu_{\YG T_\ntoken})\to0
 \text{ a.s.}
\]
\end{lemma}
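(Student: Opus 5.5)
The plan is to chain the four comparisons of \cref{sec:headwise-adaptations}--\cref{sec:identification} with the triangle inequality for each fixed $q$:
\[
 m_q(\nu_{\scaledA T})-m_q(\nu_{\YG T})
 =\Delta_1+\Delta_2+\Delta_3+\Delta_4,
\]
where $\Delta_1=m_q(\nu_{\scaledA T})-m_q(\nu_{\Yf T})$ (\cref{lem:normalizer}), $\Delta_2=m_q(\nu_{\Yf T})-m_q(\nu_{\Yfc T})$ (\cref{lem:centering}), $\Delta_3=m_q(\nu_{\Yfc T})-m_q(\nu_{\YGSig T})$ (\cref{lem:direct-replacement}), and $\Delta_4=m_q(\nu_{\YGSig T})-m_q(\nu_{\YG T})$ (\cref{lem:covariance-replacement}). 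Each tends to zero almost surely. A countable intersection over $q\in\N$ then gives one almost sure event on which all fixed moments agree in the limit.

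Before chaining, I will check that the lemmas, which were stated for the positive semidefinite square multiplier, apply to the original $T_\ntoken$. Since $m_q(\nu_{MT})=\tr_\ntoken[(MTT^\top M^\top)^q]$ depends on $T$ only through $TT^\top$, replacing $T_\ntoken$ by $(T_\ntoken T_\ntoken^\top)^{1/2}$ leaves every moment in the chain unchanged. The replacement has the same norm, and as a measurable function of $T_\ntoken$ it is still independent of the queries jointly with the keys, so \cref{ass:right-multipliers} is preserved. Every intermediate object is built on the same probability space using the same queries and keys. The auxiliary noise $\auxnoise$ and the comparison noise $\noiseh{r}$ are independent of $\mathcal F_n$ and the queries, as the lemmas require.

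The one point needing care is $\Delta_4$: \cref{lem:covariance-replacement} compares $\YGSig T$ with $\YG T$, but in the proof only $\YGC T$ is reached directly from $\YGSig T$. I would therefore split $\Delta_4$ into two steps. The first passes from $\YGSig$ to $\YGC$ via the entrywise covariance estimates and \cref{lem:entrywise-covariance-comparison}. The second passes from $\YGC$ to $\YG$. For the second step, conditionally on $\mathcal F_n$, a row of $\sqrt\ntoken\YG$ is $\sqrt{\theta_2}(K_r u_r/\sqrt{\dimqk})_r+\sqrt{\theta_1-\theta_2}\,w$ with independent Gaussian $u_r,w$. Its law is $\Normal(0,\covG)$, and rows are independent because queries and noise are independent across tokens. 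Hence $\YG T$ and $\YGC T$ have the same conditional law given $\mathcal F_n$.

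Equality in law alone does not give almost sure closeness of two random sequences. I would close this gap with self-averaging: conditional variance bounds (or the quadratic-form concentration of \cref{lem:feature-reduction}, applied with Gaussian rows) show that each moment is within $\ntoken^{-1/2+o(1)}$ of its conditional expectation, on $\Omega_n(M)$, with summable tail probabilities. The two conditional expectations coincide, so Borel--Cantelli gives $\Delta_4\to0$ almost surely. I expect this to be the main obstacle, since the remaining steps are direct invocations of the earlier lemmas.
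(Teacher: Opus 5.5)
Your proposal is correct and follows the paper's proof: the square-root reduction, then \cref{lem:normalizer,lem:centering,lem:direct-replacement,lem:covariance-replacement} chained through the models of \cref{tab:transformations}, and a countable intersection over the moment orders $q$. The extra step you add for $\Delta_4$ is not needed, because \cref{lem:covariance-replacement} already compares $\YGSig T$ with $\YG T$. Its proof through \cref{lem:entrywise-covariance-comparison} already contains your passage from $\YGC T$ to $\YG T$, using equal conditional laws and the Gaussian case of \cref{lem:selfaverage}.
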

\begin{proof}
After the square-root reduction,
\cref{lem:normalizer,lem:centering,lem:direct-replacement,lem:covariance-replacement}
compare the consecutive models of \cref{tab:transformations}, and a
countable intersection handles the fixed orders.
\end{proof}
The Gaussian side has bounded norm, so the moment comparison implies
weak and resolvent comparison (\cref{app:weak-comparison}).

\begin{proof}[Proof of \eqref{eq:main-centered}]
The weight assumptions and \eqref{eq:widths} ensure that
$T_\ntoken=\ntoken^{-1/2}\Vall W^O$ satisfies
\cref{ass:right-multipliers}. The two outputs in
\eqref{eq:main-centered} are $\scaledA T_\ntoken$ and $\YG T_\ntoken$.

Conditionally on the key, value, and output weights, $\YG T_\ntoken$
has independent centered Gaussian rows with covariance $\covVO/\ntoken$.
Moreover, $\covVO=T_\ntoken^\top\covG T_\ntoken$ has bounded norm
almost surely by \cref{lem:gaussian-keys} and $\|T_\ntoken\|=O(1)$.
The assumed convergence of $\mu_{\covVO}$ and
\cref{lem:gaussian-law}, with $D_m=\covVO$ and
$\dimout/\ntoken\to\outputratio$, give
$\nu_{\YG T_\ntoken}\to\freepois(\volaw,\outputratio)$ almost surely,
weakly and in every fixed moment.

\Cref{lem:replacement} transfers the moment limits to $\MHAperp(X)$.
Since the Gaussian limit has compact support, \cref{app:moment-to-weak}
also gives weak convergence, proving \eqref{eq:main-centered}.
Finally, $\MHA(X)-\MHAperp(X)$ has rank at most one, so
$\MHA(X)$ has the same weak limit by the rank inequality.
\end{proof}

%% file: 020_simulation.tex
\ifdefined\PaperArxivLayout
 \Needspace{24\baselineskip}
\fi
\section{Spectra and Learning}
\label{sec:consequences}
\label{sec:projections}
We use the Gaussian equivalent in \cref{thm:main} to study coefficient
estimation and readout training.

\subsection{Spectra and Readout Predictions}
\label{sec:readout-setting}
\ifdefined\PaperArxivLayout
 \input{027_public_model_ge_summary}
\fi
Finite GE closely matches the centered softmax spectrum at dimensions
taken from publicly available models.
The configurations in \cref{tab:public-finite-ge-summary} preserve
these models' projection widths, head counts, and K/V sharing,
while using $\ntoken\le d$, orthogonal inputs, Gaussian projection
weights, and $\beta=1$.
We compare all eigenvalues of the centered softmax and finite GE
output Gram matrices in \cref{thm:main}.
The relative 1-Wasserstein distance is the mean absolute difference
between ordered eigenvalues divided by their common limiting mean.
Across the seven configurations, including across-head and within-head
sharing, the mean relative distances range from $0.24\%$ to $2.82\%$.

\ifdefined\PaperArxivLayout\else
 \input{027_public_model_ge_summary}
 \input{023_readout_overview_figure}
\fi

The limiting spectrum also gives predictions for linear readouts.
Under the assumptions of \cref{thm:main}, we freeze the attention
weights and use $F=\MHA(X)/\sqrt m$, where
$m:=\lim_{\ntoken\to\infty}\ntoken^{-1}\|\MHAperp(X)\|_F^2>0$.

For coefficient estimation, generate
$y=\sqrt{\ntoken}F\vartheta+\varepsilon$ with independent
$\vartheta\sim\Normal(0,I_{\dimout}/\dimout)$ and
$\varepsilon\sim\Normal(0,\sigma^2I_\ntoken)$, also independent of $F$.
For fixed $\lambda>0$, let $\vartheta_*$ be the minimizer of
$\|Fv-y/\sqrt{\ntoken}\|^2+\lambda\|v\|^2$ over $v$.
The limiting mean squared coefficient error is
$\mathcal{E}_\lambda:=\lim_{\ntoken\to\infty}\mathbb E_{\vartheta,\varepsilon}
[\|\vartheta_*-\vartheta\|^2\mid F]$.
Writing $\nu$ for the limiting eigenvalue law of $FF^\top$, the
ridge formula and GE give (\cref{app:readout-observables})
\begin{equation}\label{eq:readout-risk-mha}
 \mathcal{E}_\lambda=1-\outputratio^{-1}
 +\int\frac{\outputratio^{-1}\lambda^2+\sigma^2x}{(x+\lambda)^2}\,d\nu(x).
\end{equation}

To evaluate this integral, define
$w_\nu(\lambda):=\int\frac{x}{x+\lambda}\,d\nu(x)$,
the limiting effective degrees of freedom per token.
For $\lambda>0$, $w=w_\nu(\lambda)$ satisfies
$\lambda w=(1-w)/\Stransform_\nu(-w)$ and determines the error by
\begin{equation}\label{eq:readout-scalar-risk}
 \mathcal{E}_\lambda=1-\frac{w_\nu(\lambda)}{\outputratio}
 +\left(\frac{\lambda}{\outputratio}-\sigma^2\right)w_\nu'(\lambda).
\end{equation}
Here the prime denotes differentiation in $\lambda$;
\cref{app:readout-computation} gives the derivation.

The same law predicts the training-loss trajectory of a linear readout.
For a nonzero centered target $y$ fixed independently of initialization,
unregularized gradient flow from $\vartheta(0)=0$ minimizes
$\frac12\|F\vartheta-y\|^2$.
Its relative loss $L_\ntoken(t;y)=\|F\vartheta(t)-y\|^2/\|y\|^2$
satisfies (\cref{app:readout-observables})
\begin{equation}\label{eq:learning-curve-bulk}
 \lim_{\ntoken\to\infty}\E L_\ntoken(t;y)
 =\int e^{-2tx}\,d\nu(x).
\end{equation}
We evaluate this prediction by numerically inverting the time-Laplace
transform $(1-w_\nu(s/2))/s$, $s>0$
(\cref{app:readout-computation}).

For independent Gaussian K/V and an independent Gaussian output
projection as in \cref{cor:vo}, these predictions reduce to an explicit
scalar equation. Here $m=\outputratio\theta_1$ and
$c_\beta:=\theta_2/\theta_1=\beta^2/(e^{\beta^2}-1)$.
For $t>0$, let $\rho_{\beta,t}$ be the law of $1-c_\beta+c_\beta x$
for $x\sim\MP_{1/t}$, with
\[
 \Stransform_{\rho_{\beta,t}}(z)=\frac{2}{1+c_\beta z/t+
 \sqrt{(1-c_\beta z/t)^2+4c_\beta^2z/t}}.
\]
Then $w_\nu(\lambda)$ is the unique solution of
\begin{equation}\label{eq:readout-ridge-root}
 \lambda w=\frac{(1-w)(1-w/h)(1-w/\valueratio)(1-w/\outputratio)}
 {\Stransform_{\rho_{\beta,\keyratio}}(-w)}
\end{equation}
in $0<w<\min\{1,h,\valueratio,\outputratio\}$.
Solving this equation and differentiating implicitly evaluates
\eqref{eq:readout-scalar-risk} without generating feature matrices.

\ifdefined\PaperArxivLayout
 \input{023_readout_overview_figure}
\fi
\Cref{fig:readout-overview}(a) shows agreement with the softmax
learning curves as increasing head count changes a nonzero plateau
into slow and then rapid decay: mass at zero determines the plateau,
while small positive eigenvalues govern slow decay.

Unless varied, the readout experiments use $\beta=1$, with
$\lambda=10^{-4}$ and $\sigma^2=0.1$ for coefficient estimation;
\cref{tab:learning-dimensions} lists the dimensions.
Curves use the limiting law. Markers are computed from exact
finite-matrix ridge errors or gradient-flow losses averaged over ten
softmax initializations (\cref{app:experiments}); arrival times are
obtained from the averaged losses.

\subsection{Width Allocation and Temperature}
\label{sec:head-count-width}
\label{sec:temperature-width}

The total value width and output width constrain the available
feature directions through $h\dimvalue/\ntoken$ and
$\dimout/\ntoken$. Near boundaries where the smallest width changes,
small nonzero Gram eigenvalues amplify observation noise in
coefficient estimation. This mechanism is shared with analyses of
interpolation and multiple descent in random-feature regression
\citep{dascoli2020triple,meng2024multiple}.
GE makes its consequences quantitatively accessible for softmax MHA
through the spectral law and coefficient-error formula in
\cref{sec:readout-setting}.

\Cref{fig:readout-overview}(b,c) compares two width-allocation paths.
With fixed per-head widths and $\dimout=\ntoken/2$, the error has
one peak near $h\dimvalue=\dimout$.
Holding the number $\nparam$ of distinct Q/K/V/O projection weights
fixed instead makes total value width increase while output width
decreases, producing two peaks near $h\dimvalue=\ntoken$ and
$\dimout=\ntoken$.
Thus, at a fixed parameter count, width allocation determines which
projection limits estimation accuracy.

The Q/K--V allocation depends on inverse temperature through the linear
and nonlinear covariance weights $c_\beta$ and $1-c_\beta$.
At small $\beta$, narrow Q/K projections leave weak feature directions;
the nonlinear contribution strengthens these directions as $\beta$ increases.
\Cref{fig:readout-overview}(d) compares the coefficient error of
$(\dimqk,\dimvalue)=(64,192)$ relative to $(128,128)$ at equal parameter counts
(\cref{app:temperature-width-experiment}).
GE predicts the observed reversal in their ranking.
For coefficient estimation near $\beta=1$, allocate more width to V
than to Q/K at a fixed projection budget.

\subsection{Parameter Sharing}
\label{sec:across-head-sharing}
\label{sec:within-head-sharing}
\ifdefined\PaperArxivLayout
 \input{024_sharing_comparison_figure}
\fi
Across-head K/V sharing reduces the parameter count without
changing the limiting spectrum at fixed head count and widths.
GQA shares a key projection and a separate
value projection among several query heads \citep{ainslie2023gqa}.
Under the Gaussian projection assumptions of \cref{cor:vo},
this sharing preserves $\nu$ and \eqref{eq:readout-ridge-root}, hence
both the coefficient error and learning curve at fixed widths.
\Cref{fig:sharing-comparison}(a) fixes $h=64$ and compares this reduction with reallocating
the saved weights to wider heads at fixed $\nparam$
(\cref{app:cross-head-learning-experiment}).
The latter reduces the time to relative loss $0.1$ in this example,
so sharing can support either parameter savings or faster readout training
through increased width.

We next compare GQA with two query heads per K/V pair and complete
within-head tying, $W_r^V=W_r^K$ in every head.
When $\dimqk=\dimvalue$, these schemes have equal parameter counts
at the same head count and widths.

Tying makes the covariance a quadratic function of the key Gram:
put $\kvmap(x)=(\theta_1-\theta_2)x+(h\theta_2/\keyratio)x^2$ and
let $\kvlaw$ be the law of $\kvmap(x)$ for
$x\sim\freepois_{\keyratio/h}$.
With the same independent Gaussian output projection,
$m=\outputratio(\theta_1+h\theta_2/\keyratio)$, and
\eqref{eq:readout-ridge-root} becomes
\begin{equation}\label{eq:readout-tied-root}
 \lambda w=\frac{(1-w)(h-w)(\outputratio-w)}
 {\keyratio m\Stransform_{\kvlaw}(-w/h)}.
\end{equation}
Its solution in $0<w<\min\{1,\keyratio,\outputratio\}$ gives the
coefficient error through the same \eqref{eq:readout-scalar-risk}
(\cref{app:shared-values-output}).

\Cref{fig:sharing-comparison}(b) shows that complete K/V tying becomes
preferable as a fixed total width is divided among more heads.
GQA has lower coefficient error at $h=2,4$, whereas complete tying
has lower error at $h=8,16,32$.
Equation~\eqref{eq:readout-tied-root} predicts this reversal.
Near $\beta=1$, these results suggest favoring within-head K/V tying
over pairwise GQA for coefficient estimation when a fixed total width
is divided among heads narrow relative to the token length.

\ifdefined\PaperArxivLayout\else
 \input{024_sharing_comparison_figure}
\fi

%% file: 027_public_model_ge_summary.tex
% Generated by mha-paper render from public experiment summaries.
\ifdefined\PaperArxivLayout
 \begin{wraptable}{R}{0.53\textwidth}
\else
 \begin{table}[t]
\fi
\centering\footnotesize
\setlength{\belowcaptionskip}{6pt}
\caption{Finite-size MHA--GE spectral agreement at dimensions
taken from publicly available models. Means with 95\% CIs.
Full dimensions and sources: \cref{app:public-finite-ge}.}
\label{tab:public-finite-ge-summary}
\begingroup\setlength{\tabcolsep}{3pt}
\begin{tabular}{@{}llrrr@{}}\toprule
Dimensions of & Sharing & $\ntoken$ & $h$ & Rel. $W_1$ (\%)\\\midrule
BERT-base & None & 512 & 12 & $1.66 \pm 0.23$ \\
ModernBERT-L & None & 1,024 & 16 & $1.37 \pm 0.21$ \\
DINOv3-L/16 & None & 261 & 16 & $2.82 \pm 0.40$ \\
LLaDA-8B & None & 4,096 & 32 & $0.92 \pm 0.06$ \\
GPT-3 175B & None & 2,048 & 96 & $0.62 \pm 0.07$ \\
Falcon-7B & MQA & 2,048 & 71 & $1.92 \pm 0.46$ \\
Kayyam 1.2B & $K=V$ & 2,048 & 32 & $0.24 \pm 0.01$ \\
\bottomrule\end{tabular}\endgroup
\ifdefined\PaperArxivLayout
 \end{wraptable}
\else
 \end{table}
\fi

%% file: 023_readout_overview_figure.tex
% Regenerate with mha-paper render.
\ifdefined\PaperArxivLayout
 \begin{figure}[t]
\else
 \begin{figure*}[t]
\fi
 \centering
 \includegraphics[width=\textwidth]{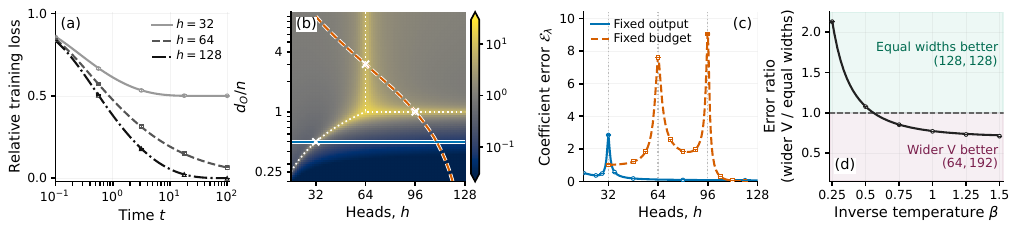}
 \caption{Spectral predictions for readout learning and width allocation.
 (a) Training loss for three head counts.
 (b) Coefficient-error map with fixed-output (blue) and fixed-budget
 (orange) paths; dotted lines mark width boundaries.
 (c) Errors along these paths.
 (d) Coefficient-error ratio, $(64,192)/(128,128)$, at fixed $\nparam$;
 above the dashed unit line, equal widths give lower error;
 below it, wider V gives lower error.
 Curves: GE; markers: MHA; bars: 95\% CIs.
 Maximum error bars: (a) $0.00513$; (c) $0.02381$; (d) $0.00168$.
 Details: \cref{app:head-learning-experiment,app:head-budget-experiment,app:temperature-width-experiment}.}
 \label{fig:readout-overview}
\ifdefined\PaperArxivLayout
 \end{figure}
\else
 \end{figure*}
\fi

%% file: 024_sharing_comparison_figure.tex
% Regenerate with mha-paper render.
\ifdefined\PaperArxivLayout
 \begin{wrapfigure}[17]{R}{0.56\textwidth}
 \small
\else
 \begin{figure}[t]
\fi
 \centering
 \includegraphics[width=\ifdefined\PaperArxivLayout\linewidth\else3.25in\fi]{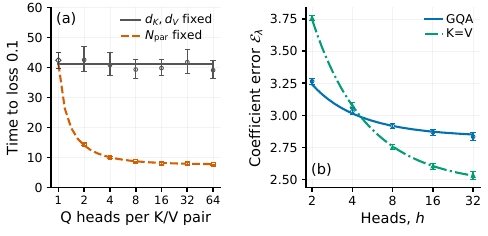}
 \caption{Parameter sharing.
 (a) Across-head sharing; diamond: unshared.
 (b) GQA versus within-head K=V at equal widths and parameter count,
 $h\dimqk=h\dimvalue=\dimout=\ntoken/2$.
 Lines: GE; markers: MHA; bars: 95\% CIs.
 Maximum error bars: (a) $4.43476$; (b) $0.03015$.
 Settings in \cref{app:cross-head-learning-experiment,app:sharing-head-count-experiment}.}
 \label{fig:sharing-comparison}
\ifdefined\PaperArxivLayout
 \end{wrapfigure}
\else
 \end{figure}
\fi

%% file: 070_discussion.tex
\section{Discussion}
\label{sec:discussion}
Our theory yields concrete recommendations for width allocation and
parameter sharing in frozen-feature readout models.
By predicting training loss and coefficient error from architectural
parameters, it identifies how a fixed parameter budget can be used
more effectively.

\subsection{Implications for Learning}
\label{sec:discussion-learning}
For frozen-feature readouts, width allocation should address V/O
bottlenecks and the temperature-dependent role of Q/K.
The ratios $h\dimvalue/\ntoken$ and $\dimout/\ntoken$ constrain the
available feature directions: increasing the head count cannot
eliminate a loss plateau caused by insufficient output width.
The preferred Q/K--V allocation depends on inverse temperature:
at low $\beta$, preserving Q/K width avoids weak feature directions,
while the nonlinear contribution reduces this disadvantage as
$\beta$ increases (\cref{fig:readout-overview}(d)).
For coefficient estimation near $\beta=1$, use wider V than Q/K
at a fixed projection budget.

For frozen-feature readouts, choose the sharing mechanism according
to the design objective. Use across-head K/V sharing to reduce
parameters while maintaining training and estimation performance at
fixed widths, or reinvest the savings in wider heads to accelerate
readout training. When coefficient estimation at the same width and
parameter budget is the priority, favor within-head K/V tying for
narrow heads near $\beta=1$.

\subsection{Extensions}
\label{sec:discussion-extensions}
For general inputs, the spectrum of the input Gram matrix may
interact with the width constraints and K/V dependence.
Extending GE to this setting would determine how these input
correlations alter the preferred width allocations and sharing schemes.

A second question is how fast $\beta$ may grow with $\ntoken$ while
preserving GE.
\citet[Section 6.1]{hayase2026gaussian} suggest
$\beta=\Theta(\sqrt{\log\ntoken})$ as a candidate scale at which
bulk growth changes the separation from the dominant common mode.
Determining the admissible growth rates would locate the boundary
of the Gaussian comparison as this spectral separation changes.

%% file: 072_conclusion.tex
\section{Conclusion}
\label{sec:conclusion}
We establish Gaussian equivalence for multi-head self-attention,
including key-dependent value and output projections.
The resulting random-matrix model predicts readout training dynamics
and coefficient estimation error, yielding design guidance for
frozen-feature learning.
More broadly, the Gaussian equivalence provides an analytical
foundation for understanding how multi-head attention shapes the
representations on which subsequent learning and computation depend.

%% file: 090_appendix.tex
\numberwithin{figure}{section}
\numberwithin{table}{section}

\ifdefined\PaperArxivLayout\else
The appendix collects notation, the detailed equivalence proof, and
supporting results for \cref{sec:consequences}.
\fi
\input{097_notation}
\input{091_replacement_proof}

\input{116_architectural_support}

\input{098_experiments}
\FloatBarrier

%% file: 097_notation.tex
\section{Notation}
\label{app:notation}
\Cref{tab:notation-common} collects the general mathematical notation.
\Cref{tab:notation-parameters,tab:notation-matrices,tab:notation-functions,tab:notation-laws}
list the dimensions and parameters, matrices, functions, and probability laws,
respectively. Function symbols omit evaluation arguments; matrix and law
entries denote the objects shown.

A subscript $\oplus$ marks a matrix defined as a direct sum over heads.
The aggregate covariances retain their role subscripts:
$\covK=\bigoplus_r\covKh{r}$,
$\covf=\bigoplus_r\covfh{r}$, and $\covG=\bigoplus_r\covGh{r}$
are $h\ntoken\times h\ntoken$ block matrices.

Scalar functions of score matrices act entrywise.

\begingroup
\small
\renewcommand{\arraystretch}{1.15}
\setlength{\LTleft}{0pt}
\setlength{\LTright}{0pt}
\begin{longtable}{@{}>{\raggedright\arraybackslash}p{.20\textwidth}
                    >{\raggedright\arraybackslash}p{.29\textwidth}
                    >{\raggedright\arraybackslash}p{\dimexpr.51\textwidth-4\tabcolsep\relax}@{}}
\caption{General mathematical notation.}
\label{tab:notation-common}\\
\toprule
Symbol & Name & Definition\\
\midrule
\endfirsthead
\multicolumn{3}{@{}l}{\tablename\ \thetable\ (continued)}\\
\toprule
Symbol & Name & Definition\\
\midrule
\endhead
\midrule
\multicolumn{3}{r@{}}{Continued on the next page}\\
\endfoot
\bottomrule
\endlastfoot
\multicolumn{3}{@{}l}{\textbf{Sets and linear algebra}}\\*
\addlinespace[2pt]
$\N$ & Natural numbers & $\{1,2,\ldots\}$; zero is excluded.\\
$\R$, $\C$ & Real and complex numbers & The real and complex number fields.\\
$\C_+$ & Upper half-plane & $\{z\in\C:\operatorname{Im}z>0\}$.\\
$I_N$ & Identity matrix & $N\times N$ identity.\\
$\mathbf1_N$ & All-ones vector & $(1,\ldots,1)^\top\in\R^N$.\\
$M^\top$ & Transpose & $(M^\top)_{ij}=M_{ji}$.\\
$\Tr$ & Trace & $\Tr C=\sum_{j=1}^N C_{jj}$ for an $N\times N$ matrix.\\
$\tr_N$ & Normalized trace & $\tr_N C=N^{-1}\Tr C$.\\
$\rank$ & Matrix rank & $\rank M$ is the dimension of the image of $M$.\\
$\diag$ & Diagonal matrix construction & $\diag(v)$ has diagonal $v$; also written $\diag(v_1,\ldots,v_m)$.\\
$s_j$ & Singular-value function & $s_j(M)$ is the $j$th largest singular value of $M$, with zeros included as needed.\\
$A\ge B$, $B\le A$ & Positive semidefinite order & For real symmetric matrices, $v^\top(A-B)v\ge0$ for every real $v$; $A\ge0$ means positive semidefiniteness.\\
$\oplus$, $\bigoplus$ & Matrix direct sum & Block diagonal assembly, allowing rectangular blocks. A sum over $r$ runs from $1$ to $h$ unless specified otherwise.\\
$[M_1,\ldots,M_h]$ & Horizontal concatenation & Matrices joined by columns, without normalization.\\
$\|\cdot\|$ & Euclidean or operator norm & $\|v\|=\|v\|_2=(\sum_i|v_i|^2)^{1/2}$ for vectors; $\|M\|=\sup_{\|v\|=1}\|Mv\|$ for matrices.\\
$\|\cdot\|_F$ & Frobenius norm & $\|M\|_F=(\sum_{i,j}|M_{ij}|^2)^{1/2}$.\\
\addlinespace[5pt]
\multicolumn{3}{@{}l}{\textbf{Probability and measure operations}}\\*
\addlinespace[2pt]
$\E$ & Expectation & $\E Z=\int Z\,d\Prob$; conditioning and the variables averaged over are specified where needed.\\
$\Prob$ & Probability & $\Prob(E)$ is the probability of an event $E$; $\Prob(E\mid\mathcal F)$ denotes conditional probability.\\
$\Var$ & Variance & $\Var(Z)=\E[(Z-\E Z)^2]$ for real $Z$; conditional versions are indicated locally.\\
$m_q$ & Moment function & $m_q(\mu)=\int x^q\,d\mu(x)$, when finite.\\
$\kappa_q$ & $q$th free cumulant & $\kappa_q(\nu)$ is the coefficient of $z^{q-1}$ in $R_\nu(z)$.\\
$\mathbf1_E$ & Event indicator & One on $E$ and zero otherwise.\\
$\sigma(Z_1,\ldots,Z_k)$ & Generated sigma-field & Smallest sigma-field making the listed random objects measurable.\\
$Z\sim\mu$ & Distribution of a random variable & $Z$ has law $\mu$.\\
$\operatorname{supp}$ & Support & $\operatorname{supp}\mu$ is the smallest closed set of full $\mu$-measure.\\
$T_\#$ & Pushforward & $\pushforward{T}{\mu}$ is the law of $T(Z)$ when $Z\sim\mu$.\\
$\mathcal D_t$ & Dilation & $\mathcal D_t\mu=(x\mapsto tx)_\#\mu$, $t>0$.\\
$\boxtimes$ & Multiplicative free convolution & Free multiplicative convolution of probability laws on $[0,\infty)$.\\
\addlinespace[5pt]
\multicolumn{3}{@{}l}{\textbf{Convergence and asymptotics}}\\*
\addlinespace[2pt]
a.s. & Almost surely & On an event of probability one; \cref{app:convergence}. Asymptotic bounds then hold pathwise, with constants that may depend on the outcome.\\
i.i.d. & Independent and identically distributed & Independence and a common distribution for the stated family.\\
$\mu_n\Rightarrow\mu$ & Weak convergence & Convergence against every bounded continuous function; \cref{app:convergence}.\\
$a_n=O(b_n)$ & Bounded asymptotic ratio & $|a_n|\le Cb_n$ eventually, with $C$ independent of $n$ and $b_n>0$.\\
$a_n=o(b_n)$ & Vanishing asymptotic ratio & $a_n/b_n\to0$, for $b_n>0$.\\
$a_n\asymp b_n$ & Comparable positive sequences & $0<c\le a_n/b_n\le C<\infty$ eventually.\\
$a_n\sim b_n$ & Asymptotic equivalence & $a_n/b_n\to1$, with $b_n\ne0$ eventually.\\
\end{longtable}
\endgroup

\begingroup
\small
\renewcommand{\arraystretch}{1.15}
\setlength{\LTleft}{0pt}
\setlength{\LTright}{0pt}
\begin{longtable}{@{}>{\raggedright\arraybackslash}p{.16\textwidth}
                    >{\raggedright\arraybackslash}p{.33\textwidth}
                    >{\raggedright\arraybackslash}p{\dimexpr.51\textwidth-4\tabcolsep\relax}@{}}
\caption{Dimensions and parameters.}
\label{tab:notation-parameters}\\
\toprule
Symbol & Name & Definition or range\\
\midrule
\endfirsthead
\multicolumn{3}{@{}l}{\tablename\ \thetable\ (continued)}\\
\toprule
Symbol & Name & Definition or range\\
\midrule
\endhead
\midrule
\multicolumn{3}{r@{}}{Continued on the next page}\\
\endfoot
\bottomrule
\endlastfoot
$\ntoken$ & Context length & $\ntoken\in\N$.\\
$d$ & Input width & $d\in\N$, $\ntoken\le d$.\\
$\dimqk$ & Query/key width per head & $\dimqk\in\N$.\\
$\dimvalue$ & Value width per head & $\dimvalue\in\N$.\\
$\dimout$ & Output width & $\dimout\in\N$.\\
$h$ & Number of heads & $h\in\N$.\\
$\nparam$ & Projection parameter count & Number of distinct Q/K/V/O projection parameters.\\
$\beta$ & Inverse temperature & $\beta>0$.\\
$\keyratio$ & Total Q/K width ratio & \eqref{eq:aspect}.\\
$\valueratio$ & Total value width ratio & \eqref{eq:widths}.\\
$\outputratio$ & Output width ratio & \eqref{eq:widths}.\\
$\theta_1$ & Feature variance & $\theta_1=e^{\beta^2}-1$.\\
$\theta_2$ & Linear covariance coefficient & $\theta_2=\beta^2$.\\
$c_\beta$ & Linear variance fraction & $c_\beta=\theta_2/\theta_1\in(0,1)$.\\
\end{longtable}
\endgroup

The matrix definitions use the conditioning sigma-field
$\mathcal F_n=\sigma(K_{1,\ntoken},\ldots,K_{h,\ntoken},T_\ntoken)$.

\begingroup
\small
\renewcommand{\arraystretch}{1.15}
\setlength{\LTleft}{0pt}
\setlength{\LTright}{0pt}
\begin{longtable}{@{}>{\raggedright\arraybackslash}p{.17\textwidth}
                    >{\raggedright\arraybackslash}p{.16\textwidth}
                    >{\raggedright\arraybackslash}p{.20\textwidth}
                    >{\raggedright\arraybackslash}p{\dimexpr.47\textwidth-6\tabcolsep\relax}@{}}
\caption{Matrices and their dimensions.}
\label{tab:notation-matrices}\\
\toprule
Symbol & Size & Name & Definition\\
\midrule
\endfirsthead
\multicolumn{4}{@{}l}{\tablename\ \thetable\ (continued)}\\
\toprule
Symbol & Size & Name & Definition\\
\midrule
\endhead
\midrule
\multicolumn{4}{r@{}}{Continued on the next page}\\
\endfoot
\bottomrule
\endlastfoot
\multicolumn{4}{@{}l}{\textbf{Input and projection matrices}}\\*
\addlinespace[2pt]
$X$ & $\ntoken\times d$ & Input matrix & $XX^\top/d=I_\ntoken$; \cref{sec:gaussian-input}.\\
$W_r^Q$ & $d\times\dimqk$ & Query weights & \Cref{sec:gaussian-input}.\\
$W_r^K$ & $d\times\dimqk$ & Key weights & \Cref{sec:gaussian-input}.\\
$W_r^V$ & $d\times\dimvalue$ & Value weights & \Cref{sec:gaussian-input}.\\
$W^O$ & $h\dimvalue\times\dimout$ & Output weights & \Cref{sec:gaussian-input}.\\
$G_O$ & $h\dimvalue\times\dimout$ & Rescaled output weights & $\sqrt{h\dimvalue}W^O$.\\
$Q_r$ & $\ntoken\times\dimqk$ & Query matrix & $XW_r^Q$.\\
$K_r$ & $\ntoken\times\dimqk$ & Key matrix & $XW_r^K$.\\
$V_r$ & $\ntoken\times\dimvalue$ & Value matrix & $XW_r^V$.\\
$\Vall$ & $h\ntoken\times h\dimvalue$ & Value direct sum & $V_1\oplus\cdots\oplus V_h$.\\
\addlinespace[5pt]
\multicolumn{4}{@{}l}{\textbf{Attention and layer outputs}}\\*
\addlinespace[2pt]
$S_r$ & $\ntoken\times\ntoken$ & Score matrix & $Q_rK_r^\top/\sqrt{\dimqk}$.\\
$A_r$ & $\ntoken\times\ntoken$ & Attention matrix & $\softmax(\beta S_r)$, rowwise.\\
$P$ & $\ntoken\times\ntoken$ & Common-row projection & $\mathbf1_\ntoken\mathbf1_\ntoken^\top/\ntoken$.\\
$P^\perp$ & $\ntoken\times\ntoken$ & Centering projection & $I_\ntoken-P$.\\
$\Ahead{r}$ & $\ntoken\times\ntoken$ & Centered attention & $A_r-P$.\\
$A$ & $\ntoken\times h\ntoken$ & Attention concatenation & $[A_1,\ldots,A_h]$.\\
$\Aperp$ & $\ntoken\times h\ntoken$ & Centered concatenation & $[\Ahead{1},\ldots,\Ahead{h}]$.\\
$\MHA(X)$ & $\ntoken\times\dimout$ & Layer output & $A\Vall W^O$.\\
$\MHAperp(X)$ & $\ntoken\times\dimout$ & Centered output & \eqref{eq:mha-factorization}.\\
$\MHAG(X)$ & $\ntoken\times\dimout$ & Gaussian output & \eqref{eq:gaussian-layer}.\\
\addlinespace[5pt]
\multicolumn{4}{@{}l}{\textbf{Exponential features and Gaussian comparison}}\\*
\addlinespace[2pt]
$\Yfh{r}$ & $\ntoken\times\ntoken$ & Exponential features & \eqref{eq:Dh}.\\
$\Yf$ & $\ntoken\times h\ntoken$ & Feature concatenation & \eqref{eq:block-features}.\\
$\Yfc$ & $\ntoken\times h\ntoken$ & Conditionally centered features & $\Yf-\E[\Yf\mid\mathcal F_n]$.\\
$\noiseh{r}$ & $\ntoken\times\ntoken$ & Comparison noise & Independent standard Gaussian; \eqref{eq:linear-model}.\\
$\YGh{r}$ & $\ntoken\times\ntoken$ & Gaussian head & \eqref{eq:linear-model}.\\
$\YG$ & $\ntoken\times h\ntoken$ & Gaussian concatenation & $[\YGh{1},\ldots,\YGh{h}]$.\\
$\covKh{r}$ & $\ntoken\times\ntoken$ & Key Gram matrix & \eqref{eq:key-gram}.\\
$\covfh{r}$ & $\ntoken\times\ntoken$ & Feature covariance & \eqref{eq:exact-covariance}.\\
$\covGh{r}$ & $\ntoken\times\ntoken$ & Gaussian covariance & \eqref{eq:key-gram}.\\
$T_\ntoken$ & $h\ntoken\times {\dimout}_\ntoken$ & Right multiplier & \Cref{ass:right-multipliers}.\\
$\YGcov{\Lambda}$ & $\ntoken\times h\ntoken$ & Gaussian auxiliary matrix & \eqref{eq:gaussian-auxiliary}.\\
$\covVh{r}$ & $\dimvalue\times\dimvalue$ & Value covariance block & $V_r^\top\covGh{r}V_r/\ntoken$.\\
$\covV$ & $h\dimvalue\times h\dimvalue$ & Value covariance & $\bigoplus_r\covVh{r}$.\\
$\covVO$ & $\dimout\times\dimout$ & Output covariance & \eqref{eq:population}.\\
\end{longtable}
\endgroup

Function symbols in \cref{tab:notation-functions} retain parameter
subscripts but omit evaluation arguments. The transforms use a compactly
supported law $\nu$ on $[0,\infty)$; $\chi_\nu$ and $\Stransform_\nu$
require positive mean. The inverse $\chi_\nu$ is defined locally at zero
and on $(\nu(\{0\})-1,0)$; transform domains and continuations are given in
\cref{app:transforms}.

\begingroup
\small
\renewcommand{\arraystretch}{1.15}
\setlength{\LTleft}{0pt}
\setlength{\LTright}{0pt}
\begin{longtable}{@{}>{\raggedright\arraybackslash}p{.16\textwidth}
                    >{\raggedright\arraybackslash}p{.33\textwidth}
                    >{\raggedright\arraybackslash}p{\dimexpr.51\textwidth-4\tabcolsep\relax}@{}}
\caption{Model functions and spectral transforms.}
\label{tab:notation-functions}\\
\toprule
Symbol & Name & Definition\\
\midrule
\endfirsthead
\multicolumn{3}{@{}l}{\tablename\ \thetable\ (continued)}\\
\toprule
Symbol & Name & Definition\\
\midrule
\endhead
\midrule
\multicolumn{3}{r@{}}{Continued on the next page}\\
\endfoot
\bottomrule
\endlastfoot
$\featfn$ & Exponential feature function & $\featfn(x)=e^{\beta x-\beta^2/2}-1$.\\
$\softmax$ & Row softmax & $\softmax(B)_{ij}=e^{B_{ij}}/\sum_k e^{B_{ik}}$.\\
$G_\nu$ & Cauchy transform & \eqref{eq:cauchy-transform}.\\
$\psi_\nu$ & $\psi$-function & \eqref{eq:psi-transform}.\\
$\chi_\nu$ & Inverse $\psi$-function & $\chi_\nu=\psi_\nu^{\langle-1\rangle}$.\\
$\Stransform_\nu$ & $S$-transform & \eqref{eq:s-transform}.\\
$R_\nu$ & $R$-transform & $R_\nu(z)=\sum_{q\ge1}\kappa_q(\nu)z^{q-1}$.\\
$w_\nu$ & Effective degrees of freedom per token & $w_\nu(\lambda)=\int x/(x+\lambda)\,\nu(dx)=-\psi_\nu(-1/\lambda)$, $\lambda>0$.\\
$\kvmap$ & Key--value covariance polynomial & \eqref{eq:phikv}.\\
\end{longtable}
\endgroup

In \cref{tab:notation-laws}, $C\ge0$ is $N\times N$ with eigenvalues
$\lambda_j(C)$ and $t>0$. The measure operations are defined in
\cref{tab:notation-common}.
The value and output covariance limits use the assumptions of
\cref{app:vo,thm:main}, respectively.

\begingroup
\small
\renewcommand{\arraystretch}{1.15}
\setlength{\LTleft}{0pt}
\setlength{\LTright}{0pt}
\begin{longtable}{@{}>{\raggedright\arraybackslash}p{.16\textwidth}
                    >{\raggedright\arraybackslash}p{.33\textwidth}
                    >{\raggedright\arraybackslash}p{\dimexpr.51\textwidth-4\tabcolsep\relax}@{}}
\caption{Probability laws.}
\label{tab:notation-laws}\\
\toprule
Symbol & Name & Definition\\
\midrule
\endfirsthead
\multicolumn{3}{@{}l}{\tablename\ \thetable\ (continued)}\\
\toprule
Symbol & Name & Definition\\
\midrule
\endhead
\midrule
\multicolumn{3}{r@{}}{Continued on the next page}\\
\endfoot
\bottomrule
\endlastfoot
$\Normal(m,\Sigma)$ & Gaussian law & Mean $m$ and covariance $\Sigma$; in one dimension the second parameter is the variance.\\
$\delta_a$ & Dirac measure & Unit point mass at $a$.\\
$\mu_C$ & Empirical eigenvalue law & $N^{-1}\sum_{j=1}^N\delta_{\lambda_j(C)}$.\\
$\nu_M$ & Squared-singular-value law & $\mu_{MM^\top}$.\\
$\vlaw$ & Value covariance law & $\lim\mu_{\covV}$.\\
$\volaw$ & Output covariance law & $\lim\mu_{\covVO}$.\\
$\freepois(\mu,t)$ & Compound free Poisson law & $\kappa_q(\freepois(\mu,t))=t m_q(\mu)$.\\
$\freepois_t$ & Free Poisson law & $\freepois(\delta_1,t)$.\\
$\MP_t$ & Mean-one Marchenko--Pastur law & $\mathcal D_t\freepois_{1/t}$; row-to-column ratio $t$.\\
$\rho_t$ & Affine Marchenko--Pastur law & \eqref{eq:rho}.\\
$\rho_{\beta,t}$ & Mean-one affine law & $\mathcal D_{1/\theta_1}\rho_t$.\\
$\attnlimit$ & Attention bulk law & \eqref{eq:limiting-law}.\\
$\kvlaw$ & Key--value covariance law & $\pushforward{\kvmap}{\freepois_{\keyratio/h}}$.\\
\end{longtable}
\endgroup

%% file: 091_replacement_proof.tex
\section{Proof of Gaussian Equivalence}
\label{app:replacement}
This section supplies the estimates used in \cref{sec:transformations}.
Constants may depend on fixed $h,\beta,\keyratio$,
the moment order, and an eventual norm bound on the right multiplier
in \cref{ass:right-multipliers}. At each size, conditional statements refer to
$\mathcal F_n=\sigma((K_{r,n})_r,T_n)$, with $T_n=I_{hn}/\sqrt h$ for
\cref{cor:attention}. The same statements hold for every larger
sigma-field that is independent of the queries and the noise, such as
the one generated by the key, value, and output weights.
Independence in \cref{ass:right-multipliers} and the
comparison-noise construction preserve the Gaussian query and noise
laws under this conditioning. All estimates below are uniform over
keys and multipliers satisfying the specified bounds.

For real $M\ge1$, use the events
$\Omega_n(M)\in\mathcal F_n$ defined in \eqref{eq:replacement-regularity}.
Almost surely there exist integers $M,N_0$ such that $\Omega_n(M)$ holds
for every $n\ge N_0$, by \cref{ass:right-multipliers,lem:gaussian-keys}.
On $\Omega_n(M)$ the conditional moment estimates have deterministic constants
depending only on $M$ and the fixed parameters.

The transition to almost-sure statements uses unconditional
Borel--Cantelli. If $E_n$ is an exceptional event and
\[
 \mathbf1_{\Omega_n(M)}\Prob(E_n\mid\mathcal F_n)\le a_n(M),
   \sum_n a_n(M)<\infty,
\]
then
\[
 \Prob(E_n\cap \Omega_n(M))
 =\E[\mathbf1_{\Omega_n(M)}\Prob(E_n\mid\mathcal F_n)]\le a_n(M).
\]
Consequently $E_n\cap \Omega_n(M)$ occurs only finitely often a.s.
Intersecting over integer $M$, positive rational tolerances, fixed
moment orders, and any prescribed countable family of matrix sequences
gives the common probability-one event. Eventual membership in some
$\Omega_n(M)$ removes the localization. Thus these summable estimates
allow dependence across matrix sizes and arbitrarily varying
sigma-fields $\mathcal F_n$.

The auxiliary Gaussian matrix $\auxnoise$ of
\eqref{eq:gaussian-auxiliary} is independent of $\mathcal F_n$ and the
queries jointly; the probability space is enlarged to carry it when
needed. The conclusion of \cref{lem:replacement} involves only the original
variables, so it holds almost surely on the original space.

All conditional expectations can be defined as finite integrals against
the Gaussian conditional kernels at fixed keys and multipliers.
For signed quantities they agree with the usual conditional expectations
after multiplication by $\mathbf1_{\Omega_n(M)}$.

Conditional $L^s$ norms mean
$\|U\|_{L^s}=(\E[\|U\|^s\mid\mathcal F_n])^{1/s}$,
with absolute values, Euclidean norms, or operator norms as in
\cref{sec:joint-replacement}.
We write $\ntoken^{o(1)}$ for a nonnegative bound growing
more slowly than every fixed positive power of $\ntoken$; the bound may
depend on a fixed moment order.

\input{100_scaling_and_convergence}
\input{102_auxiliary_lemmas}
\input{108_block_representation}
\input{094_covariance_and_polynomials}
\input{095_direct_moments}
\input{104_headwise_estimates}

\input{105_gaussian_replacement}
\input{096_normalization_and_assembly}

%% file: 100_scaling_and_convergence.tex
\subsection{Input Scaling}
\label{app:input-scaling}
Head-count comparisons use
$A/\sqrt h$ and $\YG/\sqrt h$ explicitly; the attention bulk is observed
through $\sqrt{\ntoken/h}\Aperp$.

\begin{remark}[Scaling correspondence with the single-head model]\label{rem:scaling}
\citet[Section 2 and Remark 2.1]{hayase2026gaussian} use
$X_0X_0^\top=I_\ntoken$ and unit-variance Gaussian query and key weights.
Our convention in \cref{sec:gaussian-input} gives the same projected
queries and keys: set $X_0=X/\sqrt d$ and
$W_{0,r}^Q=\sqrt d W_r^Q$, $W_{0,r}^K=\sqrt d W_r^K$, so that
\[
 X_0X_0^\top=I_\ntoken,\qquad
 X_0W_{0,r}^Q=XW_r^Q=Q_r,\qquad
 X_0W_{0,r}^K=XW_r^K=K_r.
\]
Thus the scores, softmax attention matrices, and their spectra are
unchanged. The same rescaling of value weights preserves $V_r$;
with $W^O$ held fixed, it also preserves the MHA output.
This input--weight conversion is separate from the head-comparison
factor $h^{-1/2}$ and the $\sqrt\ntoken$ scaling used to study the
attention bulk.
\end{remark}

\subsection{Weak Convergence and Moment Convergence}
\label{app:convergence}
For Borel probability measures $\mu_\ntoken,\mu$ on $[0,\infty)$,
weak convergence $\mu_\ntoken\Rightarrow\mu$ means
$\int g\,d\mu_\ntoken\to\int g\,d\mu$ for every bounded continuous $g$.
Moment convergence means that all moments are finite and
$m_q(\mu_\ntoken)\to m_q(\mu)$ for every $q\in\N$, where
$m_q(\mu)=\int x^q\,d\mu(x)$.
For a matrix $M_\ntoken$ with $\ntoken$ rows, these are moments of
squared singular values, including zeros:
\[
 m_q(\nu_{M_\ntoken})=\tr_\ntoken[(M_\ntoken M_\ntoken^\top)^q].
\]
For random measures on a common probability space, almost-sure
convergence requires one measurable event of probability one on which
the stated convergence holds simultaneously for all test functions
or moment orders. A random limit is evaluated at the same outcome.
These conventions apply to \cref{thm:main,cor:attention}.

Moment convergence implies weak convergence when the limit is
compactly supported (\cref{app:moment-to-weak}). Conversely, an eventual
common compact support makes weak convergence imply moment convergence.
For random measures, these implications apply pathwise under the
respective support conditions.

%% file: 102_auxiliary_lemmas.tex
\subsection{Moment Perturbation Bounds}
\label{app:moment-stability}
For $X,Y\in\R^{n\times m}$ and an integer $q\ge1$, set
$M=\max(\|X\|,\|Y\|)$ and
$\Delta_q=|m_q(\nu_X)-m_q(\nu_Y)|$. Singular-value perturbation,
Mirsky's inequality, and the Gram rank bound give, respectively,
\begin{align}
 \Delta_q&\le2qM^{2q-1}\|X-Y\|,\label{eq:op-moment-bound}\\
 \Delta_q&\le2qM^{2q-1}\|X-Y\|_F/\sqrt n,\label{eq:frob-moment-bound}\\
 \Delta_q&\le2\rank(X-Y)M^{2q}/n.\label{eq:rank-moment-bound}
\end{align}
Indeed, $s\mapsto s^{2q}$ is $2qM^{2q-1}$-Lipschitz on $[0,M]$;
apply the operator and Frobenius singular-value bounds with zero padding.
For the last inequality, the Gram difference has rank at most
$2\rank(X-Y)$; integrate the resulting empirical distribution bound
against $qt^{q-1}$ on $[0,M^2]$.

%% file: 108_block_representation.tex
\subsection{Block Representation}
\label{sec:blocks}
Collect the keys as
\begin{equation}\label{eq:block-keys}
 \Kall=K_1\oplus\cdots\oplus K_h.
\end{equation}
Here $\Kall\in\R^{h\ntoken\times h\dimqk}$ has rectangular head blocks.
Its product with the concatenated queries gives the scores exactly:
\begin{equation}\label{eq:block-features}
 \frac{[Q_1,\ldots,Q_h]\Kall^\top}{\sqrt{\dimqk}}=[S_1,\ldots,S_h],
 \Yf=\frac{\featfn([S_1,\ldots,S_h])}{\sqrt{\ntoken}}.
\end{equation}
Given $\mathcal F_n$, the concatenated query rows are i.i.d.\
$\Normal(0,I_{h\dimqk})$. The block covariances from \eqref{eq:key-gram} are
\begin{align}
 \covK&=\Kall\Kall^\top/\dimqk
       =\bigoplus_r\covKh{r},\label{eq:block-key-gram}\\
 \covG&=\bigoplus_r\covGh{r}
       =\theta_2\covK+(\theta_1-\theta_2)I_{h\ntoken}.
       \label{eq:blockC}
\end{align}
By \eqref{eq:linear-model}, the concatenated Gaussian model
$\YG=[\YGh{1},\ldots,\YGh{h}]$ has conditional row
covariance $\covG/\ntoken$.
The conditional covariance $\covf$ is block diagonal
because the queries of different heads are independent. The key blocks
$K_r$ may be dependent across heads, since the proof uses the Gram
matrices $\covKh{r}$ within each head.
The two dimension ratios are $\ntoken/\dimqk\to h/\keyratio$ for each
key Gram matrix and $\ntoken/(h\ntoken)=1/h$ for the feature matrix.

The attention concatenation also has an exact block representation.
With $\Aperpall=\Ahead{1}\oplus\cdots\oplus\Ahead{h}$,
\begin{equation}\label{eq:block-compression}
 \Aperp=\sqrt hU_h\Aperpall,
 U_h=\frac1{\sqrt h}[I_\ntoken,\ldots,I_\ntoken].
\end{equation}
Since $U_hU_h^\top=I_\ntoken$, the projection $P_h=U_h^\top U_h$ has rank $\ntoken$, and
\begin{equation}\label{eq:block-grams}
 \Aperp(\Aperp)^\top=\sum_r \Ahead{r}(\Ahead{r})^\top,
 (\Aperp)^\top\Aperp=h(\Aperpall)^\top P_h\Aperpall.
\end{equation}
For the scaled matrix $\scaledA$, the empirical spectral distributions
of the two Gram matrices satisfy
\begin{equation}\label{eq:gram-measures}
 \mu_{(\scaledA)^\top\scaledA }
 =(1-1/h)\delta_0+h^{-1}\nu_{\scaledA}.
\end{equation}
\Cref{fig:block-structure} displays these matrices. The projection
retains the mixed head products needed for the spectrum of the
concatenation.

\begin{figure*}[t]
\centering
\begingroup
\begin{minipage}[t]{.48\textwidth}
\centering
\textbf{(a) Keys and conditional covariances}\par\smallskip
\(\displaystyle
 \Kall=\left[\begin{array}{c|c|c}
 \firstblock{K_1}&\cdots&0\\ \hline
 \vdots&\ddots&\vdots\\ \hline
 0&\cdots&\lastblock{K_h}
 \end{array}\right]_{h\ntoken\times h\dimqk}
\)\par\medskip
\(\displaystyle
 \covf=\left[\begin{array}{c|c|c}
 \firstblock{\covfh{1}}&\cdots&0\\ \hline
 \vdots&\ddots&\vdots\\ \hline
 0&\cdots&\lastblock{\covfh{h}}
 \end{array}\right]_{h\ntoken\times h\ntoken}
\)\par\smallskip
\(\covK=\covKh{1}\oplus\cdots\oplus \covKh{h},  \covG=\covGh{1}\oplus\cdots\oplus \covGh{h}.\)
\end{minipage}\hfill
\begin{minipage}[t]{.49\textwidth}
\centering
\textbf{(b) Attention and aggregation}\par\smallskip
\(\displaystyle
 \Aperpall=\left[\begin{array}{c|c|c}
 \firstblock{\Ahead{1}}&\cdots&0\\ \hline
 \vdots&\ddots&\vdots\\ \hline
 0&\cdots&\lastblock{\Ahead{h}}
 \end{array}\right],  \Aperp=\sqrt hU_h\Aperpall
\)\par\medskip
\(\displaystyle
 P_h=\frac1h\left[\begin{array}{c|c|c}
 I_\ntoken&\cdots&I_\ntoken\\ \hline
 \vdots&\ddots&\vdots\\ \hline
 I_\ntoken&\cdots&I_\ntoken
 \end{array}\right],
 P_h^2=P_h
\)\par\smallskip
\( ((\Aperp)^\top\Aperp)_{rs}=(\Ahead{r})^\top\Ahead{s}.\)
\end{minipage}
\endgroup
\caption{Block structure of multi-head self-attention.
In (a), each key block is $\ntoken\times\dimqk$, and each conditional
covariance block is $\ntoken\times\ntoken$; $\covfh{r}$ is defined in
\eqref{eq:exact-covariance}. In (b), all blocks are
$\ntoken\times\ntoken$ and $U_h$ maps the $h$ head spaces to the common
token space. In $P_h$, every block is $I_\ntoken/h$, and
$((\Aperp)^\top\Aperp)_{rs}$ denotes the $(r,s)$ head block.}
\label{fig:block-structure}
\end{figure*}

\FloatBarrier

%% file: 094_covariance_and_polynomials.tex
\subsection{Conditional Means and Covariances}
Use the block keys $\Kall$ and key Grams $\covKh{r}$ from
\eqref{eq:block-keys}--\eqref{eq:block-key-gram}. Fix a row index $\ell$.
For its independent queries $q_{r,\ell}=Q_{r,\ell,:}^\top$, define
\begin{equation}\label{eq:feature}
 (x_r)_j=\featfn(S_{r,\ell j})=\featfn(K_{r,j,:}q_{r,\ell}/\sqrt \dimqk),
 \bar x_r=\E[x_r\mid\mathcal F_n],  \xi_r=x_r-\bar x_r.
\end{equation}
Write $\xi=(\xi_1^\top,\ldots,\xi_h^\top)^\top\in\R^{h\ntoken}$.
Thus $\xi=\sqrt{\ntoken}(\Yfc)_{\ell,:}^\top$, where $\Yfc$ is the
conditionally centered feature matrix.

\begin{lemma}[Regularity of Gaussian keys]\label{lem:gaussian-keys}
For any joint law of the key weights across heads and any dependence
across sizes, a.s.
\begin{equation}\label{eq:key-good}
 \max_r\|\covKh{r}\|=O(1),
 \delta_\ntoken=\max_{r,i,j}|(\covKh{r}-I_\ntoken)_{ij}|
       =O(\sqrt{\log \ntoken/\ntoken}).
\end{equation}
\end{lemma}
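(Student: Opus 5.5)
The plan is to treat each head separately and take a union over the fixed number $h$ of heads. No independence across heads is then needed, so the arbitrary joint law of the key weights across heads causes no difficulty. Arbitrary dependence across sizes is handled by unconditional Borel--Cantelli, which only needs summable probabilities at each size. For each head, $K_r=XW_r^K$ with $XX^\top/d=I_\ntoken$ and $W_r^K$ having i.i.d.\ $\Normal(0,1/d)$ entries. Hence $K_r$ has i.i.d.\ $\Normal(0,1)$ entries: its rows are jointly Gaussian with covariance $XX^\top/d\otimes I_{\dimqk}=I_\ntoken\otimes I_{\dimqk}$. So $\covKh{r}=K_rK_r^\top/\dimqk$ is a Wishart matrix with ratio $\ntoken/\dimqk\to h/\keyratio$.

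For the operator norm I would use the Gaussian concentration bound for extreme singular values (Davidson--Szarek/Gordon):
\[
\Prob\bigl(\|K_r\|\ge\sqrt\ntoken+\sqrt{\dimqk}+t\bigr)\le e^{-t^2/2}.
\]
Take $t=\sqrt{\ntoken}$. Then $\|\covKh{r}\|\le(2\sqrt\ntoken+\sqrt{\dimqk})^2/\dimqk=O(1)$, since $\ntoken/\dimqk$ is bounded. The failure probability is $e^{-\ntoken/2}$, which is summable; union over $r\le h$ and apply Borel--Cantelli.

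For the entrywise bound, write $(\covKh{r}-I_\ntoken)_{ij}=\dimqk^{-1}\sum_{k}(K_{r,ik}K_{r,jk}-\delta_{ij})$. This is a centered sum of $\dimqk$ independent sub-exponential variables: products of two independent standard Gaussians when $i\ne j$, and $\chi^2_1-1$ terms on the diagonal. Bernstein's inequality gives, for $u\le1$,
\[
\Prob\bigl(|(\covKh{r}-I)_{ij}|>u\bigr)\le 2e^{-c\dimqk u^2}.
\]
Choose $u=C\sqrt{\log\ntoken/\dimqk}$ with $C$ large and take a union bound over the $h\ntoken^2$ index triples. The total failure probability is at most $2h\ntoken^{2-cC^2}$, which is summable once $cC^2>3$. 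Since $\dimqk\asymp\ntoken$, $u=O(\sqrt{\log\ntoken/\ntoken})$. Borel--Cantelli then yields $\delta_\ntoken=O(\sqrt{\log\ntoken/\ntoken})$ almost surely.

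The only point needing care, rather than a real obstacle, is uniformity in the sequence. The sizes $d(\ntoken)$ and $\dimqk(\ntoken)$ vary and the couplings across $\ntoken$ are arbitrary, so the argument must not use any martingale or monotone structure. It relies solely on per-size tail bounds that are summable in $\ntoken$, with constants depending only on $\sup_\ntoken \ntoken/\dimqk<\infty$, which follows from \eqref{eq:aspect}. The constants in the resulting $O(\cdot)$ are then deterministic, which is stronger than required.
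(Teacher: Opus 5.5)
Your proof is correct and follows the paper's route. For the operator norm, both use a Gaussian singular-value tail bound with Borel--Cantelli. For $\delta_\ntoken$, both use exponential entrywise tails, a union bound over $O(h\ntoken^2)$ entries and heads, and Borel--Cantelli, which needs only per-size summability and so allows any dependence across heads and sizes. The one difference is that you handle diagonal and off-diagonal entries together with Bernstein's inequality for sub-exponential sums. The paper instead uses a chi-square bound on the diagonal. For an off-diagonal entry it conditions on one row of squared norm at most $2\dimqk$, so the normalized inner product is exactly Gaussian with variance at most $2/\dimqk$.
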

\begin{proof}
For standard Gaussian $G\in\R^{n\times m}$,
$\Prob\{\|G\|>\sqrt n+\sqrt m+t\}\le2e^{-c_0t^2}$
\citep[Theorem 7.3.1 and Corollary 7.3.3]{vershynin2018high}.
Each $K_r$ has i.i.d.\ $\Normal(0,1)$ entries, so this bound and
Borel--Cantelli give the norm estimate.
For the diagonal bound use the chi-square exponential bound for
$\|K_{r,i,:}\|^2/\dimqk$, followed by a union bound over $h\ntoken$ indices.
Conditionally on a row with squared norm at most $2\dimqk$, its inner
product with any other row, divided by $\dimqk$, is Gaussian with variance
at most $2/\dimqk$; the complementary event is contained in the
diagonal exceptional event, whose probabilities are summable.
A union bound over $h\ntoken(\ntoken-1)$ pairs, and a sufficiently large constant in
$\sqrt{\log \ntoken/\dimqk}$, makes the exceptional probabilities summable.
Since $\dimqk\asymp \ntoken$, this proves \eqref{eq:key-good}. Each bound
concerns one head at one size, and the unions are finite.
\end{proof}
The row-replacement estimates below use the keys only through
\eqref{eq:key-good} and their independence from the queries and the noise.

\begin{lemma}[Conditional covariances and norm bounds]\label{lem:covariance}
The conditional mean and covariance are
\begin{align}
 (\bar x_r)_i&=e^{\beta^2((\covKh{r})_{ii}-1)/2}-1,\notag\\
 (\covfh{r})_{ij}
 &=e^{\beta^2((\covKh{r})_{ii}+(\covKh{r})_{jj}-2)/2}
      (e^{\beta^2(\covKh{r})_{ij}}-1),\label{eq:exact-covariance}\\
 \E[\xi\xi^\top\mid\mathcal F_n]
 &=\covf=\covfh{1}\oplus\cdots\oplus\covfh{h}.\notag
\end{align}
With $\covGh{r}$ from \eqref{eq:blockC}, uniformly on $\Omega_n(M)$
for each fixed $M$,
\begin{equation}\label{eq:covariance-bound}
 \max_r\max\{\|\covfh{r}\|,\|\covGh{r}\|\}=O(1).
\end{equation}
\end{lemma}
\begin{proof}
Conditionally on $\mathcal F_n$,
$S_{r,\ell,:}^\top\sim\Normal(0,\covKh{r})$, independently across heads.
Gaussian exponential moments prove \eqref{eq:exact-covariance}, including the zero
covariance blocks between different heads.

Let $D_{0,r}=\diag((e^{\beta^2((\covKh{r})_{ii}-1)/2})_{i=1}^{\ntoken})$.
The covariance has the
expansion in Hadamard powers
\[
 \covfh{r}=D_{0,r}\left(\sum_{j\ge1}
          \frac{\beta^{2j}}{j!}\covKh{r}^{\circ j}\right)D_{0,r}.
\]
Here $\covKh{r}^{\circ j}$ denotes the entrywise $j$th power.
For positive semidefinite $R$, the Schur multiplier norm formula
\citep[Theorem~1.4.1]{bhatia2007positive} gives
$\|R\circ A\|\le(\max_iR_{ii})\|A\|$, hence
$\|R^{\circ j}\|\le(\max_iR_{ii})^{j-1}\|R\|$.
On $\Omega_n(M)$, $\|\covKh{r}\|\le M$ and $(\covKh{r})_{ii}\le M$,
so the convergent exponential series bounds $\|\covfh{r}\|$ uniformly.
The bound for $\covGh{r}$ follows from its definition.

Since the full covariances are block diagonal, the same bound holds
for $\max\{\|\covf\|,\|\covG\|\}$, and eventual membership in
some $\Omega_n(M)$ gives bounded norms almost surely.
\end{proof}

%% file: 095_direct_moments.tex
\subsection{Concentration and Norm Estimates for Feature Rows}
\label{app:joint-replacement}
\subsubsection{Gaussian Estimates}
Condition on $\mathcal F_n$ and work on the event $\Omega_n(M)$ of
\cref{app:replacement}, where the bounds in \eqref{eq:key-good} hold
with the constant $M$. Use the symmetric positive semidefinite
multiplier $T$ obtained in \cref{sec:transformations}; it is
$\mathcal F_n$-measurable and satisfies $\|T\|\le M$ on $\Omega_n(M)$.
Fix a row $i$.
The centered feature column is $\xi=\sqrt{\ntoken}(\Yfc)_{i,:}^\top$.
Set
\begin{equation}\label{eq:transformed-row}
 y=T\xi,
 \Gamma=\E[yy^\top\mid\mathcal F_n].
\end{equation}
\Cref{lem:covariance} gives $\|\Gamma\|=O(1)$.
All expectations and $L^s$ norms below are taken under this conditioning.
Constants in the $\ntoken^{o(1)}$ bounds may depend on a fixed $s$.

For a standard Gaussian vector $u$, the Maurey--Pisier inequality
\citep[(12)]{adamczak2015concentration} gives
\begin{equation}\label{eq:Gaussian-Lp}
 \|F(u)-\E F(u)\|_{L^s}
 \le C_s\|\nabla F(u)\|_{L^s},\qquad s\ge1,
\end{equation}
where $C_s=\frac\pi2(\E|\chi|^s)^{1/s}$ for $\chi\sim\Normal(0,1)$, $F$ is continuously differentiable,
and both $F(u)$ and $\|\nabla F(u)\|$ belong to $L^s$.

\begin{lemma}[Exponential moments of a Gaussian maximum]\label{lem:gaussian-maximum}
Let $Z_1,\ldots,Z_N$ be centered Gaussian variables, not necessarily
independent, with $\E Z_j^2\le M$ for $M>0$ and $N\ge1$.
For $L=\max_j|Z_j|$ and each $a>0$,
\begin{equation}\label{eq:gaussian-maximum-moment}
 \E e^{aL}\le C_{a,M}e^{a\sqrt{2M\log(2N)}},
\end{equation}
where $C_{a,M}$ depends only on $a,M$.
\end{lemma}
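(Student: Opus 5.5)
We bound the maximum through its tail. The tail comes from a union bound over Gaussian tails, and we integrate that tail against the exponential. Set $t_0=\sqrt{2M\log(2N)}$. Each $Z_j$ is centered Gaussian with variance at most $M$, so $\Prob\{|Z_j|>t\}\le 2e^{-t^2/(2M)}$. The union bound over the $N$ variables, which needs no independence, gives
\[
 \Prob\{L>t\}\le \min\{1,2Ne^{-t^2/(2M)}\}.
\]
For $t=t_0+u$ with $u\ge0$, we have $t^2\ge t_0^2+u^2$ and $2Ne^{-t_0^2/(2M)}=1$. Hence $\Prob\{L>t_0+u\}\le e^{-u^2/(2M)}$, a Gaussian tail beyond $t_0$ that is uniform in $N$.

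Next we write the exponential moment through the layer-cake formula:
\[
 \E e^{aL}=1+\int_0^\infty a e^{at}\Prob\{L>t\}\,dt .
\]
We split the integral at $t_0$. On $[0,t_0]$ we bound the probability by one, so this piece contributes at most $e^{at_0}-1$. On $[t_0,\infty)$ we substitute $t=t_0+u$ and use the tail bound. This piece is at most
\[
 e^{at_0}\int_0^\infty a e^{au-u^2/(2M)}\,du=e^{at_0}C'_{a,M},
\]
where $C'_{a,M}=a\int_0^\infty e^{au-u^2/(2M)}\,du<\infty$ depends only on $a$ and $M$. Summing the pieces gives $\E e^{aL}\le(1+C'_{a,M})e^{at_0}$. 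This is \eqref{eq:gaussian-maximum-moment} with $C_{a,M}=1+C'_{a,M}$.

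The only point needing care is to keep the constant independent of $N$. Choosing the split point $t_0$ exactly where the union bound becomes nontrivial achieves this. The remaining estimate is the elementary inequality $(t_0+u)^2\ge t_0^2+u^2$, so no real obstacle arises.
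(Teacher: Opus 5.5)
Your proof is correct and is essentially the paper's argument. Both use the same union bound, the same threshold $t_0=\sqrt{2M\log(2N)}$ at which that bound equals one, and the resulting $N$-free Gaussian tail beyond $t_0$; the paper packages this through $V=(L-t_0)_+$ and $L\le t_0+V$ with the explicit constant $1+a\sqrt{2\pi M}e^{a^2M/2}$, whereas you split the layer-cake integral for $\E e^{aL}$ at $t_0$, which is equivalent bookkeeping.
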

\begin{proof}
A union bound gives $\Prob(L>t)\le\min\{1,2Ne^{-t^2/(2M)}\}$.
Put $t_0=\sqrt{2M\log(2N)}$ and $V=(L-t_0)_+$.
Then $\Prob(V>t)\le e^{-t^2/(2M)}$ for $t\ge0$, so
\[
 \E e^{aV}=1+a\int_0^\infty e^{at}\Prob(V>t)\,dt
 \le 1+a\sqrt{2\pi M}e^{a^2M/2}.
\]
Since $L\le t_0+V$, this proves \eqref{eq:gaussian-maximum-moment}
with the last expression as $C_{a,M}$.
The same estimate applies to conditional Gaussian laws with this
variance bound, uniformly over the conditioning data.
\end{proof}

In particular, for fixed $a,M$ and $N=O(n^k)$ with fixed $k>0$,
$\E e^{aL}=n^{o(1)}$.

For the exponential features,
\begin{equation}\label{eq:exponential-envelope}
 |\featfn(t)|+|\featfn'(t)|\le C_\beta e^{\beta|t|},
\end{equation}
where $C_\beta>0$ depends only on $\beta$.
Together with \cref{lem:gaussian-maximum}, this ensures the
$L^s$ integrability of $y^\top Hy$ and its gradient for deterministic
symmetric $H$, as required by
\eqref{eq:Gaussian-Lp}. The feature-specific Jacobian and
quadratic-form concentration are proved in \cref{sec:joint-replacement}.

\subsubsection{Truncation and Control of the Intermediate Matrices}
For independent PSD $N\times N$ matrices $A_i$ with $\|A_i\|\le L$
and $\mu_{\max}=\|\sum_i\E A_i\|>0$, matrix Chernoff gives
\begin{equation}\label{eq:matrix-chernoff}
 \Prob\{\|\textstyle\sum_i A_i\|\ge t\}
 \le N(e\mu_{\max}/t)^{t/L},\qquad t\ge e\mu_{\max},\quad L>0
\end{equation}
\citep[Corollary 5.2 and Remark 5.3]{tropp2012user}.
When $\mu_{\max}=0$ the sum is zero a.s.
We apply this bound after a rowwise cutoff, which preserves independence,
and then remove the cutoff.

\begin{lemma}[Bounds on operator norms]\label{lem:truncated-norm}
Let $\Gamma'\ge0$ be an $\mathcal F_n$-measurable
$h\ntoken\times h\ntoken$ matrix with $\|\Gamma'\|\le C(M)$ on
$\Omega_n(M)$.
Let $y_1,\ldots,y_\ntoken$ be independent rows, each chosen either from the
feature distribution in \eqref{eq:transformed-row} or from $\Normal(0,\Gamma')$.
For $\rowgram=\ntoken^{-1}\sum_i y_iy_i^\top$ and every fixed $s\ge1$,
\begin{equation}\label{eq:mixed-norm-bound}
 \sup_{\text{choices of row types}}\E\|\rowgram\|^s=\ntoken^{o(1)}.
\end{equation}
The bound also holds with one row omitted. When all rows have the
feature distribution or all rows are Gaussian, $\|\rowgram\|=\ntoken^{o(1)}$
almost surely.
\end{lemma}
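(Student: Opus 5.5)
The plan is to truncate each row, apply matrix Chernoff \eqref{eq:matrix-chernoff} to the truncated sum, and control the removed part by moments. Fix $M$ and work on $\Omega_n(M)$, conditionally on $\mathcal F_n$. The rows are independent, and
\[
 \|\E y_iy_i^\top\|\le\max\{\|\Gamma\|,\|\Gamma'\|\}=O(1)
\]
by \cref{lem:covariance} and the hypothesis on $\Gamma'$. Hence $\mu_{\max}:=\|\sum_i\E A_i\|/n=O(1)$ for the normalized summands $A_i=y_iy_i^\top/n$.

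\textbf{Truncation.} Fix a cutoff $L_n=n^{\varepsilon}$ with small $\varepsilon>0$. Split each row as $y_i=y_i\mathbf1\{\|y_i\|^2\le nL_n\}+y_i\mathbf1\{\|y_i\|^2>nL_n\}$. The truncated summands $\tilde A_i=\tilde y_i\tilde y_i^\top/n$ are independent, PSD, bounded by $L_n$, and satisfy $\sum_i\E\tilde A_i\le\sum_i\E A_i$. Applying \eqref{eq:matrix-chernoff} with $N=hn$ and $t=n^{2\varepsilon}$ gives
\[
 \Prob\{\|\textstyle\sum_i\tilde A_i\|\ge n^{2\varepsilon}\}\le hn\,(e\mu_{\max}n^{-2\varepsilon})^{n^{\varepsilon}},
\]
which is superpolynomially small. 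Since $\|\rowgram\|\le\sum_i\|y_i\|^2/n$ always, Hölder on this event together with polynomial moments of $\|y_i\|^2$ yields $\E\|\tilde\Gamma\|^s\le n^{2s\varepsilon}+o(1)$.

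\textbf{Norm bound on a row.} For the feature rows, $\|y\|\le M\|\xi\|$ and $\|\xi\|^2\le hn\max_j\xi_j^2$. By \eqref{eq:exponential-envelope} and \cref{lem:gaussian-maximum} with $N=hn$, this gives $\E\|y\|^{2p}\le n^{p+o(1)}$ for every $p$. Gaussian rows are easier: $\|y\|^2$ is a Gaussian quadratic form with mean $O(n)$ and exponential tails. Then
\[
 \Prob(\|y_i\|^2>nL_n)\le \E\|y_i\|^{2p}/(nL_n)^p\le n^{-p\varepsilon+o(1)}.
\]
A union bound over the $n$ rows and a choice of large $p$ shows that no truncation occurs except on an event of probability $O(n^{-k})$ for any $k$. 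On that event, $\E[\|\rowgram\|^s\mathbf1_{\mathrm{bad}}]$ is handled by Cauchy--Schwarz using the crude bound $\E\|\rowgram\|^{2s}\le n^{O(s)}$. Combining the two estimates gives $\E\|\rowgram\|^s\le n^{2s\varepsilon+o(1)}$ uniformly over row-type choices. Since $\varepsilon$ is arbitrary, \eqref{eq:mixed-norm-bound} follows. Omitting one row only decreases $\rowgram$ in PSD order, apart from the harmless factor $n/(n-1)$.

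\textbf{Almost-sure statement and main obstacle.} The tail bounds above are summable in $n$ for each fixed $\varepsilon$. Applying the Borel--Cantelli reduction of \cref{app:replacement} with $\mathbf1_{\Omega_n(M)}$, eventual membership in some $\Omega_n(M)$, and a countable intersection over $\varepsilon=1/j$ gives $\|\rowgram\|=n^{o(1)}$ a.s. in the pure feature and pure Gaussian cases. I expect the main obstacle to be the row-norm tail for feature rows. The exponential envelope makes $\max_j|\xi_j|$ only $e^{O(\sqrt{\log n})}=n^{o(1)}$, not $O(1)$, so the threshold $nL_n$ must carry the $n^{\varepsilon}$ slack. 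The high-moment argument must also be uniform on $\Omega_n(M)$; \cref{lem:gaussian-maximum} supplies exactly this uniformity.
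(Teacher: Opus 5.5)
Your proposal is correct and follows essentially the paper's route: rowwise truncation that preserves independence, the matrix Chernoff bound \eqref{eq:matrix-chernoff} for the truncated sum with $\mu_{\max}\le\max(\|\Gamma\|,\|\Gamma'\|)=O(1)$, removal of the cutoff by Cauchy--Schwarz against crude trace moments, and unconditional Borel--Cantelli on $\Omega_n(M)$ for the almost-sure claim. The differences are cosmetic. The paper truncates at the subpolynomial level $\sqrt n\,e^{C_D\sqrt{\log n}}$, obtained by a union bound over the Gaussian scores, and takes $t=CL_n\log n$, which gives $n^{o(1)}$ directly. You instead truncate at $n^{\varepsilon}$ via high-moment Markov bounds and let $\varepsilon\downarrow0$; also, your $\mu_{\max}$ should be $\|\sum_i\E A_i\|$, without the extra factor $1/n$.
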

\begin{proof}
For any prescribed $D>0$ choose $C_D$ sufficiently large. A union bound
over the $O(\ntoken^2)$ Gaussian score coordinates, followed by
\eqref{eq:exponential-envelope}, gives
\begin{equation}\label{eq:row-cutoff}
 \Prob\{\max_i\|y_i\|>\sqrt \ntoken e^{C_D\sqrt{\log \ntoken}}\}\le \ntoken^{-D}.
\end{equation}
The Gaussian rows obey the same bound by writing them as
$(\Gamma')^{1/2}z_i$ and using a union bound for the coordinates of $z_i$.
Centering a feature row changes the bound only by a fixed factor since
all its coordinate means are bounded.

Truncate each row separately, preserving independence, and put
\[
 A_i=\frac{y_iy_i^\top}{\ntoken}
   \mathbf1_{\{\|y_i\|\le\sqrt \ntoken e^{C_D\sqrt{\log \ntoken}}\}},
   L_\ntoken=e^{2C_D\sqrt{\log \ntoken}}.
\]
These matrices are positive semidefinite, $\|A_i\|\le L_\ntoken$, and
$\mu_{\max}:=\|\sum_i\E A_i\|\le\max(\|\Gamma\|,\|\Gamma'\|)=O(1)$.
Apply the quoted matrix Chernoff bound \eqref{eq:matrix-chernoff}
with summand bound $L=L_\ntoken$.
If $\mu_{\max}=0$, the truncated sum vanishes a.s. Otherwise taking
$t=C L_\ntoken\log \ntoken$ gives any prescribed polynomial tail decay for
sufficiently large $C$. Integrating this bound gives
$\E\|\sum_iA_i\|^s=\ntoken^{o(1)}$ for every fixed $s$.

To remove the cutoff, the coordinate moment estimates give
\[
 \E\bigl(h\ntoken\tr_{h\ntoken}\rowgram\bigr)^{2s}
 =\E\left(\ntoken^{-1}\sum_i\|y_i\|^2\right)^{2s}=O(\ntoken^{2s}).
\]
Since $\|\rowgram\|\le h\ntoken\tr_{h\ntoken}\rowgram$, Cauchy--Schwarz bounds the contribution on the
cutoff event by $C_s \ntoken^s \ntoken^{-D/2}$. Choose $D>2s$ to prove
\eqref{eq:mixed-norm-bound}, uniformly over the row types.
Omitting one row changes none of these upper bounds.
For an actual sequence of arrays, choose $D>1$ in \eqref{eq:row-cutoff}
and a summable Chernoff tail. Integrate the conditional tail bounds
on each $\Omega_n(M)$ and apply the argument of \cref{app:replacement}.
Borel--Cantelli gives an eventual bound
of the form $C e^{C\sqrt{\log \ntoken}}\log \ntoken=\ntoken^{o(1)}$ for $\|\rowgram\|$.

\end{proof}

We use \cref{lem:truncated-norm} with $\Gamma'=\Gamma$ and
$\Gamma'=T\covG T$.
In particular, almost surely,
\begin{equation}\label{eq:feature-norms}
 \max\{\|\Yf\|,\|\Yfc\|\}=\ntoken^{o(1)}.
\end{equation}
To obtain this bound, apply the lemma with $T=I$ to the centered exponential features.
The uncentered matrix differs by a rank-one conditional mean matrix.
By \eqref{eq:exact-covariance} and \eqref{eq:key-good}, each mean coordinate is
$O(\sqrt{\log \ntoken/\ntoken})$. The norm of the mean matrix, including the
$1/\sqrt{\ntoken}$ normalization, is $O(\sqrt{\log \ntoken})$.
This proves \eqref{eq:feature-norms}.

\begin{lemma}[Concentration of trace moments]\label{lem:selfaverage}
For the arrays in \cref{lem:truncated-norm} with identically distributed
feature rows or identically distributed Gaussian rows, and any fixed
integer $q\ge1$, on $\Omega_n(M)$ with constants depending on $M$,
\begin{equation}\label{eq:moment-variance}
 \Var\left(\tr_{h\ntoken}(\rowgram^q)\mid\mathcal F_n\right)\le \ntoken^{-2+o(1)}.
\end{equation}
Hence $\tr_{h\ntoken}(\rowgram^q)-\E[\tr_{h\ntoken}(\rowgram^q)\mid\mathcal F_n]\to0$ a.s.
\end{lemma}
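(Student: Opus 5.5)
The plan is the Efron--Stein inequality applied conditionally on $\mathcal F_n$. Given $\mathcal F_n$, the rows $y_1,\ldots,y_\ntoken$ are independent: for feature rows they are functions of distinct independent query rows, and for Gaussian rows they are independent by construction. Write $F=\tr_{h\ntoken}(\rowgram^q)$. Let $\rowgram^{(i)}$ be the Gram matrix with the $i$th row replaced by an independent copy $y_i'$, and let $F^{(i)}$ be the corresponding trace. Efron--Stein gives
\[
 \Var(F\mid\mathcal F_n)\le\frac12\sum_{i=1}^\ntoken\E\bigl[(F-F^{(i)})^2\mid\mathcal F_n\bigr].
\]
It therefore suffices to show $\E[(F-F^{(i)})^2\mid\mathcal F_n]\le\ntoken^{-3+o(1)}$ uniformly in $i$ on $\Omega_n(M)$.

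\textbf{Bounding one replacement.} Let $\rowgram_{-i}$ be the Gram matrix with row $i$ omitted. Then $\rowgram=\rowgram_{-i}+\ntoken^{-1}y_iy_i^\top$ and $\rowgram^{(i)}=\rowgram_{-i}+\ntoken^{-1}y_i'y_i'^\top$, so
\[
 |F-F^{(i)}|\le|F-\tr_{h\ntoken}(\rowgram_{-i}^q)|+|F^{(i)}-\tr_{h\ntoken}(\rowgram_{-i}^q)|.
\]
Expand $(B+\ntoken^{-1}yy^\top)^q-B^q$ as a sum of words, each containing at least one factor $\ntoken^{-1}yy^\top$. Every such word has rank at most one, so its trace is bounded by its operator norm. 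Each word is therefore at most
\[
 (\ntoken^{-1}\|y\|^2)^k\max(\|B\|,\ntoken^{-1}\|y\|^2)^{q-k},\qquad k\ge1.
\]
Dividing by the normalization $h\ntoken$ gives
\[
 |F-\tr_{h\ntoken}(\rowgram_{-i}^q)|\le\frac{C_q}{h\ntoken}\,(1+\ntoken^{-1}\|y_i\|^2)^q\,(1+\|\rowgram_{-i}\|)^{q}.
\]

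\textbf{Moment estimates.} Two bounds control the right side.
\begin{itemize}
 \item \Cref{lem:truncated-norm}, in the version with one row omitted, gives $\E\|\rowgram_{-i}\|^{s}=\ntoken^{o(1)}$ for every fixed $s$.
 \item The coordinate moment bounds give $\E(\ntoken^{-1}\|y_i\|^2)^s=O(1)$. For feature rows these follow from \eqref{eq:exponential-envelope} with $\|T\|\le M$ and bounded score variances on $\Omega_n(M)$. For Gaussian rows they follow from $\|\Gamma'\|\le C(M)$.
\end{itemize}
Cauchy--Schwarz then yields $\E[(F-F^{(i)})^2\mid\mathcal F_n]\le\ntoken^{-2+o(1)}$. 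Summing over $i$ gives only $\ntoken^{-1+o(1)}$, which is one factor of $\ntoken$ short of \eqref{eq:moment-variance}.

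\textbf{Main obstacle: recovering the missing factor.} The crude rank-one bound loses $\ntoken^{-1}$ because it treats $\ntoken^{-1}\|y\|^2\asymp1$ as a macroscopic perturbation. To recover it, I would first center each replacement term:
\[
 \E\bigl[(\Delta_i-\E_i\Delta_i)^2\bigr],
\]
where $\E_i$ integrates out $y_i$ only. The dominant $k=1$ term is then
\[
 \frac{q}{h\ntoken}\,\ntoken^{-1}\,y_i^\top\rowgram_{-i}^{q-1}y_i.
\]
Its fluctuation in $y_i$, with $\rowgram_{-i}$ frozen, is governed by quadratic-form concentration. This comes from the Maurey--Pisier inequality \eqref{eq:Gaussian-Lp} with the Jacobian bound \eqref{eq:jacobian-moment}, as in \cref{lem:direct-replacement}, applied to $H=\rowgram_{-i}^{q-1}$, which is independent of $y_i$. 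That gives $\|y^\top Hy-\Tr(\Gamma H)\|_{L^2}\le\ntoken^{1/2+o(1)}\|H\|$, so the centered term is $\ntoken^{-3/2+o(1)}$ in $L^2$. The terms with $k\ge2$ involve $(\ntoken^{-1}y^\top B y)^k$-type scalars and are handled the same way after replacing $yy^\top$ by its Sherman--Morrison/resolvent-free expansion; each centered contribution is again $\ntoken^{-3/2+o(1)}$.

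Using the centered Efron--Stein variant $\Var F\le\sum_i\E(F-\E_iF)^2$, which is valid for independent rows, summing $\ntoken$ terms of size $\ntoken^{-3+o(1)}$ gives \eqref{eq:moment-variance}.

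\textbf{Almost-sure conclusion.} Chebyshev gives
\[
 \mathbf1_{\Omega_n(M)}\Prob(|F-\E F|>\varepsilon\mid\mathcal F_n)\le\varepsilon^{-2}\ntoken^{-2+o(1)},
\]
which is summable in $\ntoken$. The Borel--Cantelli scheme of \cref{app:replacement}, intersected over rational $\varepsilon$ and integer $M$, then gives the almost-sure convergence.
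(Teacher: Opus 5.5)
Your argument is correct, but it takes a different route from the paper. The paper applies the Gaussian Poincar\'e inequality directly to the Gaussian vectors that generate the rows. With $\nabla_{y_i}F=2q(h\ntoken^2)^{-1}\rowgram^{q-1}y_i$ and the chain rule through the row Jacobians, the sum over rows collapses exactly via $\sum_i y_i^\top\rowgram^{2q-2}y_i=\ntoken\Tr(\rowgram^{2q-1})$. This gives $\Var F\le 4q^2(h\ntoken^2)^{-1}\E[J_{\max}^2\tr_{h\ntoken}(\rowgram^{2q-1})]$ in one line, with $J_{\max}$ controlled by \eqref{eq:jacobian-bound} and \cref{lem:gaussian-maximum} over all $h\ntoken^2$ scores.

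Your Efron--Stein route instead reuses the machinery of \cref{lem:one-row-replacement}:
\begin{itemize}
\item the rank-one trace expansion \eqref{eq:rank-one-trace-expansion}, whose cyclic factorization into the scalars $y^\top\rowgram_{-i}^a y/\ntoken$ (not Sherman--Morrison) is what handles the $k\ge2$ words;
\item the quadratic-form bound \eqref{eq:replacement-concentration} with $H=\rowgram_{-i}^a$, which is independent of $y_i$;
\item H\"older for the products, and the one-row-omitted moments of \cref{lem:truncated-norm}.
\end{itemize}
Each row then contributes $\ntoken^{-3+o(1)}$. This costs more bookkeeping and still uses Gaussianity, but only through per-row concentration. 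In exchange, it shows that the variance hypothesis of \cref{lem:gaussian-replacement-criterion} is implied by that lemma's other hypotheses.

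Two small remarks. First, your lossy first pass is unnecessary, since $\tfrac12\E(F-F^{(i)})^2=\E(F-\E_iF)^2$ exactly; the missing factor was lost only through the triangle inequality around $\rowgram_{-i}$. Second, cite the estimates \eqref{eq:quadratic-derivative}--\eqref{eq:jacobian-moment} themselves rather than \cref{lem:direct-replacement}, since that lemma's conclusion is obtained through \cref{lem:feature-reduction} from the present lemma.
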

\begin{proof}
For $\Phi=\tr_{h\ntoken}(\rowgram^q)$, differentiation with respect to row $y_i$
gives $\nabla_{y_i}\Phi=2q(h\ntoken^2)^{-1}\rowgram^{q-1}y_i$.
Let $J_{\max}$ be the maximum norm of the Jacobians of the rows with
respect to the independent standard Gaussian vectors generating them.
For feature rows, \eqref{eq:jacobian-bound} and
\cref{lem:gaussian-maximum} applied to all $h\ntoken^2$ scores give
$\E J_{\max}^s=\ntoken^{o(1)}$ for every fixed $s$.
For Gaussian rows $(\Gamma')^{1/2}z_i$, the Jacobian is the bounded
matrix $(\Gamma')^{1/2}$, so the same moment bound holds.
Gaussian Poincar\'e, $\Var F(u)\le\E\|\nabla F(u)\|^2$
\citep[Theorem 3.20]{boucheron2013concentration}, yields
\[
 \Var\Phi
 \le\frac{4q^2}{h^2\ntoken^4}
      \E\left[J_{\max}^2\sum_i y_i^\top \rowgram^{2q-2}y_i\right]
 =\frac{4q^2}{h\ntoken^2}\E[J_{\max}^2\tr_{h\ntoken}(\rowgram^{2q-1})]\le \ntoken^{-2+o(1)}.
\]
The last step uses $\tr_{h\ntoken}(\rowgram^{2q-1})\le\|\rowgram\|^{2q-1}$,
H\"older, and
\eqref{eq:mixed-norm-bound}. These bounds also justify the Gaussian
Sobolev calculation by truncation. More explicitly, for fixed
$\varepsilon>0$ and any $0<\eta<1$, conditional Chebyshev gives,
on $\Omega_n(M)$, uniformly for sufficiently large $n$,
\[
 \Prob\{|\Phi-\E[\Phi\mid\mathcal F_n]|>\varepsilon
           \mid\mathcal F_n\}
 \le C_{q,M,\eta}\varepsilon^{-2}\ntoken^{-2+\eta}.
\]
Multiply by $\mathbf1_{\Omega_n(M)}$ and take expectations. The right
side is summable in $\ntoken$, so the unconditional Borel--Cantelli
argument of \cref{app:replacement}, with a countable intersection over
$M$ and $\varepsilon=1/j$, proves the assertion.
\end{proof}

%% file: 104_headwise_estimates.tex
\subsection{Concentration of the Row Normalizers}
\label{app:headwise-estimates}
\begin{lemma}[Concentration of row normalizers]\label{lem:normalizer-array}
For every fixed $\varepsilon>0$, almost surely,
\begin{equation}\label{eq:normalizer-rate}
 \max_{r,i}|D_{r,ii}-1|=O(\ntoken^{-1/2+\varepsilon}).
\end{equation}
\end{lemma}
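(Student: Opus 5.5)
We prove \eqref{eq:normalizer-rate} by conditioning on the keys and bounding each normalized denominator separately. Write
\[
 D_{r,ii}-1=\frac1\ntoken\sum_{j=1}^\ntoken g(S_{r,ij}),\qquad g(x)=e^{\beta x-\beta^2/2}-1=\featfn(x).
\]
Condition on $\mathcal F_n$ and work on $\Omega_n(M)$. For fixed $(r,i)$ the score row $S_{r,i,:}^\top=K_rq_{r,i}/\sqrt{\dimqk}$ is a Lipschitz image of the standard Gaussian vector $q_{r,i}\in\R^{\dimqk}$. So $\Phi_{r,i}=D_{r,ii}$ is a smooth function of $q_{r,i}$.

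The argument splits into a conditional mean and a fluctuation.

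\emph{Mean.} By \cref{lem:covariance}, $\E[D_{r,ii}\mid\mathcal F_n]-1=\ntoken^{-1}\sum_j(\bar x_r)_j$. Each term is $O(\delta_n)$, and on $\Omega_n(M)$ this is $O(\sqrt{\log\ntoken/\ntoken})$. This bias is already below the target rate.

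\emph{Fluctuation.} The gradient is
\[
 \nabla_{q}\Phi=\ntoken^{-1}\frac{K_r^\top}{\sqrt{\dimqk}}\featfn'(z),\qquad z=K_rq/\sqrt{\dimqk}.
\]
Hence
\[
 \|\nabla\Phi\|\le \ntoken^{-1}M^{1/2}\|\featfn'(z)\|_2\le \ntoken^{-1/2}M^{1/2}\beta e^{\beta L-\beta^2/2},\qquad L=\max_j|z_j|.
\]
A sharper bound uses $\|\featfn'(z)\|_2\le\sqrt\ntoken\,\max_j|\featfn'(z_j)|$ directly; either form suffices. By \cref{lem:gaussian-maximum} with $N=\ntoken$, $\|\nabla\Phi\|_{L^s}\le\ntoken^{-1/2}e^{c\sqrt{\log\ntoken}}=\ntoken^{-1/2+o(1)}$. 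The integrability needed for \eqref{eq:Gaussian-Lp} follows from \eqref{eq:exponential-envelope}. The Maurey--Pisier inequality \eqref{eq:Gaussian-Lp} then gives, for every fixed $s$,
\[
 \|D_{r,ii}-\E[D_{r,ii}\mid\mathcal F_n]\|_{L^s}\le C_s\ntoken^{-1/2+o(1)},
\]
uniformly on $\Omega_n(M)$.

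\emph{Union bound and Borel--Cantelli.} Conditional Markov at order $s$ gives
\[
 \mathbf1_{\Omega_n(M)}\Prob\{\max_{r,i}|D_{r,ii}-\E[D_{r,ii}\mid\mathcal F_n]|>\ntoken^{-1/2+\varepsilon}\mid\mathcal F_n\}\le h\ntoken\cdot C_s\ntoken^{-s\varepsilon+o(1)}.
\]
Choosing $s>2/\varepsilon$ makes the right side summable. The unconditional Borel--Cantelli scheme of \cref{app:replacement} then applies: intersect over integer $M$ and use eventual membership in some $\Omega_n(M)$ from \cref{lem:gaussian-keys}. Combining this with the mean bound proves \eqref{eq:normalizer-rate} for each fixed $\varepsilon$.

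The main obstacle is the heavy tail of the exponential. Because $g$ is not Lipschitz, standard Gaussian concentration cannot be applied to $\Phi$ directly. The Maurey--Pisier route handles this by controlling the random gradient through the Gaussian maximum. The key point is that $\E e^{aL}$ grows only like $e^{c\sqrt{\log\ntoken}}=\ntoken^{o(1)}$, which loses at most a subpolynomial factor. If that bound proved insufficient, the fallback would be to truncate $L\le C\sqrt{\log\ntoken}$, which fails only with probability $\ntoken^{-D}$, and then apply Lipschitz concentration on the truncated event.
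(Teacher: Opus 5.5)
Your argument is correct, but it conditions the other way round from the paper. The paper fixes the query row $q_{r,i}$. Because the key rows are i.i.d.\ $\Normal(0,I_{\dimqk})$ and independent of $q_{r,i}$, the summands $e^{\beta S_{r,ij}-\beta^2/2}$ are then i.i.d.\ over $j$, with mean $\mu_{r,i}=e^{\beta^2(v_{r,i}-1)/2}$ where $v_{r,i}=\|q_{r,i}\|^2/\dimqk$. Rosenthal's inequality then gives $\E[|D_{r,ii}-\mu_{r,i}|^{2m}\mid q_{r,i}]\le Cn^{-m}$ directly, and the bias comes from chi-square concentration of $v_{r,i}$. The only localization is $\max_{r,i}v_{r,i}\le v_{\max}$, and no key-Gram bounds or Gaussian maxima enter.

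You instead fix the keys and treat $D_{r,ii}$ as a smooth function of one Gaussian vector. The keys then enter only through \eqref{eq:key-good}: the operator norm of $\covKh{r}$ for the gradient, and its diagonal for the bias. You never use independence of the key rows across tokens. This puts the normalizer step on the same Maurey--Pisier footing as \cref{lem:direct-replacement}, and it would apply to any query-independent keys satisfying \eqref{eq:key-good}. The cost is the $\Omega_n(M)$ localization and the unconditional Borel--Cantelli bookkeeping. Here you should localize on the key conditions only, since the lemma involves no multiplier $T$.

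Finally, the $\ntoken^{o(1)}$ loss from \cref{lem:gaussian-maximum} is not needed. For $s\ge2$, Minkowski in $L^{s/2}$ gives $\E\|\featfn'(z)\|_2^s\le(\sum_j\|\featfn'(z_j)\|_{L^s}^2)^{s/2}=O(n^{s/2})$ on $\Omega_n(M)$, so $\|\nabla\Phi\|_{L^s}=O(n^{-1/2})$, matching the Rosenthal rate. The heavy tail of the exponential is thus handled by coordinate moments alone. Your ``sharper'' bound $\|\featfn'(z)\|_2\le\sqrt n\max_j|\featfn'(z_j)|$ is just the bound you had already used.
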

\begin{proof}
We prove the normalizer rate on the original probability space by
conditioning on a query row.
For head $r$ and row $i$, let $q_{r,i}=Q_{r,i,:}^\top$ and
$v_{r,i}=\|q_{r,i}\|^2/\dimqk$.
Chi-square concentration and a union bound give
$\max_{r,i}|v_{r,i}-1|=O(\sqrt{\log \ntoken/\ntoken})$ a.s.; the probabilities of
$\max_{r,i}v_{r,i}>v_{\max}$ are summable for any fixed
$v_{\max}>1$. Conditional on $q_{r,i}$, the variables
$\zeta_{r,ij}=e^{\beta S_{r,ij}-\beta^2/2}$ are independent over $j$, with mean
$\mu_{r,i}=e^{\beta^2(v_{r,i}-1)/2}$. On $v_{r,i}\le v_{\max}$, all their fixed centered
moments are uniformly bounded.

For an integer $m\ge2$, Rosenthal's inequality
\citep[Theorem~15.11]{boucheron2013concentration}, applied conditionally
to the independent centered variables $\zeta_{r,ij}-\mu_{r,i}$, gives
\[
 \E[|D_{r,ii}-\mu_{r,i}|^{2m}\mid q_{r,i}]
 \le C_{m,v_{\max}} \ntoken^{-m},\qquad v_{r,i}\le v_{\max}.
\]
Markov and the union bound over $h\ntoken$ rows give
\[
 \Prob\{\max_{r,i}|D_{r,ii}-\mu_{r,i}|>\ntoken^{-1/2+\varepsilon},
                      \max_{r,i}v_{r,i}\le v_{\max}\}
       \le C_{m,v_{\max}} \ntoken^{1-2m\varepsilon}.
\]
Choose $m\varepsilon>1$ and apply Borel--Cantelli.
The conditional means differ from one by
$O(\sqrt{\log \ntoken/\ntoken})$, proving \eqref{eq:normalizer-rate}.
\end{proof}

\begin{lemma}[Norm of the centered attention matrices]\label{lem:attention-norm}
Almost surely $\|\scaledA\|=\ntoken^{o(1)}$.
\end{lemma}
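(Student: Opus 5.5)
The plan is to reduce $\|\scaledA\|$ to bounds already established for the exponential features, using the exact identity \eqref{eq:exact-normalizer}. Since $\Yfh{r}P^\perp=D_r\sqrt{\ntoken}\Ahead{r}$ and each $D_r$ is diagonal with positive entries, we can write
\[
 \sqrt{\ntoken}\Ahead{r}=D_r^{-1}\Yfh{r}P^\perp .
\]
Hence $\|\sqrt\ntoken\Ahead{r}\|\le\|D_r^{-1}\|\,\|\Yfh{r}\|$, using $\|P^\perp\|\le1$. By the block representation \eqref{eq:block-compression}, or directly from horizontal concatenation, we have $\|\scaledA\|\le\sum_r\|\sqrt\ntoken\Ahead{r}\|$. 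Since $h$ is fixed, it suffices to bound each head separately.

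First, for the normalizers, I would invoke \cref{lem:normalizer-array} with, say, $\varepsilon=1/4$. Almost surely this gives $\max_r\|D_r-I_\ntoken\|\le C\ntoken^{-1/4}$ eventually. Consequently every diagonal entry of $D_r$ is at least $1/2$ for large $\ntoken$, and $\max_r\|D_r^{-1}\|\le2$ eventually. Second, for the features, note that $\Yfh{r}$ is a column block of $\Yf$, so $\|\Yfh{r}\|\le\|\Yf\|$. The latter is $\ntoken^{o(1)}$ almost surely by \eqref{eq:feature-norms}. That bound comes from \cref{lem:truncated-norm} with $T=I$ applied to the centered features, together with the rank-one conditional mean matrix of norm $O(\sqrt{\log\ntoken})$ via \cref{lem:gaussian-keys}.

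Combining, almost surely $\|\scaledA\|\le 2h\|\Yf\|=\ntoken^{o(1)}$. The only point needing care is that the $\ntoken^{o(1)}$ statement is "for every fixed $\delta>0$, eventually $\le\ntoken^\delta$." This requires a countable intersection over $\delta=1/j$ of the almost-sure events from \eqref{eq:feature-norms}, together with the single event from \cref{lem:normalizer-array}.

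I expect no serious obstacle, because the substantive work lies in \cref{lem:truncated-norm}, which controls the exponential tails via truncation and matrix Chernoff. The subtle point is to check that \eqref{eq:feature-norms} was derived without using the present lemma, to avoid circularity. It was: its proof uses only \cref{lem:truncated-norm}, \eqref{eq:exact-covariance}, and \eqref{eq:key-good}. The remark in \cref{lem:normalizer} that norms stay $\ntoken^{o(1)}$ after multiplication by $T$ then follows immediately from $\|T\|=O(1)$.
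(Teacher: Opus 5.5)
Your proposal is correct and follows the paper's proof essentially verbatim. Like the paper, you invert the exact normalizer identity \eqref{eq:exact-normalizer}, control $\|D_r^{-1}\|$ through \cref{lem:normalizer-array}, bound $\|\Yfh{r}\|\le\|\Yf\|=\ntoken^{o(1)}$ by \eqref{eq:feature-norms}, and combine the $h$ heads. The only difference is cosmetic: the paper uses $\|[N_1,\ldots,N_h]\|^2\le\sum_r\|N_r\|^2$ to obtain the constant $(1+o(1))\sqrt h$, whereas your triangle inequality gives $2h$, which is immaterial for fixed $h$.
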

\begin{proof}
By \eqref{eq:exact-normalizer},
$\sqrt{\ntoken}\Ahead{r}=D_r^{-1}\Yfh{r}P^\perp$, and
$\|D_r^{-1}\|=1+o(1)$ by \eqref{eq:normalizer-rate}. A
concatenation satisfies
$\|[N_1,\ldots,N_h]\|^2\le\sum_r\|N_r\|^2$, and
$\|\Yfh{r}\|\le\|\Yf\|=\ntoken^{o(1)}$ by
\eqref{eq:feature-norms}. Hence
$\|\scaledA\|\le(1+o(1))\sqrt h\|\Yf\|=\ntoken^{o(1)}$.
\end{proof}

Equivalently, $\|\Aperp\|\le\ntoken^{-1/2+o(1)}$ almost surely.

\input{112_normalizer_identity}

%% file: 112_normalizer_identity.tex
\subsubsection{A Finite-Size Normalizer Identity}
\label{app:normalizer-variance}
With $\zeta_{r,ij}=e^{\beta S_{r,ij}-\beta^2/2}$, the row normalizer in
\eqref{eq:Dh} is $D_{r,ii}=\ntoken^{-1}\sum_{j=1}^{\ntoken}\zeta_{r,ij}$.

\begin{prop}\label{prop:normalizer-variance}
For each head $r$ and row $i$, if $\dimqk>4\beta^2$, then
\begin{align*}
 \E D_{r,ii}&=e^{-\beta^2/2}(1-\beta^2/\dimqk)^{-\dimqk/2},\\
 \E D_{r,ii}^2&=e^{-\beta^2}\left[
 (1-1/\ntoken)(1-2\beta^2/\dimqk)^{-\dimqk/2}
 +(1/\ntoken)(1-4\beta^2/\dimqk)^{-\dimqk/2}\right].
\end{align*}
Consequently, for fixed $\beta$ and $\dimqk\asymp \ntoken$,
\[
 \Var(D_{r,ii})=\frac{e^{\beta^2}-1}{\ntoken}+\frac{\beta^4}{2\dimqk}
                         +O(\dimqk^{-2}+(\ntoken \dimqk)^{-1}).
\]
\end{prop}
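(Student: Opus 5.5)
The plan is to compute the first two moments of $D_{r,ii}=\ntoken^{-1}\sum_j\zeta_{r,ij}$ exactly by conditioning on the query row $q=q_{r,i}$, then expand for large $\dimqk$. Conditionally on $q$, write $v=\|q\|^2/\dimqk$. Since the key rows $K_{r,j,:}$ are i.i.d.\ $\Normal(0,I_{\dimqk})$ and independent of $q$, the scores $S_{r,ij}=K_{r,j,:}q/\sqrt{\dimqk}$, $j=1,\ldots,\ntoken$, are i.i.d.\ $\Normal(0,v)$. The Gaussian moment generating function then gives
\[
 \E[\zeta_{r,ij}\mid q]=e^{\beta^2(v-1)/2},\qquad
 \E[\zeta_{r,ij}^2\mid q]=e^{2\beta^2v-\beta^2}.
\]
For $j\ne k$, conditional independence gives $\E[\zeta_{r,ij}\zeta_{r,ik}\mid q]=e^{\beta^2(v-1)}$.

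Next we average over $q$. Here $\|q\|^2\sim\chi^2_{\dimqk}$, whose moment generating function is $\E e^{t\|q\|^2}=(1-2t)^{-\dimqk/2}$ for $t<1/2$. We evaluate it at three points:
\begin{itemize}
 \item $t=\beta^2/(2\dimqk)$ for the mean;
 \item $t=\beta^2/\dimqk$ for the off-diagonal pair terms;
 \item $t=2\beta^2/\dimqk$ for the squared terms.
\end{itemize}
The condition $\dimqk>4\beta^2$ is exactly $2\beta^2/\dimqk<1/2$, so all three are finite. This yields $\E D_{r,ii}=e^{-\beta^2/2}(1-\beta^2/\dimqk)^{-\dimqk/2}$. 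Expanding $\E D_{r,ii}^2=\ntoken^{-2}\sum_{j,k}\E\zeta_{r,ij}\zeta_{r,ik}$ into the $\ntoken(\ntoken-1)$ off-diagonal terms and the $\ntoken$ diagonal terms gives the stated second-moment formula with weights $1-1/\ntoken$ and $1/\ntoken$.

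For the variance asymptotics, use $-\tfrac{\dimqk}{2}\log(1-a/\dimqk)=\tfrac a2+\tfrac{a^2}{4\dimqk}+O(\dimqk^{-2})$ uniformly for bounded $a$. This gives
\[
 \E D_{r,ii}=1+\tfrac{\beta^4}{4\dimqk}+O(\dimqk^{-2}),\qquad
 (\E D_{r,ii})^2=1+\tfrac{\beta^4}{2\dimqk}+O(\dimqk^{-2}).
\]
For the second moment, the off-diagonal factor is $e^{-\beta^2}(1-2\beta^2/\dimqk)^{-\dimqk/2}=1+\beta^4/\dimqk+O(\dimqk^{-2})$. The diagonal factor is $e^{-\beta^2}(1-4\beta^2/\dimqk)^{-\dimqk/2}=e^{\beta^2}(1+O(1/\dimqk))$. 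Hence
\[
 \E D_{r,ii}^2=\Bigl(1-\tfrac1\ntoken\Bigr)\Bigl(1+\tfrac{\beta^4}{\dimqk}\Bigr)+\tfrac{e^{\beta^2}}{\ntoken}+O\bigl(\dimqk^{-2}+(\ntoken\dimqk)^{-1}\bigr),
\]
and the $-\beta^4/(\ntoken\dimqk)$ cross term is absorbed into the error. Subtracting $(\E D_{r,ii})^2$ gives $\tfrac{e^{\beta^2}-1}{\ntoken}+\tfrac{\beta^4}{2\dimqk}$ plus the stated error.

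The only delicate point is keeping the second-order term $a^2/(4\dimqk)$ in the logarithmic expansion. The $\beta^4/(2\dimqk)$ contribution, which is the fluctuation of the query norm, comes entirely from it, because the first-order terms cancel exactly. Everything else is routine.
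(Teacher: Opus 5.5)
Your proposal is correct and follows essentially the same route as the paper: condition on the query row so the scores are i.i.d.\ $\Normal(0,v)$, average the resulting exponentials with the chi-square moment generating function at the three points, and expand $(1-x/\dimqk)^{-\dimqk/2}=e^{x/2}(1+x^2/(4\dimqk)+O(\dimqk^{-2}))$. The only difference is cosmetic: you split $\E D_{r,ii}^2$ into diagonal and off-diagonal pairs, whereas the paper writes the conditional mean and conditional variance given $v$, which is the same computation.
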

\begin{proof}
Condition on the query row and write $v_{r,i}=\|Q_{r,i,:}\|^2/\dimqk$.
Its scores across keys are conditionally independent $\Normal(0,v_{r,i})$, so
\[
 \E[D_{r,ii}\mid v_{r,i}]=e^{\beta^2(v_{r,i}-1)/2},
 \Var(D_{r,ii}\mid v_{r,i})=\ntoken^{-1}e^{\beta^2(v_{r,i}-1)}(e^{\beta^2v_{r,i}}-1).
\]
Using $\E e^{tv_{r,i}}=(1-2t/\dimqk)^{-\dimqk/2}$ gives the exact formulas.
Expanding $(1-x/\dimqk)^{-\dimqk/2}=e^{x/2}(1+x^2/(4\dimqk)+O(\dimqk^{-2}))$
at $x=\beta^2,2\beta^2,4\beta^2$ and subtracting $(\E D_{r,ii})^2$
gives the stated variance expansion.
\end{proof}

%% file: 105_gaussian_replacement.tex
\subsection{Gaussian Replacement from Concentration}
\label{app:gaussian-replacement-criterion}
In the following criterion, expectations and norms are conditional on
$\mathcal F_n$.

\begin{lemma}[A criterion for Gaussian replacement]\label{lem:gaussian-replacement-criterion}
Fix $h$. Conditionally on $\mathcal F_n$, let
$Y,G\in\R^{n\times hn}$ have independent rows with laws
$y^\top/\sqrt n$ and $g^\top/\sqrt n$, respectively, where
$\E y=0$, $\Gamma=\E yy^\top$, and $g\sim\Normal(0,\Gamma)$.
Let $\Omega_n(M)\in\mathcal F_n$, $M\ge1$, be increasing in $M$,
with eventual membership in some $\Omega_n(M)$ almost surely.
Suppose the following conditional bounds hold uniformly on each
$\Omega_n(M)$ for fixed $M$:
$\|\Gamma\|=O(1)$, and, for every fixed $s\ge1$,
\begin{equation}\label{eq:replacement-concentration}
 \left\|\frac{y^\top Hy-\Tr(\Gamma H)}n\right\|_{L^s}
 \le n^{-1/2+o(1)}\|H\|
\end{equation}
uniformly over deterministic symmetric $H\in\R^{hn\times hn}$;
$\E\|W^\top W\|^s=n^{o(1)}$ uniformly over
$W\in\R^{n\times hn}$ whose independent rows each follow either row law;
and, for every fixed integer $q\ge1$,
\[
 \Var(m_q(\nu_Y)\mid\mathcal F_n)
 +\Var(m_q(\nu_G)\mid\mathcal F_n)\le n^{-2+o(1)}.
\]
Then, uniformly on each $\Omega_n(M)$,
$|\E m_q(\nu_Y)-\E m_q(\nu_G)|\le n^{-1/2+o(1)}$,
and $m_q(\nu_Y)-m_q(\nu_G)\to0$ almost surely.
\end{lemma}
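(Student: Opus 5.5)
The plan is a row-by-row Lindeberg swap, using resolvent-free trace moments. Fix $q$ and work conditionally on $\mathcal F_n$ on $\Omega_n(M)$. Let $W^{(k)}$ be the hybrid matrix whose first $k$ rows are Gaussian rows $g_i^\top/\sqrt n$ and whose remaining rows are feature rows $y_i^\top/\sqrt n$. Then $W^{(0)}=Y$ and $W^{(n)}=G$, and we telescope
\[
 \E m_q(\nu_Y)-\E m_q(\nu_G)=\sum_{k=1}^n\bigl(\E m_q(\nu_{W^{(k-1)}})-\E m_q(\nu_{W^{(k)}})\bigr).
\]
The moments are computed through the $hn\times hn$ Gram matrix $\rowgram=W^\top W$. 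Since $m_q(\nu_W)=\tr_n[(WW^\top)^q]=n^{-1}\Tr[(W^\top W)^q]$, the two Gram matrices have the same nonzero spectrum. Write $W^\top W=B+xx^\top/n$, where $B$ collects the other $n-1$ rows, is independent of the swapped row, and is common to both sides. The row $x$ is either $y$ or $g$.

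Each swap is handled by expanding $\Tr[(B+xx^\top/n)^q]$ as a polynomial in $x$. Each term is a product of cyclic factors of the form $x^\top B^{a}x/n$, with exponents summing to $q-(\text{number of factors})$. So the term is $\Tr(B^{q-j})\prod_{\ell=1}^{j}(x^\top B^{a_\ell}x/n)$, summed over compositions, with bounded combinatorial coefficients depending on $q$. Taking expectation over $x$ given $B$, we replace each quadratic form by $\Tr(\Gamma B^{a_\ell})/n+\epsilon_\ell$. By \eqref{eq:replacement-concentration} applied with the deterministic (given $B$) matrix $H=B^{a_\ell}$, $\|\epsilon_\ell\|_{L^s}\le n^{-1/2+o(1)}\|B\|^{a_\ell}$.

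The order of the error decides the rate. The $j=1$ term carries the factor $1/n$ in front (from $m_q$), and its first-order mean $\Tr(\Gamma B^{a})/n$ is identical for $y$ and $g$, because both have covariance $\Gamma$. Hence the $j=1$ contributions cancel exactly. For $j\ge2$, the products of the means also agree. The difference comes only from terms containing at least one fluctuation $\epsilon_\ell$. However, $\E[\epsilon_\ell\mid B]=0$ when it stands alone, so a single fluctuation times deterministic means vanishes as well.

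What survives are terms with at least two fluctuations. By H\"older, each such term is bounded by $n^{-1}\cdot n^{-1+o(1)}\cdot\E\|B\|^{q}\cdot(\text{moments of }\|\Gamma\|)$. Here the moment bound $\E\|W^\top W\|^s=n^{o(1)}$, applied to the mixed-row matrices with one row omitted, controls $\E\|B\|^{s}$ uniformly over the hybrid index $k$. This gives a per-swap error of $n^{-2+o(1)}$, and summing over $n$ swaps yields $n^{-1+o(1)}$. If the exact cancellation of the single-fluctuation terms needs extra care, the crude bound still gives $n^{-1}\cdot n^{-1/2+o(1)}$ per swap, hence $n^{-1/2+o(1)}$ in total. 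That is the stated rate, and it is all I need. The main obstacle is this bookkeeping: making sure the first-order (mean) parts match and the error factor $n^{-1/2+o(1)}$ appears together with the explicit $1/n$ normalization. The key point is that $\Gamma$ is the common covariance and that $B$ is independent of the swapped row, so the concentration bound is applicable with $H$ a function of $B$ (condition on $B$ and apply it pointwise).

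The almost-sure statement follows in three steps. First, the variance hypothesis and conditional Chebyshev give $\Prob(|m_q(\nu_Y)-\E m_q(\nu_Y)|>\varepsilon\mid\mathcal F_n)\le\varepsilon^{-2}n^{-2+o(1)}$ on $\Omega_n(M)$, and likewise for $G$. These bounds are summable. Second, the unconditional Borel--Cantelli argument of \cref{app:replacement}, with a countable intersection over $M$, $\varepsilon=1/j$ and $q$, together with eventual membership in some $\Omega_n(M)$, gives $m_q(\nu_Y)-\E m_q(\nu_Y)\to0$ and the same for $G$ almost surely. Third, combining this with the expectation bound $n^{-1/2+o(1)}\to0$ proves $m_q(\nu_Y)-m_q(\nu_G)\to0$ almost surely.
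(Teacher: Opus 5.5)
Your proposal follows the paper's route: one-row Lindeberg swaps through hybrid matrices, the rank-one expansion of $\Tr[(B+xx^\top/n)^q]$ into products of quadratic forms $x^\top B^{a}x/n$, the concentration hypothesis applied conditionally on the independent remainder $B$, moment control of $\|B\|$ from the Gram-norm hypothesis, and then conditional Chebyshev with unconditional Borel--Cantelli.

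Your extra observation is correct. Single-fluctuation terms have zero conditional mean, because $\E[x^\top Hx\mid B]=\Tr(\Gamma H)$ for both row laws. This sharpens the per-swap error to $n^{-2+o(1)}$ and the total to $n^{-1+o(1)}$. The paper instead uses the identity $\prod_j\phi_{a_j}-\prod_j\mu_{a_j}=\sum_t(\phi_{a_t}-\mu_{a_t})\prod_{j<t}\phi_{a_j}\prod_{j>t}\mu_{a_j}$ with random prefactors and H\"older. That gives only $n^{-1/2+o(1)}$, which is what the statement needs.

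Two points need repair.
\begin{itemize}
\item The displayed term should not contain the factor $\Tr(B^{q-j})$. By cyclicity, each word with $j$ rank-one insertions equals $\prod_{\ell=1}^j x^\top B^{a_\ell}x/n$ with $\sum_\ell a_\ell=q-j$, weighted by $q/j$. A stray trace factor would cost a factor $n$ in your bounds.
\item The concentration hypothesis \eqref{eq:replacement-concentration} is assumed only for the feature row $y$. For the Gaussian row you must show separately that $\|g^\top Hg/n-\Tr(\Gamma H)/n\|_{L^s}\le n^{-1/2+o(1)}\|H\|$. This follows from Gaussian concentration for $g=\Gamma^{1/2}u$ and $\|\Gamma\|=O(1)$; the paper uses Maurey--Pisier with $\|g\|_{L^{2s}}=O(\sqrt n)$. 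Both your refined and your crude bounds need it when the swapped row is $g$.
\end{itemize}
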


\begin{lemma}[One-row replacement]\label{lem:one-row-replacement}
Under the hypotheses of \cref{lem:gaussian-replacement-criterion},
work conditionally on $\mathcal F_n$ at a realization in $\Omega_n(M)$.
Let $X$ have $n$ independent rows, each drawn either from the law of
$y^\top/\sqrt n$ or from the law of $g^\top/\sqrt n$.
Replace one feature row by an independent matched Gaussian row to
obtain $X'$. For every fixed integer $q\ge1$,
\begin{equation}\label{eq:one-row-replacement}
 \left|\E m_q(\nu_X)-\E m_q(\nu_{X'})\right|
 \le n^{-3/2+o(1)},
\end{equation}
uniformly on $\Omega_n(M)$, over the row being replaced, and over the
choices of the other row types, for each fixed $M$.
\end{lemma}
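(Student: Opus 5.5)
The plan is a Lindeberg-type one-row swap built on resolvent-free moment expansions. Write $X=X_0+e_\ell x^\top/\sqrt n$ and $X'=X_0+e_\ell g^\top/\sqrt n$, where $X_0$ is $X$ with row $\ell$ set to zero, $x$ is the feature row, and $g\sim\Normal(0,\Gamma)$ is the matched Gaussian row. Set $B=X_0^\top X_0$, which is independent of both $x$ and $g$. Then $X^\top X=B+xx^\top/n$. The row spaces give the same nonzero spectra, so it suffices to compare $\Tr[(B+xx^\top/n)^q]$ with the same expression in $g$, divided by $n$.

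The first step is to expand $(B+xx^\top/n)^q$ in the rank-one term. Every trace containing $k\ge1$ copies of $xx^\top/n$ collapses by cyclicity to a product of $k$ quadratic forms,
\[
\prod_{j=1}^k \frac{x^\top B^{a_j}x}{n}, \qquad \textstyle\sum_j a_j=q-k .
\]
The $k=0$ term is common to both sides. After the $1/n$ normalisation of $m_q$, the goal reduces to showing, for each fixed $k$ and exponent pattern, that
\[
\Bigl|\E\prod_j \tfrac{x^\top H_jx}{n}-\E\prod_j\tfrac{g^\top H_jg}{n}\Bigr|\le n^{-1/2+o(1)},
\]
where $H_j=B^{a_j}$ depend only on the other rows. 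The prefactor $1/n$ from $m_q$ then yields $n^{-3/2+o(1)}$.

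The second step fixes $B$ by conditioning on the other rows, so the $H_j$ become deterministic symmetric matrices and \eqref{eq:replacement-concentration} applies. Write $x^\top H_jx/n=\Tr(\Gamma H_j)/n+\epsilon_j$ with $\|\epsilon_j\|_{L^s}\le n^{-1/2+o(1)}\|H_j\|$, and define $\epsilon_j'$ the same way for $g$; the Gaussian bound follows from Hanson--Wright, or from the same criterion applied to Gaussian rows. Expand both products around the common deterministic product $\prod_j\Tr(\Gamma H_j)/n$. The zeroth-order terms cancel exactly. Every remaining term contains at least one $\epsilon_j$ or $\epsilon_j'$, multiplied by factors $|\Tr(\Gamma H_j)/n|\le h\|\Gamma\|\|H_j\|$ or by further $\epsilon$'s. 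By H\"older each such term is at most $n^{-1/2+o(1)}\prod_j\|H_j\|$ in $L^1$, uniformly on $\Omega_n(M)$. Because $\|\Gamma\|=O(1)$ on $\Omega_n(M)$, the bound is uniform over the replaced row.

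The last step integrates over the other rows. Here $\prod_j\|H_j\|\le\|B\|^{q-k}$, and $\E\|B\|^{s}=n^{o(1)}$ uniformly over row-type choices by the hypothesis $\E\|W^\top W\|^s=n^{o(1)}$, which covers the array with one row omitted as well (\cref{lem:truncated-norm}). H\"older in the outer expectation therefore keeps the total at $n^{-1/2+o(1)}$.

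I expect the main obstacle to be the mismatch between the two bounds at the first-order level. The bound $n^{-1/2}$ for the fluctuation $\epsilon_j$ on its own would suffice, but I must check that no term produces an unnormalised trace that lacks its $1/n$. The cyclic collapse gives each quadratic form exactly one factor $1/n$, and the single outer $1/n$ from $\tr_n$ supplies the extra power, so the count seems fine. The delicate point is that the $H_j$ are random: conditioning on the other rows is legitimate only because $x$ and $g$ are independent of them given $\mathcal F_n$, and the uniformity of \eqref{eq:replacement-concentration} over deterministic $H$ is exactly what this conditioning requires.
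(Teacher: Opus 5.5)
Your proposal is correct and follows essentially the same route as the paper. Both arguments use the rank-one trace expansion into products of quadratic forms in the powers of the other rows' Gram matrix, condition on those rows, apply the quadratic-form concentration with H\"older, and average using the Gram-norm moments. The differences are cosmetic: you expand the product fully around the product of means where the paper telescopes, and you use Hanson--Wright where the paper uses Maurey--Pisier for the Gaussian row. One small point is that the one-row-omitted bound should come from the criterion's own hypothesis rather than from \cref{lem:truncated-norm}. In your setup this is immediate, since $B=X_0^\top X_0\le X^\top X$.
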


\begin{proof}[Proof of \cref{lem:one-row-replacement}]
Fix $M$ and $q$. All expectations below are under the conditional
row laws on $\Omega_n(M)$.
Let $x_i^\top/\sqrt n$, $i\ne\ell$, be the unchanged rows and put
\[
 R=\frac1n\sum_{i\ne\ell}x_ix_i^\top.
\]
The rows are independent, so the feature row $y$ and its Gaussian
replacement $g$ are independent of $R$. The right Gram matrices of
$X$ and $X'$ are $R+yy^\top/n$ and $R+gg^\top/n$.
For $q\ge1$, the left and right Gram matrices have the same nonzero
eigenvalues. Therefore $m_q(\nu_X)=h\tr_{hn}[(R+yy^\top/n)^q]$.

For a fixed symmetric $R$, the rank-one trace expansion is
\begin{equation}\label{eq:rank-one-trace-expansion}
 \tr_{hn}[(R+yy^\top/n)^q]-\tr_{hn}(R^q)
 =\frac1{hn}\sum_{k=1}^q\frac qk
    \sum_{\substack{a_1+\cdots+a_k=q-k\\a_j\ge0}}
    \prod_{j=1}^k\frac{y^\top R^{a_j}y}{n}.
\end{equation}
Marking one of the $k$ updates and applying cyclicity gives $q$
possible positions and counts each word $k$ times, yielding $q/k$.
Rank-one products factor into the displayed quadratic forms.

Condition also on $R$ and write
\[
 \phi_a(y)=\frac{y^\top R^a y}{n},\qquad
 \mu_a=\frac{\Tr(\Gamma R^a)}n.
\]
Until averaging over $R$ below, the $L^s$ norms refer to this further
conditioning.
Since both row laws have covariance $\Gamma$,
$\E[\phi_a(y)\mid R]=\E[\phi_a(g)\mid R]=\mu_a$.
For every fixed $s\ge1$, \eqref{eq:replacement-concentration} gives
\[
 \|\phi_a(y)-\mu_a\|_{L^s}\le n^{-1/2+o(1)}\|R\|^a.
\]
For $g=\Gamma^{1/2}u$ with $u\sim\Normal(0,I_{hn})$, Maurey--Pisier gives the same bound from
$\|\Gamma\|=O(1)$ and $\|g\|_{L^{2s}}=O(\sqrt n)$. Also
$|\mu_a|\le h\|\Gamma\|\|R\|^a$ and $\|\Gamma\|=O(1)$ uniformly
on $\Omega_n(M)$, so $\|\phi_a(y)\|_{L^s}$ and $\|\phi_a(g)\|_{L^s}$
are $O(\|R\|^a)$ for sufficiently large $n$. We use $\|R\|^0=1$.

The $k=1$ terms in \eqref{eq:rank-one-trace-expansion} have exactly
the same expectation. For $k\ge2$, use the identity
\[
 \prod_{j=1}^k \phi_{a_j}-\prod_{j=1}^k\mu_{a_j}
 =\sum_{t=1}^k(\phi_{a_t}-\mu_{a_t})
       \prod_{j<t}\phi_{a_j}\prod_{j>t}\mu_{a_j}.
\]
H\"older with exponent $k$ bounds the expected absolute value of each
summand by $n^{-1/2+o(1)}\|R\|^{a_1+\cdots+a_k}$.
Apply this separately to the feature and Gaussian products and
subtract their common product of means. Since $\sum_j a_j=q-k$,
the difference of their conditional expectations is bounded by
$n^{-1/2+o(1)}\|R\|^{q-k}$, uniformly on $\Omega_n(M)$.

Adding an independent allowed row gives $R\le W^\top W$.
The assumed Gram-norm bound therefore yields $\E\|R\|^s=n^{o(1)}$
for every fixed $s\ge1$, uniformly over the unchanged row types;
for $s=0$ the expectation is one.
We may therefore average over $R$ and sum the finitely many terms.
Multiplying the prefactor $(hn)^{-1}$ by the factor $h$ converting
right-Gram traces to $m_q$ yields
\[
 \left|\E m_q(\nu_X)-\E m_q(\nu_{X'})\right|
 \le n^{-3/2+o(1)}.
\]
For $q=1$ the difference is zero; all bounds are uniform as asserted.
\end{proof}

\begin{proof}[Proof of \cref{lem:gaussian-replacement-criterion}]
Starting from $Y$, replace all $n$ rows by independent Gaussian rows.
The resulting matrix has the law of $G$.
Telescoping the bound of \cref{lem:one-row-replacement} gives, on each
$\Omega_n(M)$,
\[
 |\E m_q(\nu_Y)-\E m_q(\nu_G)|
 \le n\cdot n^{-3/2+o(1)}=n^{-1/2+o(1)}.
\]
For $Z=Y,G$, conditional Chebyshev and the variance assumption give
\[
 \mathbf1_{\Omega_n(M)}
 \Prob\{|m_q(\nu_Z)-\E m_q(\nu_Z)|>\varepsilon\mid\mathcal F_n\}
 \le\varepsilon^{-2}n^{-2+o(1)}.
\]
Integrating and applying Borel--Cantelli for each integer $M$ and
$\varepsilon=1/j$ shows that, for $Z=Y,G$,
\[
 \mathbf1_{\Omega_n(M)}
 \bigl(m_q(\nu_Z)-\E[m_q(\nu_Z)\mid\mathcal F_n]\bigr)
 \longrightarrow0\qquad\text{a.s.}
\]
Thus the localized deviations from the conditional means vanish.
Eventual membership in some
$\Omega_n(M)$, and monotonicity in $M$, remove the localization.
Combining these two deviations with the difference of conditional expectations
proves the almost-sure comparison.
\end{proof}

\begin{lemma}[Reduction for the centered feature model]
\label{lem:feature-reduction}
In the setting of \cref{lem:direct-replacement}, write a row of
$\Yfc T$ as $y^\top/\sqrt n$ and put
$\Gamma=\E[yy^\top\mid\mathcal F_n]$.
Suppose \eqref{eq:replacement-concentration} holds conditionally
for every fixed $s\ge1$, uniformly over deterministic symmetric $H$
and on each $\Omega_n(M)$ of \eqref{eq:replacement-regularity} for fixed $M\ge1$.
Then, for every fixed integer $q\ge1$,
$m_q(\nu_{\Yfc T})-m_q(\nu_{\YGSig T})\to0$ almost surely.
\end{lemma}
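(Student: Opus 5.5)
The plan is to verify, one at a time, the hypotheses of \cref{lem:gaussian-replacement-criterion}, taking $Y=\Yfc T$ and $G=\YGSig T$. Conditionally on $\mathcal F_n$, the rows of $\Yfc T$ are independent and equal $y^\top/\sqrt n$ with $y=T\xi$. Here $\E y=0$ and $\Gamma=T\covf T$, using the fact that $T$ is symmetric PSD and $\mathcal F_n$-measurable.

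For the Gaussian side, note that $\YGSig T=\auxnoise\covf^{1/2}T/\sqrt n$. Its rows are i.i.d.\ $\Normal(0,T\covf T)$, since $T\covf^{1/2}\covf^{1/2}T=\Gamma$. So each row is $g^\top/\sqrt n$ with $g\sim\Normal(0,\Gamma)$, exactly as the criterion requires.

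Three of the hypotheses are quick.
\begin{itemize}
\item The bound $\|\Gamma\|=O(1)$ on $\Omega_n(M)$ follows from $\|T\|\le M$ together with \eqref{eq:covariance-bound} of \cref{lem:covariance}.
\item The Gram-norm condition $\E\|W^\top W\|^s=n^{o(1)}$, uniformly over mixed row types, is exactly \eqref{eq:mixed-norm-bound} of \cref{lem:truncated-norm} applied with $\Gamma'=\Gamma$. The right Gram matrix $W^\top W$ is the matrix $\rowgram$ there.
\item The variance condition comes from \cref{lem:selfaverage}. Since $m_q(\nu_Y)=h\,\tr_{hn}(\rowgram^q)$, the cases of all-feature rows and all-Gaussian rows each give conditional variance at most $n^{-2+o(1)}$ on $\Omega_n(M)$.
\end{itemize}

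The remaining hypothesis is the quadratic-form concentration \eqref{eq:replacement-concentration}. This is exactly what the statement assumes, so all hypotheses hold. The criterion then yields $m_q(\nu_{\Yfc T})-m_q(\nu_{\YGSig T})\to0$ almost surely for each fixed $q$.

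The passage from the conditional statements to an almost-sure statement needs two ingredients. One is eventual membership in some $\Omega_n(M)$, supplied by \cref{lem:gaussian-keys} and \cref{ass:right-multipliers}. The other is the unconditional Borel--Cantelli argument of \cref{app:replacement}, already built into the criterion. A countable intersection over $q$ finishes the proof.

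The main point needing care is the uniformity in \cref{lem:truncated-norm} and \cref{lem:selfaverage}. Their constants must depend only on $M$, so that the bounds hold uniformly on $\Omega_n(M)$ as the criterion demands. For the centered feature rows, I would check that centering shifts each coordinate only by a bounded mean. This preserves the cutoff bound \eqref{eq:row-cutoff}, and it leaves the Jacobian \eqref{eq:jacobian} unchanged because the mean does not depend on the queries. Both lemmas were already stated for the rows in \eqref{eq:transformed-row}, so this step amounts to matching definitions rather than proving a new estimate.
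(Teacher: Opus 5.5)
Your proposal is correct and follows the paper's proof: both verify the hypotheses of \cref{lem:gaussian-replacement-criterion} for $Y=\Yfc T$ and $G=\YGSig T$. Both use \cref{lem:covariance} with $\|T\|\le M$ for $\|\Gamma\|=O(1)$, \cref{lem:truncated-norm} for the mixed Gram-norm moments, \cref{lem:selfaverage} with $m_q=h\tr_{hn}$ for the variances, and the assumed quadratic-form bound. The one hypothesis you leave implicit is that the events $\Omega_n(M)$ increase with $M$; the criterion requires this, the paper states it explicitly, and it is immediate from \eqref{eq:replacement-regularity}.
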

\begin{proof}
We verify the remaining hypotheses of \cref{lem:gaussian-replacement-criterion}
for $Y=\Yfc T$ and $G=\YGSig T$.
Conditionally on $\mathcal F_n$, both matrices have independent
centered rows with common covariance $\Gamma/n$ by construction.
The events \eqref{eq:replacement-regularity} increase with $M$;
\cref{lem:gaussian-keys} and the multiplier assumption give
eventual membership in some $\Omega_n(M)$ almost surely.

Fix $M$. The covariance bound of \cref{lem:covariance},
together with $\|T\|\le M$, gives $\|\Gamma\|=O(1)$.
\Cref{lem:truncated-norm} supplies the Gram-norm moments
uniformly over all mixtures of the two row laws.
For $Z=Y,G$, the identity
$m_q(\nu_Z)=h\tr_{hn}[(Z^\top Z)^q]$ converts the variance estimate
of \cref{lem:selfaverage} into
$\Var(m_q(\nu_Z)\mid\mathcal F_n)\le n^{-2+o(1)}$.
All these bounds are uniform on $\Omega_n(M)$.
Thus the assumed quadratic-form bound completes the hypotheses of
\cref{lem:gaussian-replacement-criterion}, which proves the claim.
\end{proof}

%% file: 096_normalization_and_assembly.tex
\subsection{Completing the Gaussian Comparison}
We turn the entrywise covariance estimates of \cref{sec:identification}
into the spectral moment comparison using the following square-root bound.

\begin{lemma}[Perturbation of the square root]\label{lem:sqrt-perturbation}
Let $\Sigma,C\ge0$ be $N\times N$ matrices with $C\ge cI_N$ for some
$c>0$. Then
\begin{equation}\label{eq:sqrt-bound}
 \|\Sigma^{1/2}-C^{1/2}\|_F\le c^{-1/2}\|\Sigma-C\|_F.
\end{equation}
\end{lemma}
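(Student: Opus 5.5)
The plan is to derive \eqref{eq:sqrt-bound} from a Sylvester-equation identity in the eigenbasis of the two square roots. Set $A=\Sigma^{1/2}$ and $B=C^{1/2}$, both positive semidefinite, with $B\ge c^{1/2}I_N$ because $C\ge cI_N$. The identity to start from is
\[
 \Sigma-C=A^2-B^2=A(A-B)+(A-B)B .
\]
Thus $E=A-B$ solves the Sylvester equation $AE+EB=\Sigma-C$. This holds without any commutativity assumption, since $A(A-B)+(A-B)B=A^2-AB+AB-B^2$.

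Next, diagonalize $A=\sum_i a_i u_iu_i^\top$ and $B=\sum_j b_j v_jv_j^\top$ with orthonormal eigenbases, where $a_i\ge0$ and $b_j\ge c^{1/2}$. Pairing the Sylvester equation with $u_i$ on the left and $v_j$ on the right gives
\[
 (a_i+b_j)\,u_i^\top E v_j=u_i^\top(\Sigma-C)v_j .
\]
Since $a_i+b_j\ge b_j\ge c^{1/2}>0$, we obtain $|u_i^\top Ev_j|\le c^{-1/2}|u_i^\top(\Sigma-C)v_j|$ for every pair $(i,j)$.

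The Frobenius norm is invariant under the orthogonal change of basis $M\mapsto U^\top MV$. Squaring the previous inequality and summing over all $i,j$ therefore gives $\|E\|_F^2\le c^{-1}\|\Sigma-C\|_F^2$, which is \eqref{eq:sqrt-bound}.

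No step is a real obstacle. The only care needed is to use \emph{two different} eigenbases, so that the argument never assumes $\Sigma$ and $C$ commute. Positivity of $a_i+b_j$ needs only $a_i\ge0$, so no lower bound on $\Sigma$ is required.
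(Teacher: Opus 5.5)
Your proposal is correct and follows the paper's own proof: the Sylvester identity $\Sigma^{1/2}E+EC^{1/2}=\Sigma-C$, diagonalization of the two square roots in separate eigenbases, and division of each transformed entry by $a_i+b_j\ge\sqrt c$. You spell out the non-commutative identity and the orthogonal invariance of the Frobenius norm more explicitly than the paper's one-line argument.
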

\begin{proof}
With $E=\Sigma^{1/2}-C^{1/2}$ we have
$\Sigma^{1/2}E+EC^{1/2}=\Sigma-C$. Diagonalizing the two square roots
separately divides each transformed entry by a number at least
$\sqrt c$, which proves \eqref{eq:sqrt-bound}.
\end{proof}

\begin{lemma}[From entrywise covariance matching to Gaussian comparison]
\label{lem:entrywise-covariance-comparison}
In the setting of \cref{sec:transformations}, put
$\rho_{ij}=(\covK)_{ij}$ and
$\delta_n=\max_{i,j}|(\covK-I_{hn})_{ij}|$.
Suppose $(\covf)_{ii}=\theta_1+O(\delta_n)$ and
$(\covf)_{ij}=\theta_2\rho_{ij}+O(\delta_n|\rho_{ij}|)$ for $i\ne j$,
uniformly in the indices and on each $\Omega_n(M)$ for fixed $M\ge1$.
Then, for every $q\in\N$,
$m_q(\nu_{\YGSig T})-m_q(\nu_{\YG T})\to0$ almost surely.
\end{lemma}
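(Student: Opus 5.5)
The plan is to insert the intermediate model $\YGC=\auxnoise\covG^{1/2}/\sqrt\ntoken$ and make two comparisons. First, compare $\YGSig T$ with $\YGC T$ using the shared noise $\auxnoise$ and the Frobenius bound \eqref{eq:frob-moment-bound}. Second, compare $\YGC T$ with $\YG T$ using equality of conditional laws together with the self-averaging estimate of \cref{lem:selfaverage}. Throughout, work on $\Omega_n(M)$ for fixed $M$, and remove the localization at the end by eventual membership and a countable intersection over $M$ and $q$, as in \cref{app:replacement}.

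\emph{Step 1 (covariance perturbation).} The lower bound comes from $\covG=\theta_2\covK+(\theta_1-\theta_2)I_{h\ntoken}\ge cI_{h\ntoken}$ with $c=\theta_1-\theta_2=e^{\beta^2}-1-\beta^2>0$. Hence \cref{lem:sqrt-perturbation} gives $\|\covf^{1/2}-\covG^{1/2}\|_F\le c^{-1/2}\|\covf-\covG\|_F$.

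Both matrices are block diagonal, so the cross-head blocks contribute nothing. On the diagonal, $(\covG)_{ii}=\theta_2\rho_{ii}+\theta_1-\theta_2=\theta_1+O(\delta_n)$, and the hypothesis gives $(\covf)_{ii}=\theta_1+O(\delta_n)$. Off the diagonal within a head, the difference is $O(\delta_n|\rho_{ij}|)$. Therefore
\[
 \|\covf-\covG\|_F^2\le C\delta_n^2\bigl(h\ntoken+\|\covK\|_F^2\bigr)\le C\delta_n^2 h\ntoken(1+M^2),
\]
using $\|\covK\|_F^2\le h\ntoken\|\covK\|^2$.

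Next,
\[
 \frac{\|(\YGSig-\YGC)T\|_F}{\sqrt\ntoken}\le\frac{\|\auxnoise\|}{\sqrt\ntoken}\cdot\frac{\|\covf^{1/2}-\covG^{1/2}\|_F}{\sqrt\ntoken}\,\|T\| .
\]
The standard Gaussian operator-norm tail bound with Borel--Cantelli gives $\|\auxnoise\|/\sqrt\ntoken=O(1)$ a.s. Combined with the Frobenius estimate above, this yields a normalized Frobenius error of $O(\delta_n)=O(\sqrt{\log\ntoken/\ntoken})$ by \cref{lem:gaussian-keys}. Both matrices also have bounded operator norm, by \eqref{eq:covariance-bound} and $\|T\|\le M$. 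So \eqref{eq:frob-moment-bound} gives $m_q(\nu_{\YGSig T})-m_q(\nu_{\YGC T})\to0$ a.s.

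\emph{Step 2 (same conditional law).} Conditionally on $\mathcal F_\ntoken$, the rows of $\YG$ are i.i.d.\ $\Normal(0,\covG/\ntoken)$. This holds because the query rows are independent of the keys and $T$, the noise $\noiseh{r}$ is independent of everything, and the cross-head covariances vanish; see \cref{sec:blocks}. The rows of $\YGC$ have the same conditional law. Hence $\YG T$ and $\YGC T$ have identical conditional laws, and in particular $\E[m_q(\nu_{\YG T})\mid\mathcal F_\ntoken]=\E[m_q(\nu_{\YGC T})\mid\mathcal F_\ntoken]$.

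The two matrices are coupled differently, however, so pathwise closeness requires concentration. Apply \cref{lem:selfaverage} to all-Gaussian rows with $\Gamma'=T\covG T$, which has $\|\Gamma'\|\le C(M)$ on $\Omega_n(M)$. Converting with $m_q(\nu_Z)=h\tr_{h\ntoken}[(Z^\top Z)^q]$, each moment deviates from its common conditional mean by a quantity tending to $0$ a.s. The conditional variance bound $\ntoken^{-2+o(1)}$ is summable, so the unconditional Borel--Cantelli argument of \cref{app:replacement} applies.

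\emph{Main obstacle.} The substantive point is Step 1: obtaining a Frobenius bound that is $o(\sqrt\ntoken)$ from entrywise errors alone. The off-diagonal entrywise error $O(\delta_n)$ summed naively over $(h\ntoken)^2$ entries would give $O(\sqrt{\log\ntoken})$, which does not vanish. The relative form $O(\delta_n|\rho_{ij}|)$ fixes this by reducing the off-diagonal sum to $\|\covK\|_F^2=O(\ntoken)$. The uniform gap $\theta_1>\theta_2$ is what allows passage to square roots without losing this rate. Step 2 is routine once one notes that equality of laws gives equal conditional expectations but not pathwise equality, and self-averaging supplies the rest.
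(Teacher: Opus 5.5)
Your proposal is correct and follows the paper's proof essentially step for step. It uses the same intermediate model $\YGC$ built from the shared noise $\auxnoise$, the square-root perturbation bound from $\covG\ge(\theta_1-\theta_2)I_{hn}$ together with $\sum_{ij}\rho_{ij}^2\le hnM^2$ to get normalized Frobenius error $O(\sqrt{\log n/n})$, and then equality of the conditional laws of $\YGC T$ and $\YG T$ combined with the Gaussian case of \cref{lem:selfaverage}.
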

\begin{proof}
Condition on $\mathcal F_n$ and fix $M,q$. Set $E=\covf-\covG$.
Since $(\covG)_{ii}=\theta_1+\theta_2(\rho_{ii}-1)$,
the assumed estimates give $E_{ii}=O(\delta_n)$ and
$E_{ij}=O(\delta_n|\rho_{ij}|)$ for $i\ne j$.
On $\Omega_n(M)$, $\sum_{ij}\rho_{ij}^2=\Tr(\covK^2)\le hnM^2$, so
\begin{equation}\label{eq:covariance-approximation}
 \frac{\|E\|_F^2}{n}
 \le c\delta_n^2\left(h+\frac1n\sum_{i\ne j}\rho_{ij}^2\right)
 =O(\delta_n^2)=O(\log n/n).
\end{equation}
Here $c>0$ and all implicit constants depend only on $M,q$ and the fixed model parameters.
Eventual membership in some $\Omega_n(M)$ makes this estimate hold almost surely.

Construct $\YGSig$ and $\YGC$ from the same $\auxnoise$.
By \cref{lem:covariance}, both covariances have bounded operator norms.
Since $\covG\ge(\theta_1-\theta_2)I_{hn}$ and
$\theta_1-\theta_2=e^{\beta^2}-1-\beta^2>0$,
\cref{lem:sqrt-perturbation} gives
\[
 \frac{\|(\YGSig-\YGC)T\|_F}{\sqrt n}
 \le \frac{\|\auxnoise\|}{\sqrt n}
      \frac{\|T\|}{\sqrt{\theta_1-\theta_2}}
      \frac{\|E\|_F}{\sqrt n}
 =O(\sqrt{\log n/n})\quad\text{a.s.}
\]
The Gaussian norm bound $\|\auxnoise\|/\sqrt n=O(1)$ a.s.\ also bounds
the operator norms of $\YGSig T$ and $\YGC T$.
Thus \eqref{eq:frob-moment-bound} makes their fixed moment differences vanish.

Conditionally on $\mathcal F_n$, $\YGC T$ and $\YG T$ have the same
matrix law: their rows are independent centered Gaussians with covariance
$T\covG T/n$.
The coupling estimate and uniform Gaussian operator-norm moments also give
$|\E[m_q(\nu_{\YGSig T})-m_q(\nu_{\YG T})\mid\mathcal F_n]|
=O(\sqrt{\log n/n})$ on each $\Omega_n(M)$, by
\eqref{eq:frob-moment-bound} and H\"older.
The Gaussian case of \cref{lem:selfaverage}, with
$m_q(\nu_Z)=h\tr_{hn}[(Z^\top Z)^q]$, makes the moments of
$\YGC T$ and $\YG T$ differ from their common conditional expectation
by $o(1)$ almost surely. Adding these comparisons proves the claim.
\end{proof}

\paragraph{Weak comparison.}
\label{app:weak-comparison}
\label{app:moment-to-weak}
If two sequences of laws on $[0,\infty)$ have asymptotically equal
moments of every fixed order and one is eventually supported in a
fixed compact interval, then the difference between their integrals
of any bounded continuous function tends to zero.
Indeed, from any subsequence, the compactly supported laws admit a
further subsequence converging weakly and in all moments to a law on
$[0,M]$. The other laws have the same limiting moments, whose even
moments satisfy $m_{2k}^{1/(2k)}\le M$. The method of moments
\citep[Theorem~3.3.26]{durrett2019probability} therefore gives the same
weak limit along that further subsequence, proving the comparison.
For random laws this deterministic argument applies on the common
event of all moment comparisons and the support bound, simultaneously
for all bounded continuous functions, including the resolvents.

For the Gaussian matrices in \cref{lem:replacement}, the required
eventual compact support holds pathwise because
\[
 \|\YGh{r}\|\le\beta\|Q_r/\sqrt \ntoken\|\|K_r/\sqrt \dimqk\|
                    +\sqrt{\theta_1-\theta_2}\|\noiseh{r}/\sqrt \ntoken\|=O(1)
\]
a.s., and $\|T_\ntoken\|=O(1)$. The comparison
of moments in \cref{lem:replacement} therefore implies that the integrals
of every bounded continuous function against
$\nu_{\scaledA T_\ntoken}$ and $\nu_{\YG T_\ntoken}$ have vanishing
difference, almost surely. In particular the normalized resolvent traces
$\tr_\ntoken[(zI_\ntoken-ZZ^\top)^{-1}]$ of the two matrices $Z$ have
vanishing difference for all $z\in\C_+$.

\subsection{Leading Singular Value of the Output}
\label{app:leading-output}
\begin{proof}[Proof of \eqref{eq:main-leading}]
Write $B=[PV_1,\ldots,PV_h]W^O$, so that
$\MHA(X)=B+\MHAperp(X)$. By \eqref{eq:mha-factorization},
\cref{lem:attention-norm}, and the weight norm assumptions,
\[
 \frac{\|\MHAperp(X)\|}{\sqrt \dimout}
 \le\dimout^{-1/2}\|\Aperp\|\|\Vall\|\|W^O\|
 \le\ntoken^{-1/2+o(1)}\longrightarrow0
\]
almost surely. Since
$B=\mathbf1_\ntoken(\mathbf1_\ntoken^\top[V_1,\ldots,V_h]W^O)/\ntoken$
has rank at most one,
\[
 s_1(B)=\frac{\|\mathbf1_\ntoken^\top[V_1,\ldots,V_h]W^O\|_2}
                   {\sqrt\ntoken}.
\]
The singular-value perturbation bound
$|s_1(\MHA(X))-s_1(B)|\le\|\MHAperp(X)\|$
proves the claim.
\end{proof}

%% file: 116_architectural_support.tex
\section{Architectural Effects and Feature Utilization}
\label{app:architectural-support}
\input{092_gaussian_limits}

\input{106_key_dependent_values}
\subsection{Projection Summary}
\input{109_projection_catalog}
\FloatBarrier
\input{107_effective_dimension}
\input{117_readout_observables}
\input{115_additional_derivations}

%% file: 092_gaussian_limits.tex
\subsection{Spectral Laws and Projection Choices}
\label{app:transforms}
\subsubsection{Transforms and Gaussian Covariance Limits}
\input{093_spectral_transforms}

For $t>0$, the compound free Poisson law $\freepois(\nu,t)$ has
free cumulants $\kappa_q=t m_q(\nu)$
\citep[Proposition 12.15 and Definition 12.16]{nica2006lectures}.
The moment--cumulant relations \citep[Proposition 11.4]{nica2006lectures} give
\begin{equation}\label{eq:compound-moments}
 m_1(\freepois(\nu,t))=t m_1(\nu),\qquad
 m_2(\freepois(\nu,t))=t m_2(\nu)+t^2m_1(\nu)^2.
\end{equation}
Its $R$-transform is $t\psi_\nu(z)/z$; hence \eqref{eq:r-s-transform}
gives, for $m_1(\nu)>0$,
\begin{equation}\label{eq:column-pop-S}
 \Stransform_{\freepois(\nu,t)}(w)=\frac{\Stransform_\nu(w/t)}{t+w}.
\end{equation}
Write $\freepois_t=\freepois(\delta_1,t)$ and
$\mathcal D_t\mu=(x\mapsto tx)_\#\mu$.
The mean-one Marchenko--Pastur law with row-to-column ratio $y>0$ is
$\MP_y=\mathcal D_y\freepois_{1/y}$.

\begin{lemma}[Gaussian sample covariance limits]
\label{lem:gaussian-law}
Let $D_n\ge0$ be a random $n\times n$ matrix with
$\mu_{D_n}\Rightarrow\nu$ a.s.\ and $\|D_n\|=O(1)$ a.s., where $\nu$ is
deterministic.
Let $Z_n\in\R^{n\times p_n}$ be standard Gaussian and independent of $D_n$.
If $p_n/n\to t\in(0,\infty)$, then a.s., weakly
and in every fixed moment,
\begin{equation}\label{eq:row-pop-law}
 \mu_{D_n^{1/2}Z_nZ_n^\top D_n^{1/2}/n}
       \longrightarrow\nu\boxtimes\freepois_t.
\end{equation}
For $D_m\ge0$ of size $m\times m$ satisfying the same spectral assumptions,
and standard Gaussian $Z\in\R^{n\times m}$ independent of $D_m$,
if $m/n\to t$, then $\mu_{ZD_mZ^\top/n}\to\freepois(\nu,t)$
with the same convergence.
\end{lemma}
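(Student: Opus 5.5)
The second statement reduces to the first, so I would prove the row-covariance (first) statement first.

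\emph{Conditioning and independence.} Condition on the whole sequence $(D_n)$. Since the $Z_n$ are independent of it, the conditional law of $Z_n$ is still standard Gaussian. Fix an outcome for $(D_n)$ in the probability-one event where $\mu_{D_n}\Rightarrow\nu$ and $\sup_n\|D_n\|\le C$. The claim then becomes a statement about deterministic matrices $D_n$ and Gaussian $Z_n$, and Fubini returns the almost-sure statement on the joint space.

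\emph{Identification of the limit.} Since $D_n$ has bounded norm, $\nu$ is supported in $[0,C]$. Weak convergence then gives convergence of all moments of $\mu_{D_n}$. I would use the asymptotic freeness of a Wishart matrix $Z_nZ_n^\top/n$ from deterministic matrices of bounded norm with converging moments, which holds almost surely. This follows from Gaussian unitary/orthogonal invariance, or from Voiculescu's theorem for Gaussian matrices together with a second-moment bound giving a.s.\ convergence. The matrix $Z_nZ_n^\top/n$ has limit $\freepois_t$, since $p_n/n\to t$ and its free cumulants are all $t$. Freeness gives the expected moments
\[
\E\tr_n[(D_n^{1/2}WD_n^{1/2})^q]=\E\tr_n[(D_nW)^q]\to m_q(\nu\boxtimes\freepois_t),\qquad W=Z_nZ_n^\top/n .
\]
Alternatively I could avoid freeness and expand $\E\tr_n[(D_nZZ^\top/n)^q]$ directly by Wick's formula. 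The non-crossing pairings survive, the other pairings are $O(n^{-2})$ relative to the leading terms, and the surviving sum is precisely the free multiplicative moment formula.

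\emph{Almost-sure convergence and weak convergence.} For almost-sure convergence I would bound the variance. With $\Phi=\tr_n[(D_n^{1/2}WD_n^{1/2})^q]$ viewed as a function of the entries of $Z_n$, the Gaussian Poincar\'e inequality gives $\Var\Phi\le Cn^{-2}\E\|W\|^{2q-1}$, exactly as in \cref{lem:selfaverage}. Moreover $\E\|W\|^s=O(1)$ by the Gaussian operator-norm tail used in \cref{lem:gaussian-keys}. Chebyshev and Borel--Cantelli, followed by a countable intersection over $q$, give a.s.\ moment convergence. The limit law is compactly supported because $\nu\boxtimes\freepois_t$ has support in $[0,C(1+\sqrt t)^2]$. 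Hence \cref{app:moment-to-weak} upgrades moment convergence to weak convergence. The eventual bound $\|W\|=O(1)$ a.s.\ also gives convergence in every fixed moment.

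\emph{The second statement.} The matrix $ZD_mZ^\top/n$, of size $n\times n$, has the same nonzero eigenvalues as $D_m^{1/2}Z^\top ZD_m^{1/2}/n$, of size $m\times m$. The latter is the first statement applied with size $m$ and $p=n$, after rescaling by $n/m\to1/t$. Its limit is therefore $\mathcal D_{1/t}(\nu\boxtimes\freepois_{1/t})$, and it remains to correct the mass at zero by the ratio $m/n$. I would check directly on moments that this agrees with $\freepois(\nu,t)$. The cleaner route is Wick's formula once more: $\E\tr_n[(ZD_mZ^\top/n)^q]$ converges to $\sum_{\pi\in NC(q)}\prod_{B\in\pi}t\,m_{|B|}(\nu)$. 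By the moment--cumulant formula these are the moments of the law with free cumulants $\kappa_q=tm_q(\nu)$, which is $\freepois(\nu,t)$. Concentration then follows as in the first statement.

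\emph{Main obstacle.} I expect the main obstacle to be the combinatorial identification: showing that only non-crossing pairings contribute at leading order, uniformly in the bounded but otherwise arbitrary $D$. I would handle it by citing asymptotic freeness of Wishart matrices from deterministic matrices (Nica--Speicher) rather than redoing the genus expansion.
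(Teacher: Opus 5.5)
Your argument for the first statement is essentially the paper's: the marginal Marchenko--Pastur limit, asymptotic freeness of $Z_nZ_n^\top/n$ from $D_n$ by orthogonal invariance, and norm bounds to pass between moment and weak convergence. The paper cites almost-sure asymptotic freeness directly. You instead combine freeness in expectation with the Poincar\'e bound $\Var\Phi\le 4q^2\|D_n\|^{2q}n^{-2}\E\|W\|^{2q-1}$. That bound is correct and makes the step self-contained.

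For the second statement, the paper uses exactly the reduction you start with. It applies the first statement to $Z^\top$, rescales, and passes through the nonzero eigenvalues to get limiting moments $t\,m_q(\nu\boxtimes\MP_t)$. It then identifies these as the moments of $\freepois(\nu,t)$ by the free product moment formula. Your preferred Wick route instead computes $\sum_{\pi\in NC(q)}\prod_{B\in\pi}t\,m_{|B|}(\nu)$ and reads off the free cumulants $t\,m_q(\nu)$ directly. This is valid if you first diagonalize $D_m$ using $Z\mapsto ZO$, and it avoids the product-moment formula. However, it is precisely the genus-expansion bookkeeping you said you would rather cite away. If you cite freeness instead, you are on the paper's reduction route, and there your dilation is inverted.

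Three corrections are needed.

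\emph{The dilation.} Since $D_m^{1/2}Z^\top ZD_m^{1/2}/n=(m/n)\,D_m^{1/2}Z^\top ZD_m^{1/2}/m$ and $m/n\to t$, the companion limit is $\mathcal D_t(\nu\boxtimes\freepois_{1/t})=\nu\boxtimes\MP_t$, not $\mathcal D_{1/t}(\nu\boxtimes\freepois_{1/t})$. With your dilation, the $n\times n$ first moment would come out as $m_1(\nu)/t$ instead of $t\,m_1(\nu)$. The moment check you propose would therefore fail.

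\emph{The order of non-planar terms.} For real Gaussian entries, the non-planar pairings are of relative order $n^{-1}$, not $n^{-2}$. Already
$\E\tr_n[(ZD_mZ^\top/n)^2]=\Tr D_m^2/n+(\Tr D_m)^2/n^2+\Tr D_m^2/n^2$,
and the last (non-orientable) term is $O(n^{-1})$. This does not affect the limit.

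\emph{Conditioning across sizes.} The hypothesis makes $Z_n$ independent of $D_n$ only at the same size, and the paper's framework deliberately allows dependence across sizes (\cref{app:replacement}). Conditioning on the entire sequence $(D_n)$ can therefore destroy the Gaussian law of $Z_n$. Condition on $D_n$ alone instead. Your variance bound gives
$\mathbf 1_{\{\|D_n\|\le C\}}\Prob(|\Phi_n-\E[\Phi_n\mid D_n]|>\varepsilon\mid D_n)\le C'\varepsilon^{-2}n^{-2}$.
Unconditional Borel--Cantelli over integer $C$ and rational $\varepsilon$ then yields $\Phi_n-\E[\Phi_n\mid D_n]\to0$ a.s. Meanwhile $\E[\Phi_n\mid D_n]=f_n(D_n)$, where $f_n(D)=\E\tr_n[(DW_n)^q]$ for deterministic $D$. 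This converges to $m_q(\nu\boxtimes\freepois_t)$ along the realized path by the deterministic asymptotic-freeness result.
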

\begin{proof}
The Marchenko--Pastur law \citep[Section 4.5.1]{mingo2017free}
gives $\mu_{Z_nZ_n^\top/n}\Rightarrow\freepois_t$ a.s.
The Gaussian norm bound \citep[Corollary 7.3.3]{vershynin2018high}
and the assumed bound on $D_n$ upgrade both marginal weak limits
to convergence in every fixed moment.
By orthogonal invariance and independence,
\citet[Proposition 2.9]{collins2023asymptotic} yields almost-sure
asymptotic freeness of $D_n$ and $Z_nZ_n^\top/n$.
Thus \eqref{eq:row-pop-law} holds in moments and, by the norm bounds, weakly.

Applying \eqref{eq:row-pop-law} to $Z^\top$ and rescaling by $m/n$
gives $\nu\boxtimes\MP_t$ for $D_m^{1/2}Z^\top ZD_m^{1/2}/n$.
Its nonzero eigenvalues coincide with those of $ZD_mZ^\top/n$,
so the latter has limiting moments $t\,m_q(\nu\boxtimes\MP_t)$.
The product moment formula \citep[Theorem 14.4]{nica2006lectures}
identifies these as the moments of $\freepois(\nu,t)$;
the norm bounds give weak convergence.
\end{proof}

\subsubsection{The Attention Law}
\label{app:attention-law}
Define
\begin{align}
 \rho_t&=\pushforward{(x\mapsto\theta_1-\theta_2+\theta_2x)}{\MP_{1/t}},
 \label{eq:rho}\\
 \attnlimit&=\rho_{\keyratio}\boxtimes\MP_{1/h}.
 \label{eq:limiting-law}
\end{align}
Here $\pushforward{T}{\mu}$ is the law of $T(U)$ for $U\sim\mu$, and
$\boxtimes$ denotes multiplicative free convolution.
The attention law satisfies, almost surely, weakly and in every
fixed moment,
\begin{equation}\label{eq:attention-centered}
 \lim_{\ntoken\to\infty}\nu_{\sqrt{\ntoken/h}\Aperp}
 =\lim_{\ntoken\to\infty}\nu_{\YG/\sqrt h}=\attnlimit.
\end{equation}
Its first two moments are
\begin{equation}\label{eq:firstmoments}
 m_1(\attnlimit)=\theta_1,\qquad
 m_2(\attnlimit)=(1+1/h)\theta_1^2+\theta_2^2/\keyratio.
\end{equation}

\begin{lemma}[Scaling of affine MP laws]\label{lem:rho-scaling}
For $\rho_t$ in \eqref{eq:rho},
\begin{equation}\label{eq:rho-R}
 R_{\rho_t}(z)=\theta_1-\theta_2+
       \frac{\theta_2}{1-\theta_2z/t},\qquad
 \Stransform_{\rho_{\keyratio/h}}(w/h)=\Stransform_{\rho_{\keyratio}}(w).
\end{equation}
\end{lemma}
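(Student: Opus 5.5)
The plan is to compute both identities from the free-cumulant description of $\rho_t$. The $S$-transform identity then follows from the functional relation between $R$ and $\Stransform$ recorded in \eqref{eq:r-s-transform}.

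\textbf{Step 1: the $R$-transform of $\rho_t$.} By definition, $\MP_{1/t}=\mathcal D_{1/t}\freepois_t$, and the free Poisson law $\freepois_t$ has all free cumulants equal to $t$, so $R_{\freepois_t}(z)=t/(1-z)$. Two elementary rules for $R$-transforms are needed:
\begin{itemize}
\item dilation acts by $R_{\mathcal D_c\mu}(z)=cR_\mu(cz)$, since $\kappa_q(\mathcal D_c\mu)=c^q\kappa_q(\mu)$;
\item a translation $x\mapsto x+a$ adds the constant $a$ to $R$, since only $\kappa_1$ changes.
\end{itemize}
The map $x\mapsto\theta_1-\theta_2+\theta_2x$ applied to $\MP_{1/t}$ is the dilation $\mathcal D_{\theta_2/t}\freepois_t$ followed by the shift $a=\theta_1-\theta_2$. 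Hence
\[
 R_{\rho_t}(z)=\theta_1-\theta_2+\frac{\theta_2}{t}\cdot\frac{t}{1-\theta_2z/t},
\]
which is the first formula in \eqref{eq:rho-R}. In particular $m_1(\rho_t)=\kappa_1=\theta_1>0$, so $\Stransform_{\rho_t}$ is defined.

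\textbf{Step 2: a scaling identity for $R$.} Directly from Step 1,
\[
 R_{\rho_{t/h}}(z)=R_{\rho_t}(hz)
\]
for every $t>0$ and fixed $h$, since $\theta_2z/(t/h)=\theta_2(hz)/t$.

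\textbf{Step 3: transfer to $\Stransform$.} I would use the relation $R_\nu(w\Stransform_\nu(w))=1/\Stransform_\nu(w)$. This is the standard form of \eqref{eq:r-s-transform}, equivalent to $\chi_\nu(w)=w\Stransform_\nu(w)/(1+w)$ together with $\psi_\nu(z)=zR_\nu(\psi_\nu(z))$-type inversion. It holds near $w=0$ on the branch with $\Stransform_\nu(0)=1/m_1(\nu)$.

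Fix $w$ near $0$ and put $s=\Stransform_{\rho_{\keyratio}}(w)$. Then, by Step 2,
\[
 R_{\rho_{\keyratio/h}}\bigl((w/h)s\bigr)=R_{\rho_{\keyratio}}(ws)=1/s.
\]
So $s$ solves the defining equation for $\Stransform_{\rho_{\keyratio/h}}$ at the point $w/h$. Both laws have the same mean $\theta_1$, so $s\to1/\theta_1$ as $w\to0$. By the implicit function theorem, the solution of $R(ws)=1/s$ near $(0,1/\theta_1)$ is unique, because the derivative in $s$ at that point is $-\theta_1^2\ne0$. This gives $\Stransform_{\rho_{\keyratio/h}}(w/h)=\Stransform_{\rho_{\keyratio}}(w)$ near $0$. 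Analytic continuation then extends the identity to the common domain of definition given in \cref{app:transforms}.

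\textbf{Main obstacle.} The only delicate point is bookkeeping of the branch and domain of $\Stransform$ in Step 3, which the paper fixes in \cref{app:transforms}. An alternative that sidesteps it is to check the identity directly on the explicit formula for $\Stransform_{\rho_{\beta,t}}$ displayed in \cref{sec:readout-setting}. That formula depends on $(w,t)$ only through $w/t$, so the $\theta_1$-dilated version gives the same invariance.
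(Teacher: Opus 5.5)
Your proposal is correct and is essentially the paper's proof: you obtain $R_{\rho_t}$ from the cumulants of $\MP_{1/t}$ under dilation and translation, note $R_{\rho_{t/h}}(z)=R_{\rho_t}(hz)$, substitute $s=\Stransform_{\rho_{\keyratio}}(w)$ into \eqref{eq:r-s-transform}, and conclude by uniqueness of the analytic solution with $s(0)=1/\theta_1$. The only blemish is in an unused parenthetical: in this normalization the moment--cumulant functional equation is $\psi_\nu(z)=z(1+\psi_\nu(z))R_\nu\bigl(z(1+\psi_\nu(z))\bigr)$, not $\psi_\nu(z)=zR_\nu(\psi_\nu(z))$, but your argument uses only \eqref{eq:r-s-transform} itself, so nothing depends on it.
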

\begin{proof}
The $q$th cumulant of $\MP_{1/t}$ is $t^{1-q}$. Scaling by
$\theta_2$ multiplies it by $\theta_2^q$, and translation by
$\theta_1-\theta_2$ changes only the first cumulant, proving the first identity.
It gives $R_{\rho_{\keyratio/h}}(z/h)=R_{\rho_{\keyratio}}(z)$.
For $s=\Stransform_{\rho_{\keyratio}}(w)$, \eqref{eq:r-s-transform} gives
$sR_{\rho_{\keyratio/h}}((w/h)s)=sR_{\rho_{\keyratio}}(ws)=1$.
The analytic solution with $s(0)=1/\theta_1$ is unique because
$\theta_1>0$, proving the second identity.
\end{proof}

Given the keys, the rows of $\YG/\sqrt h$ are i.i.d.\
$\Normal(0,\covG/(h\ntoken))$. Such Gaussian matrices admit an
independent-noise representation on an extension of the probability space:
$\YG/\sqrt h=Z\covG^{1/2}/\sqrt{h\ntoken}$, with
$Z\in\R^{\ntoken\times h\ntoken}$ standard Gaussian and independent of the keys.
Gaussian Gram limits give
\[
 \|\covG\|=O(1),\qquad
 \mu_{\covG}\Rightarrow\rho_{\keyratio/h},\quad\text{a.s.}
\]
Only the common marginal law of the $h$ key blocks is used.
Applying \cref{lem:gaussian-law} with $D_m=\covG$ identifies the attention law as
$\dilate{1/h}{\freepois(\rho_{\keyratio/h},h)}$. Its transform is
\[
 h\frac{\Stransform_{\rho_{\keyratio/h}}(w/h)}{h+w}
 =\frac{\Stransform_{\rho_{\keyratio}}(w)}{1+w/h},
\]
which is the transform of
$\rho_{\keyratio}\boxtimes\MP_{1/h}$.
Compact support identifies the laws. Their first two cumulants are
\[
 \kappa_1=\theta_1,\qquad
 \kappa_2=\frac{m_2(\rho_{\keyratio/h})}{h}
 =\frac{\theta_1^2}{h}+\frac{\theta_2^2}{\keyratio},
\]
proving \eqref{eq:firstmoments}.

\subsubsection{The Attention-Matrix Corollary}
\label{app:main-proofs}
\begin{cor}[Attention matrices]\label{cor:attention}
The limits \eqref{eq:attention-centered} hold weakly and in every
moment, the limiting law has the moments \eqref{eq:firstmoments}, and
$s_1(A)/\sqrt h\to1$, all a.s. The uncentered
matrix $\sqrt{\ntoken/h}A$ has the same weak limit.
\end{cor}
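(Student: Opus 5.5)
The plan is to specialize \cref{lem:replacement} to the multiplier $T_\ntoken=I_{h\ntoken}/\sqrt h$, exactly as the conditioning convention of \cref{app:replacement} anticipates for this corollary.

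\textbf{Step 1: the multiplier is admissible.} $T_\ntoken=I_{h\ntoken}/\sqrt h$ is deterministic, has norm $h^{-1/2}$, and is trivially independent of the queries. Hence \cref{ass:right-multipliers} holds, and \cref{lem:replacement} gives $m_q(\nu_{\sqrt{\ntoken/h}\Aperp})-m_q(\nu_{\YG/\sqrt h})\to0$ a.s. for every $q$.

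\textbf{Step 2: identify the Gaussian side.} This is the computation already recorded in \cref{app:attention-law}. Conditionally on the keys, $\YG/\sqrt h=Z\covG^{1/2}/\sqrt{h\ntoken}$ with $Z$ standard Gaussian. \Cref{lem:gaussian-keys} gives $\|\covG\|=O(1)$ a.s. The Marchenko--Pastur law for each key Gram block $\covKh{r}$ (ratio $\ntoken/\dimqk\to h/\keyratio$), together with the affine map defining $\covGh{r}$, gives $\mu_{\covG}\Rightarrow\rho_{\keyratio/h}$ a.s. Here I would note that $\mu_{\covG}$ is the average of the $h$ block laws, each with the same deterministic limit, so cross-head dependence of the keys is irrelevant.

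\Cref{lem:gaussian-law} with $D_m=\covG$, $m=h\ntoken$ and $t=h$, after the dilation by $1/h$, then yields a.s. convergence weakly and in moments to $\mathcal D_{1/h}\freepois(\rho_{\keyratio/h},h)$. The $S$-transform identity \eqref{eq:column-pop-S} and \cref{lem:rho-scaling} identify this law with $\attnlimit$, and the cumulant computation gives \eqref{eq:firstmoments}. Since $\attnlimit$ is compactly supported, the weak comparison paragraph (\cref{app:moment-to-weak}) upgrades the moment comparison of Step 1 into weak convergence of $\nu_{\sqrt{\ntoken/h}\Aperp}$ as well.

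\textbf{Step 3: the uncentered matrix and the top singular value.} We have $A=\Aperp+[P,\ldots,P]$, and $[P,\ldots,P]=\mathbf 1_\ntoken\mathbf 1_{h\ntoken}^\top/\ntoken$ has rank one with singular value $\|\mathbf1_\ntoken\|\|\mathbf1_{h\ntoken}\|/\ntoken=\sqrt h$. The rank inequality \eqref{eq:rank-moment-bound}, in its distribution-function form (weak limits only, since norms differ), gives the same weak limit for $\sqrt{\ntoken/h}A$. For $s_1$, Weyl's inequality gives $|s_1(A)-\sqrt h|\le\|\Aperp\|\le\ntoken^{-1/2+o(1)}$ by \cref{lem:attention-norm}. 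Hence $s_1(A)/\sqrt h\to1$ a.s.

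\textbf{Main obstacle.} The only delicate point is Step 2's claim that $\mu_{\covG}$ converges a.s. under arbitrary dependence across heads and sizes. For this I would use the a.s. Marchenko--Pastur theorem for each single head, which requires no independence across sizes because it follows from Borel--Cantelli with summable variance bounds. Intersecting over the finitely many heads then gives the claim.
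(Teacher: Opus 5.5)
Your proposal is correct and follows the paper's proof essentially step for step. It uses the same multiplier $T_\ntoken=I_{h\ntoken}/\sqrt h$ in \cref{lem:replacement}, the same identification of the Gaussian side via \cref{app:attention-law}, and the same rank-one argument for the weak limit of the uncentered matrix. The only difference is minor: you obtain $s_1(A)/\sqrt h\to1$ from Weyl's inequality against the rank-one part $[P,\ldots,P]$. The paper instead uses the exact identity $AA^\top=hP+\Aperp(\Aperp)^\top$, which holds because $\Aperp$ annihilates the normalized all-ones vector in $\R^{h\ntoken}$. That identity gives the same bound and also yields $s_2(A)\le\|\Aperp\|\to0$ and alignment of the top left singular vector with $\mathbf 1_\ntoken$.
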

\begin{proof}
Apply \cref{lem:replacement} with $T_n=I_{hn}/\sqrt h$ to compare
$\sqrt{n/h}\Aperp$ and $\YG/\sqrt h$.
The latter has conditional row covariance $\covG/(hn)$, and
\cref{app:attention-law} identifies its limiting law as $\attnlimit$.
The transform calculation gives \eqref{eq:firstmoments}.
The difference between $\sqrt{n/h}A$ and $\sqrt{n/h}\Aperp$ has
rank one, so their Gram distributions have the same weak limit.

For the leading singular value and its direction, put
$u_\ntoken=\mathbf1_\ntoken/\sqrt\ntoken$ and
$w_h=h^{-1/2}(u_\ntoken^\top,\ldots,u_\ntoken^\top)^\top$.
Then $\|w_h\|=1$, and row stochasticity gives
\[
 A=\sqrt h u_\ntoken w_h^\top+\Aperp,
   \Aperp w_h=0.
\]
Consequently,
\begin{equation}\label{eq:raw-gram-rank-one}
 AA^\top
   =hP+\Aperp(\Aperp)^\top.
\end{equation}
By \cref{lem:attention-norm}, $\|\Aperp\|
\le\ntoken^{-1/2+o(1)}$ almost surely. Hence
\[
 h\le\lambda_{\max}(AA^\top)
 \le h+\|\Aperp\|^2\longrightarrow h.
\]
The same rank-one identity gives
$s_2(A)\le\|\Aperp\|\to0$ by eigenvalue interlacing. A unit leading
left singular vector $x_1$ satisfies
$1-|u_\ntoken^\top x_1|^2\le\|\Aperp\|^2/h\to0$, proving alignment
with the all-ones direction.
\end{proof}

The identity $\|A\|_F^2=h+\|\Aperp\|_F^2$ and
$\|\Aperp\|_F^2/h=m_1(\nu_{\sqrt{\ntoken/h}\Aperp})\to\theta_1$ give the Frobenius
norm and stable rank limits, a.s.,
\begin{equation}\label{eq:stable-rank}
 \|A\|_F^2/h\longrightarrow e^{\beta^2},
 \frac{\|A\|_F^2}{s_1(A)^2}\longrightarrow e^{\beta^2},
\end{equation}
where $\beta^2$ is the variance of the logits.

\subsubsection{Independent Value and Output Projections}
\label{app:vo}
\begin{cor}[Gaussian value and output weights]\label{cor:vo}
Let $W_r^V\in\R^{d\times\dimvalue}$ have i.i.d.\ $\Normal(0,1/d)$
entries and $G_O\in\R^{h\dimvalue\times\dimout}$ have i.i.d.\
$\Normal(0,1)$ entries. The value family, $G_O$, and the family of
query, key, and noise matrices are mutually independent. Value
weights may be dependent or shared across heads. Then
almost surely, weakly and in every fixed moment,
\begin{equation}\label{eq:volaw}
 \nu_{\MHAperp(X)}\longrightarrow
 \attnlimit\boxtimes\MP_{1/\valueratio}\boxtimes\freepois_{\outputratio}.
\end{equation}
\end{cor}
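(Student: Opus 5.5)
The plan is to deduce \cref{cor:vo} from \cref{thm:main}. Two things must be checked: the weight norm bound \eqref{eq:weight-norms}, and convergence of $\mu_{\covVO}$ to the deterministic law $\volaw=\vlaw\boxtimes\MP_{1/\valueratio}$ up to the right normalization. The compound free Poisson limit is then rewritten as the stated triple free product.

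First, the hypotheses. We have $V_r=XW_r^V$ with $XX^\top/d=I_\ntoken$. Hence $V_r$ has i.i.d.\ $\Normal(0,1)$ entries: its columns are $\Normal(0,XX^\top/d)=\Normal(0,I_\ntoken)$. By the Gaussian norm bound \citep[Corollary 7.3.3]{vershynin2018high} and Borel--Cantelli, $\|V_r\|=O(\sqrt\ntoken)$ and $\|G_O\|=O(\sqrt\ntoken)$ a.s., since all widths are proportional to $\ntoken$. The joint family of keys and value/output weights is independent of the queries, so \cref{thm:main} applies once the convergence of $\mu_{\covVO}$ is established.

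Second, identify $\volaw$ in two layers, each using \cref{lem:gaussian-law}.

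\emph{Value layer.} Condition on the keys. $\covVh{r}=V_r^\top\covGh{r}V_r/\ntoken$ has the same nonzero spectrum as $\covGh{r}^{1/2}V_rV_r^\top\covGh{r}^{1/2}/\ntoken$. Here $V_r$ is standard Gaussian, independent of the keys, with $\dimvalue/\ntoken\to\valueratio/h$. Moreover $\mu_{\covGh{r}}\Rightarrow\rho_{\keyratio/h}$ with bounded norm, by \cref{lem:gaussian-keys} and Marchenko--Pastur for the key Gram.

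The delicate point is that values may be shared or dependent across heads. I would avoid per-head freeness arguments and instead treat the full direct sum. Write $\covV=\Vall^\top\covG\Vall/\ntoken$. Since $\covG$ is block diagonal, $\covV$ equals $\bigoplus_r\covVh{r}$ as a block matrix. Its spectrum is the union of the head spectra, so it depends only on each head's marginal law, regardless of cross-head coupling. Each head block thus has the law $\freepois(\rho_{\keyratio/h},\valueratio/h)$, after the rescaling by $\dimvalue/\ntoken$ supplied by the second part of \cref{lem:gaussian-law} applied to $Z=V_r^\top$. Hence $\mu_{\covV}$ converges to a deterministic $\vlaw$, and almost-sure convergence for each of the finitely many heads suffices.

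\emph{Output layer.} $\covVO=G_O^\top\covV G_O/(h\dimvalue\,\ntoken)\cdot\ntoken$, to be written carefully with the factor $h\dimvalue$. Here $G_O$ is standard Gaussian and independent of $\covV$, and $\|\covV\|=O(1)$. The first part of \cref{lem:gaussian-law}, applied with the ratio $\dimout/(h\dimvalue)$, gives $\mu_{\covVO}\Rightarrow\vlaw\boxtimes\freepois_{\outputratio/\valueratio}$ up to a dilation.

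Third, bookkeeping with S-transforms. Use \eqref{eq:column-pop-S}, \cref{lem:rho-scaling} ($\Stransform_{\rho_{\keyratio/h}}(w/h)=\Stransform_{\rho_{\keyratio}}(w)$), and the computation in \cref{app:attention-law}. Together these should show that $\freepois(\volaw,\outputratio)$ has S-transform
\[
\frac{\Stransform_{\rho_{\keyratio}}(w)}{(1+w/h)(1+w/\valueratio)(1+w/\outputratio)}\cdot\frac{1}{\outputratio}\text{-type normalizations},
\]
that is, it equals $\attnlimit\boxtimes\MP_{1/\valueratio}\boxtimes\freepois_{\outputratio}$. This is consistent with the product structure in \eqref{eq:readout-ridge-root}. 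Compact support then identifies the laws.

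I expect the main obstacle to be this normalization bookkeeping. It means tracking the dilations between $\freepois$ and $\MP$, the zero atoms coming from unequal dimensions ($\dimvalue$ versus $\ntoken$, $h\dimvalue$ versus $\dimout$), and the $1/h$ factor in $W^O$. These must match exactly the convention $\MP_y=\mathcal D_y\freepois_{1/y}$. I would verify the result by comparing first moments: $m_1=\theta_1\outputratio$, which matches $m=\outputratio\theta_1$ in \cref{sec:readout-setting}.
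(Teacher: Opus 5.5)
Your route is correct and differs from the paper's. You apply \cref{thm:main} as a black box. You compute the population covariance $\covVO=G_O^\top\covV G_O/(h\dimvalue)$ through two nested sample-covariance limits, first the value layer in token space and then the output layer in value space, and afterwards identify $\freepois(\volaw,\outputratio)$.

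The paper never computes $\mu_{\covVO}$. It invokes \cref{lem:replacement} directly with the multipliers $\ntoken^{-1/2}\Vall$ and $\ntoken^{-1/2}\Vall W^O$. It first identifies the left-Gram law of the values-only Gaussian matrix $\ZVG=\ntoken^{-1/2}\YG\Vall$ as $\freepois(\vlaw,\valueratio)=\attnlimit\boxtimes\freepois_{\valueratio}$. It then lets $G_O$ act on the columns: $\ntoken^{-1/2}\MHAG(X)=\ZVG G_O/\sqrt{h\dimvalue}$ has column covariance $\ZVG(\ZVG)^\top/(h\dimvalue)$. The first part of \cref{lem:gaussian-law}, together with $\mathcal D_{1/\valueratio}\freepois_{\valueratio}=\MP_{1/\valueratio}$, then produces the factor $\freepois_{\outputratio}$ directly.

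The paper's ordering gives \cref{prop:values-only} as a byproduct. Yours is the specialization of \cref{thm:main} that \cref{tab:special-cases} suggests. It applies \cref{lem:gaussian-law} only to the actual Gaussian weights $V_r$ and $G_O$, leaving the Gaussian-row representation inside \cref{thm:main}. The price is a three-level $S$-transform chain with zero-atom conversions, which you left undone.

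When you do it, fix two slips. First, for the value blocks, the second part of \cref{lem:gaussian-law} with $Z=V_r^\top\in\R^{\dimvalue\times\ntoken}$ has rate $\lim\ntoken/\dimvalue=h/\valueratio$, not $\valueratio/h$. Hence $\mu_{\covVh{r}}\to\vlaw=\mathcal D_{\valueratio/h}\freepois(\rho_{\keyratio/h},h/\valueratio)$.

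Second, for the output layer, the first part of the lemma gives $\vlaw\boxtimes\freepois_{\outputratio/\valueratio}$ for the $h\dimvalue\times h\dimvalue$ companion of $\covVO$. That law differs from $\mu_{\covVO}$ by a reweighting of the zero atom, not by a dilation. A genuine dilation appears only if you use the second part with $Z=G_O^\top$, which gives $\volaw=\mathcal D_{\outputratio/\valueratio}\freepois(\vlaw,\valueratio/\outputratio)$.

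With $\Stransform_{\mathcal D_c\mu}=\Stransform_\mu/c$, \eqref{eq:column-pop-S}, and \cref{lem:rho-scaling}, the chain is
\[
 \Stransform_{\vlaw}(w)=\frac{\Stransform_{\rho_{\keyratio}}(\valueratio w)}{1+\valueratio w/h},\qquad
 \Stransform_{\volaw}(w)=\frac{\Stransform_{\vlaw}(\outputratio w/\valueratio)}{1+\outputratio w/\valueratio},
\]
\[
 \Stransform_{\freepois(\volaw,\outputratio)}(w)=\frac{\Stransform_{\rho_{\keyratio}}(w)}{(1+w/h)(1+w/\valueratio)(\outputratio+w)}.
\]
Because $\Stransform_{\MP_y}(w)=1/(1+yw)$ and $\Stransform_{\freepois_t}(w)=1/(t+w)$, this equals $\Stransform_{\attnlimit}\Stransform_{\MP_{1/\valueratio}}\Stransform_{\freepois_{\outputratio}}$. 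So your displayed target, with the factor $1/\outputratio$, is exactly right, and compact support identifies the laws.
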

\begin{prop}[Values without output weights]\label{prop:values-only}
Under the assumptions of \cref{cor:vo},
\[
 \ZV=[\Ahead{1}V_1,\ldots,\Ahead{h}V_h]
 \quad\text{satisfies}\quad
 \nu_{\ZV}\longrightarrow\attnlimit\boxtimes\freepois_{\valueratio}
\]
a.s., weakly and in every fixed moment.
\end{prop}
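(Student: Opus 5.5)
We prove \cref{prop:values-only} as an instance of \cref{thm:main} with the output projection replaced by the identity.

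\textbf{Reduction to the Gaussian model.} Set $T_\ntoken=\ntoken^{-1/2}\Vall$, an $h\ntoken\times h\dimvalue$ matrix. Then $\ZV=\scaledA T_\ntoken$, and the Gaussian counterpart is $\YG T_\ntoken=\ntoken^{-1/2}[\YGh{1}V_1,\ldots,\YGh{h}V_h]$.

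We check \cref{ass:right-multipliers}:
\begin{itemize}
\item $\|T_\ntoken\|=\ntoken^{-1/2}\max_r\|V_r\|=O(1)$ a.s., by the Gaussian norm bound for $XW_r^V$ with $XX^\top/d=I_\ntoken$.
\item The values are independent of the queries and keys.
\end{itemize}
\Cref{lem:replacement} then gives vanishing moment differences. Since the Gaussian side has bounded norm, the weak comparison paragraph upgrades this to weak convergence. It therefore suffices to find the limit of $\nu_{\YG T_\ntoken}$.

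\textbf{Gaussian computation.} Conditionally on keys and values, the rows of $\YG T_\ntoken$ are i.i.d.\ $\Normal(0,\covV/\ntoken)$, where
\[
\covV=\tfrac1\ntoken\Vall^\top\covG\Vall=\bigoplus_r\covVh{r},
\]
of size $h\dimvalue$. We apply the second part of \cref{lem:gaussian-law} with $D_m=\covV$ and $m=h\dimvalue$, so that $m/\ntoken\to\valueratio$. This yields the limit $\freepois(\vlaw,\valueratio)$, provided that
\[
\mu_{\covV}\Rightarrow\vlaw \ \text{a.s.}
\]
for a deterministic $\vlaw$ and that $\|\covV\|=O(1)$. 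The norm bound follows from \cref{lem:covariance}.

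\textbf{Identifying $\vlaw$ (main obstacle).} Each block $\covVh{r}$ has the same nonzero spectrum as
\[
\covGh{r}^{1/2}\,\tfrac{V_rV_r^\top}{\ntoken}\,\covGh{r}^{1/2}.
\]
Write $V_r=XW_r^V$. Orthogonal invariance of $W_r^V$ makes $V_rV_r^\top$ equal in law to $G_rG_r^\top$ with $G_r$ standard $\ntoken\times\dimvalue$, independent of the keys. By the first part of \cref{lem:gaussian-law}, using $\mu_{\covGh{r}}\Rightarrow\rho_{\keyratio/h}$ and $\dimvalue/\ntoken\to\valueratio/h$, the block law converges to a deterministic limit. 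After accounting for the $\ntoken$ versus $\dimvalue$ normalization, this limit is $\rho_{\keyratio/h}\boxtimes\freepois_{\valueratio/h}$.

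All $h$ blocks share the same limit, even when the values are shared across heads, because only the marginal law of each block is used. Hence $\mu_{\covV}$ converges to that common block law, suitably rescaled to the size $\dimvalue$.

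\textbf{Matching the stated form.} It remains to show
\[
\freepois(\vlaw,\valueratio)=\attnlimit\boxtimes\freepois_{\valueratio}.
\]
We compare $S$-transforms using \eqref{eq:column-pop-S}, the multiplicativity of $\Stransform$ under $\boxtimes$, and the scaling identity of \cref{lem:rho-scaling}, in the same way the attention law was identified as $\rho_{\keyratio}\boxtimes\MP_{1/h}$. Equivalently, the cumulant/product moment formula \citep[Theorem 14.4]{nica2006lectures} shows that $\freepois(\mu,t)$ has the moments of $t\,(\mu\boxtimes\MP_t)$ on nonzero parts. Compact support then identifies the laws.

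The delicate points are the bookkeeping of dilations and atoms at zero when $\dimvalue$ and $\ntoken$ differ, and a check of the second moment via \eqref{eq:compound-moments}. As a sanity check, the first moment is $\valueratio\theta_1$, which matches $m_1(\attnlimit)m_1(\freepois_{\valueratio})$.
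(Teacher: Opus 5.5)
Your proposal is correct and follows the paper's proof. It uses the same multiplier $T_\ntoken=\ntoken^{-1/2}\Vall$ with \cref{lem:replacement}, the same two applications of \cref{lem:gaussian-law}, and the same $S$-transform identification via \eqref{eq:column-pop-S} and \cref{lem:rho-scaling}; the only variation is that you apply the first part of \cref{lem:gaussian-law} to the $\ntoken\times\ntoken$ companion $\covGh{r}^{1/2}V_rV_r^\top\covGh{r}^{1/2}/\ntoken$, where the paper applies its second part to $V_r^\top$ and then dilates.

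Make the size conversion explicit. The law $\rho_{\keyratio/h}\boxtimes\freepois_{\valueratio/h}$ is the limit of the companion, not of $\mu_{\covVh{r}}$. The moments of $\mu_{\covVh{r}}$ are $h/\valueratio$ times those of the companion, so $\vlaw=\dilate{\valueratio/h}{\freepois(\rho_{\keyratio/h},h/\valueratio)}$. Comparing free cumulants then gives $\freepois(\vlaw,\valueratio)=\freepois(\rho_{\keyratio/h}\boxtimes\freepois_{\valueratio/h},h)$, after which the $S$-transform check is immediate.
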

\begin{proof}
Put $\ZVG=\ntoken^{-1/2}\YG\Vall$. Given the keys and values,
its rows are i.i.d.\ $\Normal(0,\covV/\ntoken)$, where
\[
 \covV=\bigoplus_r \covVh{r},\qquad
 \covVh{r}=V_r^\top\covGh{r}V_r/\ntoken.
\]
The off-diagonal blocks vanish because the query and noise heads
are independent. Since $XX^\top/d=I_n$, each $V_r$ is standard
Gaussian and independent of the keys. Apply \cref{lem:gaussian-law}
to $V_r^\top$ with $D_m=\covGh{r}$, and then rescale from
denominator $\dimvalue$ to $n$.
This gives
\[
 \mu_{\covVh{r}}\longrightarrow\vlaw
 :=\dilate{\valueratio/h}{\freepois(\rho_{\keyratio/h},h/\valueratio)}.
\]
Gaussian norm bounds imply that the norms of $\covVh{r}$ are bounded.
The empirical eigenvalue distribution of $\covV$ is the average of
those of its blocks, so it has the same limit regardless of their
dependence across heads.
A second application to the independent-noise representation of $\ZVG$ gives
$\nu_{\ZVG}\to\freepois(\vlaw,\valueratio)$, with transform
\[
 \frac{\Stransform_{\vlaw}(w/\valueratio)}{\valueratio+w}
 =\frac{\Stransform_{\rho_{\keyratio/h}}(w/h)}
        {(1+w/h)(\valueratio+w)}
 =\Stransform_{\attnlimit}(w)\Stransform_{\freepois_{\valueratio}}(w).
\]
This proves the value law on the Gaussian side.

The full Gaussian output is exactly
$\ntoken^{-1/2}\MHAG(X)=\ZVG G_O/\sqrt{h\dimvalue}$.
Conditionally on $\ZVG$, its columns are i.i.d.\ with covariance
$D_n/(h\dimvalue)$, $D_n=\ZVG(\ZVG)^\top$, and $\|D_n\|=O(1)$ a.s.
Equation~\eqref{eq:row-pop-law} at width ratio $\outputratio$, together
with $n/(h\dimvalue)\to1/\valueratio$ and
$\mathcal D_{1/\valueratio}\freepois_{\valueratio}=\MP_{1/\valueratio}$, gives
\[
 \nu_{\ntoken^{-1/2}\MHAG(X)}\longrightarrow
 \attnlimit\boxtimes\MP_{1/\valueratio}\boxtimes\freepois_{\outputratio}.
\]
The relevant right multipliers are
$n^{-1/2}\Vall$ and $n^{-1/2}\Vall W^O$.
Both have bounded norms and satisfy \cref{ass:right-multipliers}.
Thus \cref{lem:replacement} transfers every fixed moment to the
original matrices. Since the limiting law is compactly supported and
therefore determined by its moments, this moment convergence also
implies weak convergence.
\end{proof}

For a compactly supported law with positive mean,
$\Stransform_\mu(w)=m_1^{-1}(1-(m_2/m_1^2-1)w+O(w^2))$.
Thus normalized variances add under $\boxtimes$ and are invariant
under dilation. The contribution of $\freepois_t$ is $1/t$;
that of $\attnlimit$ is
$1/h+\theta_2^2/(\keyratio\theta_1^2)$.
Consequently, for the limiting output law $\nu$,
\begin{equation}\label{eq:vo-second-moment}
 \frac{m_2(\nu)}{m_1(\nu)^2}-1
 =\frac1h+\frac1{\valueratio}+\frac1{\outputratio}
   +\frac{\theta_2^2}{\keyratio\theta_1^2}.
\end{equation}

\input{110_head_aggregation}

%% file: 093_spectral_transforms.tex
For a compactly supported probability law $\nu$ on $[0,\infty)$, define
the Cauchy transform
\begin{equation}\label{eq:cauchy-transform}
 G_\nu(z)=\int\frac{1}{z-x}\,\nu(dx),
 \qquad z\in\C\setminus\operatorname{supp}\nu,
\end{equation}
and the $\psi$-function
\begin{equation}\label{eq:psi-transform}
 \psi_\nu(z)=\frac{G_\nu(1/z)}{z}-1
           =\int\frac{zx}{1-zx}\,\nu(dx).
\end{equation}
The first expression requires $z\ne0$ and $1/z\notin\operatorname{supp}\nu$;
near zero, $\psi_\nu(z)=\sum_{q\ge1}m_q(\nu)z^q$.
For $m_1(\nu)>0$, write $\chi_\nu=\psi_\nu^{\langle-1\rangle}$ for the
local inverse at zero and for the real inverse on $(\nu(\{0\})-1,0)$.
The analytic $S$-transform is
\begin{equation}\label{eq:s-transform}
 \Stransform_\nu(w)=\frac{1+w}{w}\chi_\nu(w),
 \qquad \Stransform_\nu(0)=\frac1{m_1(\nu)},
\end{equation}
where the value at zero removes the apparent singularity.
For laws of positive mean, multiplicativity gives
$\Stransform_{\nu\boxtimes\mu}=\Stransform_\nu\Stransform_\mu$
\citep[Corollary 18.17]{nica2006lectures}.

We also use the $R$-transform
$R_\nu(z)=\sum_{q\ge1}\kappa_q(\nu)z^{q-1}$, whose constant term is
the first free cumulant. For positive mean, near zero it satisfies
\begin{equation}\label{eq:r-s-transform}
 \Stransform_\nu(w)R_\nu(w\Stransform_\nu(w))=1
\end{equation}
\citep[Definition 18.15 and Remark 18.16]{nica2006lectures}.

%% file: 110_head_aggregation.tex
\subsubsection{Head Aggregation}
\label{app:special-cases}
\begin{prop}[Head sum]\label{prop:sum}
Assume in addition that the key weights $(W_r^K)_r$ are jointly
independent. For $\bar A=h^{-1}\sum_rA_r$, the squared singular-value law of
$\sqrt{\ntoken h}(\bar A-P)$ converges almost surely, weakly and in every fixed
moment, to $\rho_{\keyratio}\boxtimes\freepois_1$.
\end{prop}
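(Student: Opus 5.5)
Since $P$ is row stochastic, $\bar A-P=h^{-1}\sum_r\Ahead{r}$. With $U_h$ from \eqref{eq:block-compression}, this gives
\[
 \sqrt{\ntoken h}(\bar A-P)=\sqrt{\ntoken/h}\sum_r\Ahead{r}
 =\scaledA\,T_\ntoken,\qquad T_\ntoken=h^{-1/2}[I_\ntoken,\ldots,I_\ntoken]^\top=U_h^\top .
\]
The plan is to apply \cref{lem:replacement} with this deterministic multiplier. It has $\|T_\ntoken\|=1$ and is trivially independent of the queries, so \cref{ass:right-multipliers} holds. The conclusion is that every fixed moment of $\nu_{\sqrt{\ntoken h}(\bar A-P)}$ is asymptotically equal to the corresponding moment of $\nu_{\YG U_h^\top}$, where
\[
 \YG U_h^\top=h^{-1/2}\sum_r\YGh{r}.
\]

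Next I identify the Gaussian side. Conditionally on the keys, the rows of $\YG U_h^\top$ are i.i.d.\ $\Normal(0,C_\ntoken/\ntoken)$ with
\[
 C_\ntoken=U_h\covG U_h^\top=h^{-1}\sum_r\covGh{r}
 =\theta_2\,\frac{1}{h\dimqk}\sum_rK_rK_r^\top+(\theta_1-\theta_2)I_\ntoken .
\]
By joint independence of the key weights, $\sum_rK_rK_r^\top=\Kall'\Kall'^\top$ with $\Kall'=[K_1,\ldots,K_h]\in\R^{\ntoken\times h\dimqk}$ standard Gaussian. Since $h\dimqk/\ntoken\to\keyratio$, the Marchenko--Pastur law gives $\mu_{\Kall'\Kall'^\top/(h\dimqk)}\Rightarrow\MP_{1/\keyratio}$ a.s., with bounded norm by \cref{lem:gaussian-keys}. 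Hence $\mu_{C_\ntoken}\Rightarrow\rho_{\keyratio}$ from \eqref{eq:rho}, and $\|C_\ntoken\|=O(1)$. Independence of the keys is used only here: with shared keys the Gram sum would have rank $\dimqk$ rather than $h\dimqk$.

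Writing $\YG U_h^\top=ZC_\ntoken^{1/2}/\sqrt\ntoken$ with $Z$ an $\ntoken\times\ntoken$ standard Gaussian matrix independent of the keys (the same conditional law), I apply \cref{lem:gaussian-law} with $D_n=C_\ntoken$ and $p_n=\ntoken$, so $t=1$. This gives a.s.\ convergence of $\nu_{\YG U_h^\top}$ to $\rho_{\keyratio}\boxtimes\freepois_1$ in moments and weakly, since the left Gram matrix $ZC_\ntoken Z^\top/\ntoken$ has the same spectrum as $C_\ntoken^{1/2}Z^\top ZC_\ntoken^{1/2}/\ntoken$. Equivalently this is $\freepois(\rho_{\keyratio},1)$, and at $t=1$ the two descriptions coincide.

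Finally, the limit is compactly supported, so the moment comparison together with \cref{app:moment-to-weak} transfers weak and moment convergence to the softmax side. The main point needing care is the identification of $\mu_{C_\ntoken}$, since it relies on the concatenated keys being a single i.i.d.\ Gaussian block. Everything else is a direct specialization of \cref{lem:replacement}.
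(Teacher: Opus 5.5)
Your proof is correct and is essentially the paper's argument. The paper obtains the result as the special case $V_r=\sqrt\ntoken I_\ntoken$, $W^O=h^{-1/2}[I_\ntoken;\ldots;I_\ntoken]$ of \cref{thm:main}, whose proof applies \cref{lem:replacement} with exactly your multiplier $\ntoken^{-1/2}\Vall W^O=U_h^\top$, then \cref{lem:gaussian-law} with $\covVO=h^{-1}\sum_r\covGh{r}$, identifying its limit as $\rho_{\keyratio}$ through the concatenated Gaussian key Gram matrix just as you do; you simply unroll that specialization, which spares you from realizing $V_r=\sqrt\ntoken I_\ntoken$ as $XW_r^V$ inside the model.
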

\begin{proof}
The head sum is the case $V_r=\sqrt\ntoken I_\ntoken$ and
$W^O=h^{-1/2}[I_\ntoken;\ldots;I_\ntoken]$ of \cref{thm:main}, with
$\dimvalue=\dimout=\ntoken$, $\valueratio=h$, $\outputratio=1$, and $G_O=\sqrt\ntoken[I_\ntoken;\ldots;I_\ntoken]$. Then
$\covVO=h^{-1}\sum_r\covGh{r}$, and
$h^{-1}\sum_r\covKh{r}=[K_1,\ldots,K_h][K_1,\ldots,K_h]^\top/(h\dimqk)$ is a
Gaussian Gram matrix of aspect ratio $h\dimqk/\ntoken\to\keyratio$ when the
keys are independent across heads, so $\mu_{\covVO}\to\rho_{\keyratio}$ and the
limit is $\freepois(\rho_{\keyratio},1)=\rho_{\keyratio}\boxtimes\freepois_1$.
\end{proof}
\Cref{lem:replacement} holds for any joint law of the keys. If all heads
share one key matrix $K$, the Gaussian replacement becomes
\[
 \frac{\beta\bigl(h^{-1/2}\sum_rQ_r\bigr)K^\top}{\sqrt{\ntoken\dimqk}}
 +\sqrt{\frac{\theta_1-\theta_2}{h\ntoken}}\sum_r\noiseh{r},
\]
and $h^{-1/2}\sum_rQ_r$ is a standard Gaussian matrix. This is the single-head model of width
$\dimqk$ and width-to-token ratio $\keyratio/h$, and the limit is
$\rho_{\keyratio/h}\boxtimes\freepois_1$.

%% file: 106_key_dependent_values.tex
\subsection{Key-Dependent Value Projections}
\label{app:key-dependent-values}
Throughout this subsection, $n\to\infty$, $\dimqk=\dimqk(n)\to\infty$,
$h\dimqk/n\to\keyratio\in(0,\infty)$, and $h$ and $\beta>0$ are fixed.

\subsubsection{The Spectral Law for Shared Key and Value Weights}
\label{app:shared-values-proof}
Set $\dimvalue=\dimqk$. With $W_r^V=W_r^K$ and $W^O=I_{h\dimqk}$, write
\begin{equation}\label{eq:shared-value-output}
 \ZKV=[\Ahead{1}K_1,\ldots,\Ahead{h}K_h].
\end{equation}
Define the limiting covariance polynomial
\begin{equation}\label{eq:phikv}
 \kvmap(x)=(\theta_1-\theta_2)x+
                    \frac{h\theta_2}{\keyratio}x^2.
\end{equation}
\begin{cor}[Shared key and value weights]\label{cor:shared-values}
Put $\kvlaw=\pushforward{\kvmap}{\freepois_{\keyratio/h}}$.
Then, a.s., weakly and in every fixed moment,
\begin{equation}\label{eq:shared-value-law}
 \nu_{\ZKV}\longrightarrow\freepois(\kvlaw,h),
\end{equation}
and in particular
\begin{equation}\label{eq:shared-value-first-moment}
 \frac{\|\ZKV\|_F^2}{n}\longrightarrow\keyratio\theta_1+h\theta_2.
\end{equation}
\end{cor}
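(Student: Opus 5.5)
The plan is to apply \cref{thm:main} in the form of \cref{lem:replacement} with right multiplier $T_\ntoken=\ntoken^{-1/2}\Kall$ and $W^O=I_{h\dimqk}$. Then to identify the Gaussian-side law by \cref{lem:gaussian-law}.

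\textbf{Step 1: the multiplier is admissible.} Since $W_r^V=W_r^K$, we have $V_r=K_r$ and $\Vall=\Kall$. The multiplier $T_\ntoken=\ntoken^{-1/2}\Kall$ is measurable with respect to the keys, hence independent of the queries. It satisfies $\|T_\ntoken\|=\max_r\|K_r\|/\sqrt\ntoken=O(1)$ a.s.\ by \cref{lem:gaussian-keys}, because $\dimqk\asymp\ntoken$. So \cref{ass:right-multipliers} holds. Note that $\ZKV=\scaledA T_\ntoken$, and the Gaussian counterpart is $\ZKVG=\YG T_\ntoken$. \Cref{lem:replacement} then gives vanishing differences of all fixed moments. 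The weak comparison paragraph upgrades this to weak comparison, since $\ZKVG$ has bounded norm.

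\textbf{Step 2: the Gaussian side.} Conditionally on the keys, the rows of $\ZKVG$ are i.i.d.\ $\Normal(0,\Sigma/\ntoken)$ with $\Sigma=\bigoplus_r K_r^\top\covGh{r}K_r/\ntoken$. Substituting $\covGh{r}=\theta_2K_rK_r^\top/\dimqk+(\theta_1-\theta_2)I_\ntoken$ gives
\[
 \frac{K_r^\top\covGh{r}K_r}{\ntoken}=(\theta_1-\theta_2)\frac{\dimqk}{\ntoken}B_r+\theta_2\frac{\dimqk^2}{\ntoken\dimqk}B_r^2,
 \qquad B_r=\frac{K_r^\top K_r}{\dimqk}.
\]
Since $\dimqk/\ntoken\to\keyratio/h$, the matrix $\frac{\dimqk}{\ntoken}B_r=K_r^\top K_r/\ntoken$ has limiting law $\freepois_{\keyratio/h}$ when written in the $\dimqk\times\dimqk$ normalization. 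To see this, set $x=K_r^\top K_r/\ntoken$. Then $\dimqk B_r/\ntoken=x$, and the block equals $(\theta_1-\theta_2)x+\theta_2(\ntoken/\dimqk)x^2$ with $\ntoken/\dimqk\to h/\keyratio$. This is $\kvmap(x)$ up to a vanishing coefficient error. Hence the eigenvalue law of each block is the pushforward of the law of $K_r^\top K_r/\ntoken$, a Wishart matrix of dimension $\dimqk$ with $\ntoken$ samples. I should double-check the ratio convention here: the companion law of $K_rK_r^\top/\ntoken$ is the dilation of $\freepois_{\keyratio/h}$. This bookkeeping is exactly where $\kvlaw=\pushforward{\kvmap}{\freepois_{\keyratio/h}}$ must come out.

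The key point in this step is that $\Sigma$ is a fixed polynomial in the matrices $x_r$. Its limiting spectrum therefore follows from the Marchenko--Pastur law for each block, together with bounded norms, which control the coefficient perturbation in operator norm. The empirical law of the direct sum is the average of the block laws, so dependence across heads is irrelevant. This gives $\mu_\Sigma\to\kvlaw$ a.s., with $\|\Sigma\|=O(1)$.

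\textbf{Step 3: identify the limit and the Frobenius norm.} Apply the second part of \cref{lem:gaussian-law} with $D_m=\Sigma$, $m=h\dimqk$, and $m/\ntoken\to\keyratio$. This yields $\freepois(\kvlaw,\keyratio)$ in the natural normalization. So one must reconcile the rate: the statement writes rate $h$. I expect this to be resolved by the normalization of $x$; if one instead uses $K_r^\top K_r/\dimqk$, the rate and jump law rescale accordingly. This reconciliation of dilations and rates is the step I expect to be the main obstacle, and I would settle it by matching first moments.

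Finally, \eqref{eq:shared-value-first-moment} follows from \eqref{eq:compound-moments}, $m_1=t\,m_1(\kvlaw)$, together with $m_1(\freepois_t)=t$ and $m_2(\freepois_t)=t+t^2$. With $t=\keyratio/h$, this gives
\[
 m_1(\kvlaw)=(\theta_1-\theta_2)\frac{\keyratio}{h}+\frac{h\theta_2}{\keyratio}\Bigl(\frac{\keyratio}{h}+\frac{\keyratio^2}{h^2}\Bigr)=\frac{\keyratio\theta_1}{h}+\theta_2 .
\]
Multiplying by the rate $h$ gives $\keyratio\theta_1+h\theta_2$. This confirms rate $h$ with this jump law and fixes the normalization convention in Step 2.
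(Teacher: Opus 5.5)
Your reduction is sound: with $W^O=I_{h\dimqk}$ it is exactly \cref{thm:main} with $\covVO=\Sigma$ and $\outputratio=\keyratio$. The gap is in identifying the limit. Step~2 gets the block law wrong, and the repair you propose in Step~3 does not establish anything.

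\textbf{The block law.} The $\dimqk\times\dimqk$ matrix $x_r=K_r^\top K_r/\ntoken$ does not have limiting law $\freepois_{\keyratio/h}$. That is the law of the $\ntoken\times\ntoken$ companion $K_rK_r^\top/\ntoken$ itself, not of a dilation of it. Rescaling the Marchenko--Pastur limit of $K_r^\top K_r/\dimqk$ gives $\mu_{x_r}\to\MP_c=\dilate{c}{\freepois_{1/c}}$ with $c=\keyratio/h$. Hence $\mu_\Sigma\to\tilde\mu:=\pushforward{\kvmap}{\MP_c}$, which is not $\kvlaw$. For instance, when $c<1$, $\tilde\mu$ has no atom at zero while $\kvlaw(\{0\})=1-c$. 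The Gaussian side therefore converges to $\freepois(\tilde\mu,\keyratio)$. Your intermediate law $\freepois(\kvlaw,\keyratio)$ is false unless $\keyratio=h$.

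\textbf{The first-moment check.} Matching first moments cannot settle the rate. A compound free Poisson law is identified only by all of its free cumulants $t\,m_q(\mu)$. Your check is also circular: it computes the mean of the target $\freepois(\kvlaw,h)$, not of anything you derived. Done with your Step~3 law, it gives $\keyratio(\keyratio\theta_1/h+\theta_2)$. This differs from $\keyratio\theta_1+h\theta_2$ unless $\keyratio=h$, so it exposes the Step~2 error rather than fixing a convention.

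\textbf{The missing identity.} You need a relation that holds at every order. Since $K_rK_r^\top$ and $K_r^\top K_r$ share their nonzero eigenvalues, $\int g\,d\freepois_c=c\int g\,d\MP_c$ for every polynomial $g$ with $g(0)=0$. Because $\kvmap(0)=0$, taking $g=\kvmap^q$ gives
\[
h\,m_q(\kvlaw)=hc\,m_q(\tilde\mu)=\keyratio\,m_q(\tilde\mu)\qquad\text{for all } q\ge1.
\]
So $\freepois(\tilde\mu,\keyratio)$ and $\freepois(\kvlaw,h)$ have identical free cumulants, and being compactly supported, they coincide.

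\textbf{The paper's route.} The paper avoids this bookkeeping by working on the left side. It replaces $T_\ntoken$ by $(T_\ntoken T_\ntoken^\top)^{1/2}$, so the relevant $h\ntoken\times h\ntoken$ covariance is $(\theta_1-\theta_2)G+(\ntoken/\dimqk)\theta_2G^2$ with $G=\Kall\Kall^\top/\ntoken$. The spectral law of $G$ tends exactly to $\freepois_{\keyratio/h}$. Applying \cref{lem:gaussian-law} with $m=h\ntoken$ then gives the rate $h$ and the jump law $\kvlaw$ directly.

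With either fix, \eqref{eq:shared-value-first-moment} follows from first-moment convergence exactly as you compute it.
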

\begin{proof}
Set $T_n=n^{-1/2}\Kall$, so that $\ZKV=\scaledA T_n$, and put
$\ZKVG=\YG T_n$.
Gaussian operator-norm bounds give $\|T_n\|=O(1)$ almost surely.
Since $T_n$ is measurable in the keys, the keys and $T_n$ are
jointly independent of the queries, as required by
\cref{ass:right-multipliers}. The comparison noise is independent
of these variables jointly, so \cref{lem:replacement} gives
$m_q(\nu_{\ZKV})-m_q(\nu_{\ZKVG})\to0$ a.s.\ for every fixed $q$.

Write $G_{\mathrm K,n}:=T_nT_n^\top=(\dimqk/n)\covK$ and
$\varphi_n(x)=(\theta_1-\theta_2)x+(n/\dimqk)\theta_2x^2$.
The left Gram of $\ZKVG$ equals that of $\YG G_{\mathrm K,n}^{1/2}$.
Conditionally on the keys, the latter matrix has independent rows
with covariance $\varphi_n(G_{\mathrm K,n})/n$, because
$\covG=(\theta_1-\theta_2)I+(n/\dimqk)\theta_2G_{\mathrm K,n}$.

The marginal Gaussian Gram law gives
$\mu_{G_{\mathrm K,n}}\to\freepois_{\keyratio/h}$, and its norm is bounded a.s.
Since $n/\dimqk\to h/\keyratio$, the matrix
$\varphi_n(G_{\mathrm K,n})$ has empirical eigenvalue law tending to $\kvlaw$ and bounded norm.
Represent $\YG G_{\mathrm K,n}^{1/2}$ as
$Z\varphi_n(G_{\mathrm K,n})^{1/2}/\sqrt n$ with standard Gaussian
$Z\in\R^{n\times hn}$ independent of the keys, on a probability-space
extension if needed.
Applying \cref{lem:gaussian-law} with $D_m=\varphi_n(G_{\mathrm K,n})$ and $hn/n=h$
therefore gives \eqref{eq:shared-value-law} for the Gaussian matrix,
including its zero eigenvalues.
The moment comparison above transfers every fixed moment to $\ZKV$.
Since the limiting law is compactly supported and therefore determined
by its moments, this also gives weak convergence.

For the first moment, since
$m_1(\freepois_{\keyratio/h})=\keyratio/h$ and
$m_2(\freepois_{\keyratio/h})=\keyratio/h+(\keyratio/h)^2$,
the defining cumulants of $\freepois(\kvlaw,h)$ give
\[
 m_1(\freepois(\kvlaw,h))
 =h\left[(\theta_1-\theta_2)\frac{\keyratio}{h}+
          \theta_2\left(1+\frac{\keyratio}{h}\right)\right]
 =\keyratio\theta_1+h\theta_2,
\]
which proves \eqref{eq:shared-value-first-moment}.
\end{proof}

The proof uses only the marginal law of each key matrix. Hence it
allows arbitrary dependence across heads, including a common key
and value matrix. The free cumulants of the output law are
\[
 \kappa_q=h\int\kvmap(x)^q\,d\freepois_{\keyratio/h}(x).
\]

\paragraph{Independent Gaussian output weights.}
\label{app:shared-values-output}
Let $G_O\in\R^{h\dimqk\times\dimout}$ be standard Gaussian,
independent of keys, queries, and comparison noise, and set
$W^O=G_O/\sqrt{h\dimqk}$, with $\dimout/n\to\outputratio$.
The normalized Gaussian output is
$n^{-1/2}\MHAG(X)=\ZKVG G_O/\sqrt{h\dimqk}$.
Given $\ZKVG$, its columns are independent with covariance
$D_n/(h\dimqk)$, $D_n=\ZKVG(\ZKVG)^\top$.
By Gaussian norm bounds, $\|D_n\|=O(1)$ a.s.
Equation~\eqref{eq:row-pop-law} and $n/(h\dimqk)\to1/\keyratio$ give
\begin{equation}\label{eq:shared-value-output-law}
 \nu_{\MHAperp(X)}\longrightarrow
 \dilate{1/\keyratio}{\freepois(\kvlaw,h)\boxtimes\freepois_{\outputratio}}.
\end{equation}
The convergence is a.s., weakly and in every fixed moment.
The multiplier $n^{-1/2}\Kall W^O$ has bounded norm, and the pair
consisting of this multiplier and the keys is independent of the queries.
Thus \cref{lem:replacement}
transfers the Gaussian limit to the attention output.

For the readout normalization, the limiting mean is
$m=\outputratio(\theta_1+h\theta_2/\keyratio)$.
By \eqref{eq:column-pop-S}, scaling \eqref{eq:shared-value-output-law}
by $1/m$ gives
\[
 \Stransform_\nu(z)=\frac{\keyratio m\Stransform_{\kvlaw}(z/h)}
 {(h+z)(\outputratio+z)}.
\]
Substitution into $\lambda w=(1-w)/\Stransform_\nu(-w)$ proves
\eqref{eq:readout-tied-root}. Since $\kvmap(x)>0$ for $x>0$,
$\kvlaw(\{0\})=\max\{1-\keyratio/h,0\}$.
For laws on $[0,\infty)$, the zero-atom rule
$(\mu\boxtimes\eta)(\{0\})=\max\{\mu(\{0\}),\eta(\{0\})\}$
\citep{belinschi2003atoms}, together with the row/column Gram relation
in \cref{lem:gaussian-law}, gives
\[
 \freepois(\kvlaw,h)(\{0\})
 =\max\{1-h[1-\kvlaw(\{0\})],0\}
 =\max\{1-\keyratio,0\}.
\]
Applying the same rule to \eqref{eq:shared-value-output-law}, and
noting that positive dilation preserves the mass at zero, yields
$\nu(\{0\})=1-\min\{1,\keyratio,\outputratio\}$.
Thus the range and strict monotonicity of $w_\nu$ from
\cref{app:effective-dimension} identify the unique root on the stated interval.

%% file: 109_projection_catalog.tex
\FloatBarrier
\Cref{tab:special-cases} collects value/output projection choices in
\cref{thm:main}. Write $\covVh{r}=V_r^\top\covGh{r}V_r/\ntoken$,
$\covV=\bigoplus_r \covVh{r}$, and $\bar A=h^{-1}\sum_r A_r$.
The stacked identity has $h$ blocks. The row for the head sum assumes
independent key matrices. If all heads share one key matrix, replace
$\rho_{\keyratio}$ with $\rho_{\keyratio/h}$.
The tied-value law $\kvlaw$ is defined in \cref{cor:shared-values}.

\begin{table}[ht]
\centering
\small
\renewcommand{\arraystretch}{1.2}
\setlength{\tabcolsep}{4pt}
\caption{Special cases of \cref{thm:main}.}
\label{tab:special-cases}
\sbox0{%
\begin{tabular}{@{}llllll@{}}
\toprule
Output & $V_r$ & $W^O$ & $\covVO$ & $\outputratio$ & Limit\\
\midrule
$\sqrt{\ntoken h}(\bar A-P)$ & $\sqrt\ntoken I_\ntoken$
 & $[I_\ntoken;\ldots;I_\ntoken]/\sqrt h$
 & $h^{-1}\sum_r\covGh{r}$ & $1$ & $\rho_{\keyratio}\boxtimes\freepois_1$\\
$\ZV$ & Gaussian & $I_{h\dimvalue}$ & $\covV$ & $\valueratio$
 & $\attnlimit\boxtimes\freepois_{\valueratio}$\\
$\MHAperp(X)$ & Gaussian & $G_O/\sqrt{h\dimvalue}$
 & $G_O^\top\covV G_O/(h\dimvalue)$ & $\outputratio$
 & $\attnlimit\boxtimes\MP_{1/\valueratio}\boxtimes\freepois_{\outputratio}$\\
$\ZKV$ & $K_r$ & $I_{h\dimqk}$
 & $\Kall^\top\covG\Kall/\ntoken$ & $\keyratio$ & $\freepois(\kvlaw,h)$\\
\bottomrule
\end{tabular}}%
\ifdim\wd0>\textwidth\resizebox{\textwidth}{!}{\usebox0}\else\usebox0\fi
\end{table}

%% file: 107_effective_dimension.tex
\subsection{Effective Degrees of Freedom per Token}
\label{app:effective-dimension}
Let $Z_n$ be a centered feature matrix with
$n$ rows whose squared singular-value law converges almost surely to
$\nu$, as in \cref{thm:main,prop:values-only,cor:shared-values}.
Write $G_n=Z_nZ_n^\top$. For a fixed feature realization and $\lambda>0$,
the ridge problem
\[
 \min_v\left\{\frac12\|Z_nv-y\|^2+\frac\lambda2\|v\|^2\right\}
\]
has fitted values $H_n(\lambda)y$, where
$H_n(\lambda)=G_n(G_n+\lambda I_n)^{-1}$.
Its degrees of freedom are
$\sum_i\partial(H_n(\lambda)y)_i/\partial y_i=\Tr H_n(\lambda)$.
Since $x/(x+\lambda)$ is bounded and continuous on $[0,\infty)$,
weak convergence gives, for every fixed $\lambda>0$,
\[
 \tr_n H_n(\lambda)\longrightarrow
 w_\nu(\lambda)=\int\frac{x}{x+\lambda}\,\nu(dx).
\]
Thus $w_\nu(\lambda)$ is the limiting effective degrees of freedom
per token, with $0\le w_\nu(\lambda)\le1$.
The transform identities are
\[
 w_\nu(\lambda)=-\psi_\nu(-1/\lambda)=1+\lambda G_\nu(-\lambda).
\]
For positive mean, $w_\nu$ decreases strictly from
$1-\nu(\{0\})$ to zero as $\lambda$ increases from zero to infinity.
By \eqref{eq:s-transform}, its inverse relation is
\[
 \lambda=\frac{1-w}{w\Stransform_\nu(-w)},
 \qquad w=w_\nu(\lambda).
\]

To compare spectral shape at the same first moment, assume
$m_1=m_1(\nu)>0$ and put
$\bar\nu=\pushforward{(x\mapsto x/m_1)}{\nu}$.
Using $G_n/m_1$ with penalty $\lambda$ is equivalent to using $G_n$ with
penalty $m_1\lambda$ for this normalized trace. All limiting laws below
are compactly supported, so the identity
\[
 \frac{x}{x+\lambda}
 =\frac{x}{\lambda}-\frac{x^2}{\lambda^2}
   +\frac{x^3}{\lambda^2(x+\lambda)}
\]
gives, with $\bar m_j=m_j(\bar\nu)$,
\begin{equation}\label{eq:effective-dimension-expansion}
 w_{\bar\nu}(\lambda)
 =\frac1\lambda-\frac{\bar m_2}{\lambda^2}+O(\lambda^{-3}),
 \lambda\to\infty.
\end{equation}
The remainder lies between zero and $\bar m_3/\lambda^3$.

For independent Gaussian key and value families with the across-head
sharing allowed in \cref{prop:values-only},
$\nu=\attnlimit\boxtimes\freepois_{\valueratio}$ has
\[
 m_1=\valueratio\theta_1,
 \bar m_2=1+\frac1h+\valueratio^{-1}+\keyratio^{-1}(\theta_2/\theta_1)^2.
\]
Thus, at fixed $\valueratio,\keyratio,\beta$ and two fixed head counts $h_2>h_1$,
\begin{equation}\label{eq:effective-dimension-heads}
 w_{\bar\nu_{h_2}}(\lambda)-w_{\bar\nu_{h_1}}(\lambda)
 =\frac{1/h_1-1/h_2}{\lambda^2}+O(\lambda^{-3})>0
\end{equation}
for sufficiently large $\lambda$. The analogous conclusion holds after
an independent Gaussian output projection at fixed $\outputratio$, which adds
$\outputratio^{-1}$ to $\bar m_2$.

Across-head sharing preserves the entire limiting
law in these examples and therefore preserves the effective degrees of freedom per token
at every fixed positive penalty.

To compare independent values and $V_r=K_r$ at $\beta=1$,
set $\dimvalue=\dimqk=n$ (so $\valueratio=\keyratio=h$) and omit the output
projection. For $\ZV=[\Ahead{1}V_1,\ldots,\Ahead{h}V_h]$,
independent values give
\[
 m_{1,\mathrm{ind}}=h(e-1),\qquad
 \bar m_{2,\mathrm{ind}}=1+\frac{2+(e-1)^{-2}}h.
\]
For tied values, \cref{cor:shared-values} gives
$\nu_{\mathrm{tied}}=\freepois(\kvlaw,h)$,
where $\kvlaw=\pushforward{\kvmap}{\freepois_1}$ and
$\kvmap(x)=(e-2)x+x^2$.
Using the first four moments $1,2,5,14$ of $\freepois_1$
\citep[Proposition 12.11]{nica2006lectures} gives
$m_1(\kvlaw)=e$ and $m_2(\kvlaw)=2e^2+2e+2$.
Equation~\eqref{eq:compound-moments} then gives
\[
 m_{1,\mathrm{tied}}=he,\qquad
 \bar m_{2,\mathrm{tied}}=1+\frac{2+2e^{-1}+2e^{-2}}h.
\]
The tied model has a larger first moment and a larger normalized second
moment for every fixed $h$. By \eqref{eq:effective-dimension-expansion},
after matching first moments its effective degrees of freedom per token
are smaller for sufficiently strong regularization.

%% file: 117_readout_observables.tex
\subsection{Training Loss and Estimation Error from the Bulk Law}
\label{app:readout-observables}
Let $\Phi=\MHA(X)$ or $\Phi=[A_1V_1,\ldots,A_hV_h]$.
Let $\Phi_\perp$ denote the matrix obtained from $\Phi$ by replacing
each $A_r$ with $\Ahead{r}=A_r-P$.
We use the deterministic normalization
\begin{equation}\label{eq:readout-normalization}
 F=\Phi/\sqrt m,\qquad
 m=\lim_{\ntoken\to\infty}\frac{\|\Phi_\perp\|_F^2}{\ntoken}>0,
\end{equation}
where the limit is almost sure. Thus the limiting average squared
row norm of the centered features is one, while the common
component remains in the features used for learning.

Throughout the derivations below, $F_n$ has $n$ rows and
$p_n$ columns, its left-Gram empirical law $\nu_n$ converges almost
surely to a deterministic probability law $\nu$, and any feature
normalization is deterministic. The uncentered attention features
have the same weak limit as their centered counterparts because
their difference has rank at most one. No convergence of uncentered
spectral moments is needed for the bounded filters below.

\subsubsection{Fixed-Target Gradient Flow}
Fix $y$ independently of the network weights, with
$\mathbf1_n^\top y=0$ and $\|y\|^2=n$, and hold it fixed across
feature initializations. The relative training loss is
\begin{equation}\label{eq:readout-flow-finite}
 L_n(t;y)=\frac{\|F_n\vartheta(t)-y\|^2}{\|y\|^2}.
\end{equation}
For the objective $\frac12\|F_n\vartheta-y\|^2$ with $\vartheta(0)=0$, the residual
satisfies $\dot r=-F_nF_n^\top r$ and $r(0)=-y$. This gives
the loss in \eqref{eq:readout-flow-finite} as
\[
 L_n(t;y)=\frac{y^\top e^{-2tF_nF_n^\top}y}{\|y\|^2}.
\]
We show that, for every fixed $t\ge0$,
\begin{equation}\label{eq:readout-flow-limit}
 \lim_{n\to\infty}\E L_n(t;y)
 =\int e^{-2tx}\,d\nu(x).
\end{equation}

Let
\[
 B_n(t)=\E e^{-2tF_nF_n^\top},\qquad
 P=\mathbf1_n\mathbf1_n^\top/n.
\]
Under the orthogonal-input assumptions, the query rows are independent
and identically distributed and independent of the key, value, and
output family. Conditional on that family, permuting the query rows
permutes the rows of either feature matrix defined above.
Thus $B_n(t)$ commutes with every
permutation matrix. For $n\ge2$, it has the form
\[
 B_n(t)=a_n(t)(I_n-P)+c_n(t)P,
 \qquad 0\le a_n(t),c_n(t)\le1,
\]
since the exponential is a positive contraction. Every deterministic
$y$ with $Py=0$ therefore satisfies
\[
 \E L_n(t;y)=a_n(t),\qquad
 \frac1n\Tr B_n(t)=\frac{n-1}{n}a_n(t)+\frac1n c_n(t).
\]
In particular,
\begin{equation}\label{eq:readout-exchangeability-bound}
 \left|\E L_n(t;y)
 -\E\int e^{-2tx}\,d\nu_n(x)\right|
 \le\frac1n.
\end{equation}

For fixed $t\ge0$, weak convergence and bounded convergence now give
\eqref{eq:readout-flow-limit}.

\subsubsection{Ridge Estimation with an Isotropic Teacher}
For ridge regression, generate
\[
 y=\sqrt n F_n\vartheta+\varepsilon,\qquad
 \vartheta\sim\Normal(0,I_{p_n}/p_n),\qquad
 \varepsilon\sim\Normal(0,\sigma^2 I_n),
\]
with $\vartheta$, $\varepsilon$, and $F_n$ mutually independent.
The estimator $\vartheta_*$ minimizes
$n^{-1}\|\sqrt n F_nv-y\|^2+\lambda\|v\|^2$ over $v$, for $\lambda>0$.
We evaluate
$\mathbb E_{\vartheta,\varepsilon}
[\|\vartheta_*-\vartheta\|^2\mid F_n]$.

For a fixed $F_n$, put $C_n=F_n^\top F_n$.
The normal equations give
\[
 \vartheta_*-\vartheta
 =-\lambda(C_n+\lambda I)^{-1}\vartheta
  +\frac1{\sqrt n}(C_n+\lambda I)^{-1}F_n^\top\varepsilon.
\]
Independence and the stated covariances give the exact conditional
risk
\[
 \mathbb E_{\vartheta,\varepsilon}
 [\|\vartheta_*-\vartheta\|^2\mid F_n]
 =\frac{\lambda^2}{p_n}\Tr(C_n+\lambda I)^{-2}
 +\frac{\sigma^2}{n}\Tr C_n(C_n+\lambda I)^{-2}.
\]

The nonzero spectra of $C_n$ and $F_nF_n^\top$ agree. Accounting
for the different numbers of zero eigenvalues and writing
$\gamma_n=p_n/n$, this equals
\[
 1-\gamma_n^{-1}
 +\gamma_n^{-1}\int\frac{\lambda^2}{(x+\lambda)^2}\,d\nu_n(x)
 +\sigma^2\int\frac{x}{(x+\lambda)^2}\,d\nu_n(x).
\]

For fixed $\lambda>0$, both integrands are
bounded and continuous on $[0,\infty)$, so almost-sure weak
convergence and $\gamma_n\to\gamma>0$ give
\begin{equation}\label{eq:readout-risk}
 \begin{aligned}
 \mathcal{E}_\lambda&:=\lim_{n\to\infty}\mathbb E_{\vartheta,\varepsilon}
 [\|\vartheta_*-\vartheta\|^2\mid F_n]\\
 &=1-\gamma^{-1}
 +\gamma^{-1}\int\frac{\lambda^2}{(x+\lambda)^2}\,d\nu(x)
 +\sigma^2\int\frac{x}{(x+\lambda)^2}\,d\nu(x).
 \end{aligned}
\end{equation}
For the full MHA output, $p_n=\dimout$ and
$\gamma=\outputratio$, yielding \eqref{eq:readout-risk-mha}.

%% file: 115_additional_derivations.tex
\subsection{Scalar Equations for Spectra and Readouts}
\label{app:additional-derivations}
\input{111_cauchy_equation}

%% file: 111_cauchy_equation.tex
\subsubsection{The Normalized Cauchy Equation}
\label{app:cauchy}
For attention and the independent key/value laws of \cref{app:vo},
let $c_\beta=\theta_2/\theta_1=\beta^2/(e^{\beta^2}-1)\in(0,1)$.
To compare spectral shapes, normalize each squared-singular-value law
to have mean one: divide by $\theta_1$ for attention, by
$\valueratio\theta_1$ for the value output, and by
$\outputratio\theta_1$ for the MHA output.

The mean-one affine MP law is
$\rho_{\beta,t}=\pushforward{(x\mapsto1-c_\beta+c_\beta x)}{\MP_{1/t}}
=\mathcal D_{1/\theta_1}\rho_t$, where $\rho_t$ is defined in \eqref{eq:rho}.
Its $R$-transform is
$R_{\rho_{\beta,t}}(z)=1-c_\beta+c_\beta/(1-c_\beta z/t)$.
For $s=\Stransform_{\rho_{\beta,\keyratio}}(-w)$, the relation
$1/s=R_{\rho_{\beta,\keyratio}}(-ws)$ gives
\[
 1=(1-c_\beta w/\keyratio)s+c_\beta(1-c_\beta)ws^2/\keyratio.
\]

For a PSD matrix $C$, $G_{\mu_C}(z)=\tr_n[(zI_n-C)^{-1}]$.
For the final mean-one law $\nu$, put
$w=w_\nu(\lambda)=1+\lambda G_\nu(-\lambda)=-\psi_\nu(-1/\lambda)$, and
define $\Pi(z)=(1+z)\allowbreak(1+z/h)\allowbreak
(1+z/\valueratio)\allowbreak(1+z/\outputratio)$ for MHA.
For attention, omit both V/O factors; for values without an output
projection, omit only the O factor.
Multiplicativity and \eqref{eq:s-transform} give
$\chi_\nu(-w)=-ws/\Pi(-w)=-1/\lambda$, hence
$s=\Pi(-w)/(\lambda w)$ and
\begin{equation}\label{eq:cauchy-polynomial}
 \lambda^2w-\lambda(1-\keyratio^{-1}c_\beta w)\Pi(-w)-\keyratio^{-1}c_\beta(1-c_\beta)\Pi(-w)^2=0.
\end{equation}
The branch is the transform of the positive probability law already
constructed above and satisfies $w\sim1/\lambda$ at infinity.
The polynomial identity extends to $\C\setminus(-\operatorname{supp}\nu)$
by analyticity, including any zeros of $w$.

For attention, the first normalized moments are
\[
 m_1=1,  m_2=1+h^{-1}+\keyratio^{-1}c_\beta^2,
\]
\[
 m_3=1+3(h^{-1}+\keyratio^{-1}c_\beta^2)+h^{-2}+3\keyratio^{-1}c_\beta^2/h+\keyratio^{-2}c_\beta^3.
\]
These provide independent checks of the numerical moment recursion.

\subsubsection{Computing Readout Predictions}
\label{app:readout-computation}
For any limiting readout law $\nu$ of positive mean,
$w=w_\nu(\lambda)=-\psi_\nu(-1/\lambda)$ and
\eqref{eq:s-transform} give
$\lambda w=(1-w)/\Stransform_\nu(-w)$.
Also, $w_\nu'(\lambda)=-\int x/(x+\lambda)^2\,d\nu(x)$.
Using $\int\lambda^2/(x+\lambda)^2\,d\nu(x)
=1-w+\lambda w_\nu'(\lambda)$
in \eqref{eq:readout-risk-mha} proves \eqref{eq:readout-scalar-risk}.

For the independent Gaussian value and output weights of
\cref{cor:vo} and $\lambda>0$, the branch $w=w_\nu(\lambda)$ satisfies
\eqref{eq:readout-ridge-root}, or
$\lambda w=\Pi(-w)/\Stransform_{\rho_{\beta,\keyratio}}(-w)$.
The root lies in $0<w<\min\{1,h,\valueratio,\outputratio\}$.
The explicit $S$-transform shows that
$1/\Stransform_{\rho_{\beta,\keyratio}}(-w)$ is positive and nonincreasing
on this interval, while $\Pi(-w)$ decreases from one to zero.
Thus the right-hand side decreases strictly from one to zero,
whereas $\lambda w$ increases, proving uniqueness.
We solve this real scalar equation by bisection or Brent's method.
Implicit differentiation at fixed architecture parameters yields
\begin{equation}\label{eq:readout-root-derivative}
 w_\nu'(\lambda)
 =-\int\frac{x}{(x+\lambda)^2}\,d\nu(x)
 =\frac{-w}{\lambda-\partial_w[\Pi(-w)/\Stransform_{\rho_{\beta,\keyratio}}(-w)]}.
\end{equation}
The coefficient error in \eqref{eq:readout-risk-mha} can therefore be
evaluated from a scalar root and its implicit derivative.

For complete within-head K/V tying, use the law in
\eqref{eq:shared-value-output-law}, normalized by
$m=\outputratio(\theta_1+h\theta_2/\keyratio)$.
Write $F_{\mathrm{KV}}(w)$ for the right-hand side of
\eqref{eq:readout-tied-root}. We solve
$\lambda w=F_{\mathrm{KV}}(w)$ on
$0<w<\min\{1,\keyratio,\outputratio\}$ and use
$w_\nu'(\lambda)=-w/[\lambda-F_{\mathrm{KV}}'(w)]$.
Thus the tied prediction uses the same scalar-root procedure and
implicit differentiation as the independent-K/V prediction.

The same solver gives the Laplace transform of the limiting
training loss $\overline L(t):=\int e^{-2tx}\,d\nu(x)$
with respect to time:
\begin{equation}\label{eq:readout-loss-laplace}
 \int_0^\infty e^{-st}\overline L(t)\,dt
 =\int\frac{1}{s+2x}\,d\nu(x)
 =\frac{1-w_\nu(s/2)}{s},\qquad s>0.
\end{equation}
Solving \eqref{eq:readout-ridge-root} at $\lambda=s/2$ and numerically
inverting this transform produces the curves in
\cref{fig:readout-overview}(a); solving $\overline L(t)=0.1$ gives
the arrival times in \cref{fig:sharing-comparison}(a). The inversion checks are reported in
\cref{app:head-learning-experiment,app:cross-head-learning-experiment}.

%% file: 098_experiments.tex
\section{Experimental Details and Additional Comparisons}
\label{app:experiments}
\paragraph{Computing infrastructure and data.}
Numerical experiments were run on local CPUs and an NVIDIA GeForce
RTX 5090 GPU. All inputs, random weights, and targets were generated
synthetically. Public models supply architecture dimensions; no
pretrained weights or external datasets were used.

\paragraph{Readout setup.}\label{app:readout-experiment-protocol}
The readout experiments use $X=\sqrt d[I_\ntoken,0]$ and
centered Gaussian projection weights with i.i.d.\ entries of
variance $1/d$ for Q/K/V
and $1/(h\dimvalue)$ for O. Projections are independent except
for the sharing specified in each experiment. The attention
weights are frozen, and the linear readout uses
$F=\MHA(X)/\sqrt m$ with $m$ from \eqref{eq:readout-normalization}.
Unless varied, $\beta=1$, $\lambda=10^{-4}$, and $\sigma^2=0.1$.

\paragraph{Numerical methods and precision.}\label{app:numerical-precision}
Feature generation, Gram eigendecompositions, and finite-matrix readout
measurements use IEEE float64 arithmetic, including the GPU computations.
For independent K/V, the scalar ridge equation
\eqref{eq:readout-ridge-root} is solved by Brent's method with absolute
tolerance $5\times10^{-15}$. For complete K/V tying,
the covariance integrals use 96-point Gauss--Legendre quadrature
after the Marchenko--Pastur substitution
$x=1+\dimqk/\ntoken-2\sqrt{\dimqk/\ntoken}\cos\theta$,
$0<\theta<\pi$.
The scalar root uses absolute and relative tolerances
$2\times10^{-14}$.
Derivatives for the coefficient error are evaluated by implicit
differentiation.

The theoretical learning curves invert
\eqref{eq:readout-loss-laplace} with the Gaver--Stehfest method
\citep{stehfest1970inversion}, using 32 terms and 89 decimal digits
of working precision. The higher precision controls cancellation
in the alternating sum; transform evaluations solve the scalar
equation by bisection at the same precision. The comparison calculation
uses 48 terms and 121 digits. Theoretical arrival times use Brent's
method with absolute and relative tolerances $10^{-10}$;
finite-matrix and bootstrap arrival times use relative tolerance
$10^{-12}$ with absolute tolerance set to the smallest positive
normal float64 value.

\input{122_section5_uncertainty}
\input{123_public_model_finite_ge}
\input{119_head_learning_experiment}
\input{118_head_budget_experiment}
\input{126_temperature_width_experiment}
\input{121_cross_head_learning_experiment}
\input{127_sharing_head_count_experiment}
\input{098_mha_experiments}

\FloatBarrier

%% file: 122_section5_uncertainty.tex
\paragraph{Uncertainty conventions.}
All error bars and tabulated $\pm$ values report nominal pointwise
95\% confidence intervals (CIs) over independent initializations.
These intervals quantify uncertainty in the initialization-averaged
measurements, conditional on the fixed target in the gradient-flow experiments.
For coefficient errors, teacher coefficients and observation noise are
averaged analytically; the intervals reflect initialization variability.
For means over $N_{\mathrm{rep}}$ draws, we use Student-$t$ intervals
with half-width $t_{N_{\mathrm{rep}}-1,0.975}s/\sqrt{N_{\mathrm{rep}}}$,
where $s$ is the sample standard deviation and $t_{\nu,p}$ is the
$p$-quantile of the Student-$t$ distribution with $\nu$ degrees of freedom;
tabulated $\pm$ values give this half-width.
We use ten draws per condition.
The coefficient-error ratios in \cref{fig:readout-overview}(d) use
Fieller intervals (\cref{app:temperature-width-experiment}).
For arrival times in \cref{fig:sharing-comparison}(a), we instead use the
2.5th and 97.5th percentiles of $2{,}000$ bootstrap resamples of the ten
initializations, keeping the target fixed and solving for the arrival
time of each resampled mean curve.

We measure an error bar by the largest distance from a plotted estimate
to either interval endpoint.
The endpoint distance is the half-width for symmetric Student-$t$
intervals and also accommodates asymmetric intervals.
\Cref{tab:section5-uncertainty} gives upper bounds on these maxima over
the plotted points, rounded upward; its relative column divides each distance by
its own estimate before taking the maximum.

\begin{table}[htbp]\centering
\caption{Maximum error bars in the learning figures.
Loss, coefficient-error, and error-ratio distances are dimensionless;
arrival-time distances use the gradient-flow time units.}
\label{tab:section5-uncertainty}
\begin{tabular}{llrr}\toprule
Figure & Quantity or path & Absolute & Relative (\%)\\\midrule
\ref{fig:readout-overview}(a) & Relative training loss & $0.00513$ & $13.314$\\
\ref{fig:readout-overview}(c) & Coefficient error & $0.02381$ & $0.562$\\
\ref{fig:readout-overview}(d) & Coefficient-error ratio & $0.00168$ & $0.108$\\
\ref{fig:sharing-comparison}(a) & Arrival time, fixed widths & $4.43476$ & $10.425$\\
\ref{fig:sharing-comparison}(a) & Arrival time, fixed parameter count & $2.75124$ & $6.486$\\
\ref{fig:sharing-comparison}(b) & Coefficient error & $0.03015$ & $1.182$\\
\bottomrule\end{tabular}
\end{table}

The arrival-time rows both include the common unshared baseline.
Among the six shared configurations at fixed parameter count,
the largest distance is $0.40314$ ($3.365\%$ relative).
The largest relative interval for the learning curves occurs at
$h=128$, $t=100$, where the mean loss is $7.39\times10^{-5}$;
its absolute half-width is $9.84\times10^{-6}$.
Detailed protocols and finite-size discrepancies follow.

%% file: 123_public_model_finite_ge.tex
\subsection{Finite-Size GE at Dimensions from Publicly Available Models}
\label{app:public-finite-ge}
We test whether the Gaussian equivalent already approximates
softmax at finite dimensions taken from publicly available models.
\Cref{tab:public-finite-ge} retains their input and projection
widths, head counts, and K/V sharing, with $\ntoken\le d$.
For each row, we use a fixed input $XX^\top/d=I_\ntoken$,
fan-in Gaussian weights, and $\beta=1$, with ten independent
initializations.

\input{124_public_model_ge_table}

\paragraph{Sharing and lengths.}
Here $h$ counts query heads and $h_{\mathrm{KV}}$ counts K/V
heads. Rows marked ``None'' use independent K/V and independent heads.
Falcon's MQA shares one independent K/V pair across all query
heads, while Kayyam ties $W_r^K=W_r^V$ within each head.
The selected lengths are BERT's final training stage
\citep{devlin2019bert}, ModernBERT's initial stage using global
attention widths \citep{warner2024modernbert}, one 256-pixel
DINOv3 global crop \citep{simeoni2025dinov3}, LLaDA's main
training length \citep{nie2025llada}, GPT-3's training context
\citep{brown2020gpt3}, and Falcon's published sequence length
\citep[Table~16]{almazrouei2023falcon}. Kayyam uses the 2048-token
evaluation setting \citep[Table~16]{kayyam2026projections}.
The table includes one configuration per model family.

\paragraph{Comparison and metrics.}
The softmax and finite GE matrices use the same Q/K/V and output
weights. We compare all $\ntoken$ eigenvalues of
$\MHAperp(X)\MHAperp(X)^\top$ and
$\ntoken^{-1}\MHAG(X)\MHAG(X)^\top$ from \cref{thm:main}, including the largest.
Centering subtracts the uniform attention component before
applying the values. The GE noise is independent between query
heads and independent of all projections.
The relative $W_1$ is the mean absolute difference of ordered
eigenvalues divided by their common limiting mean:
$(\dimout/\ntoken)(e-1)$ for independent K/V, including MQA,
and $(\dimout/\ntoken)(e-1+\ntoken/\dimqk)$ for $K=V$.
KS is the largest absolute difference between the empirical
CDFs.
The table measures finite softmax--GE spectral agreement;
the main figures separately test the limiting predictions for
loss and coefficient error.

\paragraph{Dimensions of the learning experiments.}
\Cref{tab:learning-dimensions} lists the actual softmax
dimensions in the readout comparisons.
The head-count and across-head-sharing experiments take head
width 64 from the public configurations in \cref{tab:public-finite-ge}.
Choosing $\ntoken=4096$ makes the total value width equal to the
token count at $h=64$.
The temperature comparison uses GPT-3's projection dimensions with
an extended token length. The comparison of across-head sharing
with within-head tying fixes
total widths and varies $\dimqk=\dimvalue$ from 1024 to 64,
ending at the head width 64 used in several of these public models.
These choices combine public projection dimensions with controlled
variations of the dimension ratios.

\begin{table}[htbp]\centering\small
\caption{Dimensions used in the learning experiments.
Ranges cover the sampled softmax configurations.}
\label{tab:learning-dimensions}
\begin{tabular}{@{}lrrrrrrr@{}}\toprule
Experiment & $\ntoken$ & $d$ & $h$ & $\dimqk$ & $\dimvalue$ & $h\dimvalue$ & $\dimout$\\\midrule
\ref{fig:readout-overview}(a), learning curves & 4096 & 4096 & 32--128 & 64 & 64 & 2048--8192 & 8192\\
\ref{fig:readout-overview}(c), fixed output & 4096 & 4096 & 16--120 & 64 & 64 & 1024--7680 & 2048\\
\ref{fig:readout-overview}(c), fixed budget & 4096 & 4096 & 32--120 & 64 & 64 & 2048--7680 & 819--36864\\
\ref{fig:readout-overview}(d), temperature & 8192 & 12288 & 96 & 64--128 & 128--192 & 12288--18432 & 12288\\
\ref{fig:sharing-comparison}(a), across-head sharing & 4096 & 4096 & 64 & 64--106 & 64--106 & 4096--6784 & 8192\\
\ref{fig:sharing-comparison}(b), sharing schemes & 4096 & 8192 & 2--32 & 64--1024 & 64--1024 & 2048 & 2048\\
\bottomrule\end{tabular}
\end{table}

%% file: 124_public_model_ge_table.tex
% Generated by mha-paper render from public experiment summaries.
\begin{table}[htbp]\centering\footnotesize
\caption{Finite-size GE at dimensions taken from publicly available models, with $n\le d$.
Centered MHA--GE spectral distances at $\beta=1$, with $\dimout=d$.
$^\ast$Kayyam uses the 2048-token evaluation setting.}
\label{tab:public-finite-ge}
\begingroup\setlength{\tabcolsep}{3pt}
\begin{tabular}{@{}llrrrrcc@{}}\toprule
Dimensions of & Sharing & $n$ & $d$ & $h/h_{\mathrm{KV}}$ & $\dimqk=\dimvalue$ & Rel. $W_1$ (\%) & KS\\\midrule
BERT-base & None & 512 & 768 & 12/12 & 64 & $1.66 \pm 0.23$ & $0.0109 \pm 0.0010$ \\
ModernBERT-large & None & 1,024 & 1,024 & 16/16 & 64 & $1.37 \pm 0.21$ & $0.0052 \pm 0.0005$ \\
DINOv3-L/16 & None & 261 & 1,024 & 16/16 & 64 & $2.82 \pm 0.40$ & $0.0280 \pm 0.0018$ \\
LLaDA-8B & None & 4,096 & 4,096 & 32/32 & 128 & $0.92 \pm 0.06$ & $0.0022 \pm 0.0001$ \\
GPT-3 175B & None & 2,048 & 12,288 & 96/96 & 128 & $0.62 \pm 0.07$ & $0.0040 \pm 0.0004$ \\
Falcon-7B & MQA & 2,048 & 4,544 & 71/1 & 64 & $1.92 \pm 0.46$ & $0.0061 \pm 0.0012$ \\
Kayyam 1.2B$^\ast$ & $K=V$ & 2,048 & 2,048 & 32/32 & 64 & $0.24 \pm 0.01$ & $0.0020 \pm 0.0001$ \\
\bottomrule\end{tabular}\endgroup
\end{table}

%% file: 119_head_learning_experiment.tex
\subsection{Head-Count Training Curves}
\label{app:head-learning-experiment}
For \cref{fig:readout-overview}(a), we use $d=\ntoken=4096$,
$\dimqk=\dimvalue=64$, $\dimout=2\ntoken$, and
$h\in\{32,64,128\}$, with the setup in
\cref{app:readout-experiment-protocol}.
We fit an additional linear readout from the full, uncentered MHA
features, scaled by $1/\sqrt{2(e-1)}$ as in
\eqref{eq:readout-normalization}. Per-head and output widths are
fixed; total feature widths and projection parameter counts
increase with $h$.

Before drawing any network weights, we draw one standard Gaussian
vector, subtract its sample mean, and rescale it to have squared
norm $\ntoken$. This target $y$ is held fixed across all conditions
and initializations. We use zero-initialized readout weights and
unregularized continuous-time gradient flow. For each head count,
ten independent softmax feature matrices give the exact losses
\[
 L_\ntoken(t;y)=\frac1\ntoken
 \sum_j e^{-2t\ell_j}|u_j^\top y|^2,
 \qquad FF^\top u_j=\ell_j u_j.
\]
Thus the measurements retain the alignment of the fixed target
with the empirical eigenvectors.
Implementation tests compare the spectral formula with direct
matrix-exponential calculations.
Over the plotted times and three conditions, the largest absolute
difference between the measured mean relative loss and the limiting
prediction is below $0.003$.

The theoretical curves use the bulk law in \eqref{eq:volaw},
rescaled by $1/[2(e-1)]$, with
$\keyratio=\valueratio=h/64$ and $\outputratio=2$.
We evaluate \eqref{eq:learning-curve-bulk} by the inverse-Laplace
method in \cref{app:numerical-precision}. Checks at $t=1,10,100$
compare 32 and 48 terms and independent spectral quadrature.
For $h=32$ the limiting law has mass $1/2$ at zero, yielding
the loss plateau. The other two conditions have no atom at zero;
the displayed curves compare their different decay rates.

%% file: 118_head_budget_experiment.tex
\subsection{Head Count and Projection Budget}
\label{app:head-budget-experiment}
\Cref{fig:readout-overview}(b,c) evaluates the ridge coefficient error in
\cref{sec:readout-setting}. We fix $d=\ntoken=4096$,
$\dimqk=\dimvalue=64$, with the setup in
\cref{app:readout-experiment-protocol}.
The baseline head width 64 follows BERT and ModernBERT
\citep{devlin2019bert,warner2024modernbert}; $\ntoken=4096$
places the value-width boundary at $h=64$.
The features are the full output divided by
$\sqrt{\outputratio(e-1)}$, as in
\eqref{eq:readout-normalization}.

The map varies $h$ and $\outputratio$ with
$\keyratio=\valueratio=h/64$.
The fixed-output path has $\outputratio=1/2$.
The fixed-budget path holds the number of query, key, value, and
output projection weights at $6\ntoken^2$, giving
\[
 h\dimqk(3\ntoken+\dimout)=6\ntoken^2,
 \qquad \outputratio=6/\valueratio-3.
\]
The width equalities are $h\dimvalue=\dimout$ at $h=32$ on the
first path, and $h\dimvalue=\ntoken$ at $h=64$ and
$\dimout=\ntoken$ at $h=96$ on the second.
The dotted map boundaries indicate changes in which width is
smallest. Colors represent the continuous estimation error.

The curves and map use the limiting formula
\eqref{eq:readout-risk}, with the law in \eqref{eq:volaw} rescaled
by $1/[\outputratio(e-1)]$. Curves between integer head counts
interpolate the proportional-limit formula.
For each sampled integer head count, we generate ten independent
softmax feature matrices. Each point averages the exact conditional
coefficient error over these initializations, using the finite-matrix
formula in \cref{app:readout-observables}; the teacher and noise
are averaged analytically. Output widths are
rounded to the nearest integer, and numerical checks of the theory
use the realized widths. Teacher coefficients have covariance
$I_{\dimout}/\dimout$, so their dimension varies with output width.
Across the 19 configurations, the largest relative difference
between the measured mean coefficient error and GE is $1.72\%$.

%% file: 126_temperature_width_experiment.tex
\subsection{Temperature and Q/K--V Width Allocation}
\label{app:temperature-width-experiment}
\Cref{fig:readout-overview}(d) fixes $h=96$, $\ntoken=8192$,
and $d=\dimout=3\ntoken/2$, and compares
$(\dimqk,\dimvalue)=(128,128)$ with $(64,192)$.
These dimensions use the projection widths of GPT-3
\citep{brown2020gpt3}, with a longer, hypothetical sequence and
a controlled reallocation of widths. We use the setup in
\cref{app:readout-experiment-protocol}.
The full uncentered softmax output is scaled by
$1/\sqrt{\outputratio(e^{\beta^2}-1)}$ at each inverse temperature.

\paragraph{Equal parameter counts.}
For independent heads, the distinct projection weight count is
\[
 \nparam=2dh\dimqk+dh\dimvalue+h\dimvalue\dimout
         =2dh(\dimqk+\dimvalue),
 \qquad d=\dimout.
\]
Both allocations have $\dimqk+\dimvalue=256$, hence exactly the
same number of Q/K/V/O weights and the same total K/V storage width.

\paragraph{Crossover equation.}
Write $\mathcal{E}_\lambda(\beta;\dimqk,\dimvalue)$ for \eqref{eq:readout-risk-mha}
at inverse temperature $\beta$ and per-head widths $(\dimqk,\dimvalue)$,
with the fixed ridge parameter and noise variance above.
The crossover $\beta_{\mathrm c}$ in the displayed range satisfies
\begin{equation}\label{eq:temperature-crossover}
 \mathcal{E}_\lambda(\beta_{\mathrm c};128,128)=\mathcal{E}_\lambda(\beta_{\mathrm c};64,192).
\end{equation}
We compute it using the scalar solver in \cref{app:readout-computation}.
For each allocation $(\dimqk,\dimvalue)$, we set
$\keyratio=h\dimqk/\ntoken$, $\valueratio=h\dimvalue/\ntoken$,
$\outputratio=3/2$, and solve \eqref{eq:readout-ridge-root} before
evaluating \eqref{eq:readout-scalar-risk}.
Solving \eqref{eq:temperature-crossover} in the displayed range gives
$\beta_{\mathrm c}\approx0.559$.

\paragraph{Theory and measurements.}
The curve shows the ratio
$\mathcal{E}_\lambda(\beta;64,192)/\mathcal{E}_\lambda(\beta;128,128)$,
evaluating \eqref{eq:readout-risk-mha} independently of the measurements.
A ratio below one favors wider V at the same inverse temperature and
parameter count.

The plotted softmax points use
$\beta\in\{0.25,0.5,0.75,1,1.25,1.5\}$, with ten initializations
per point and exact conditional averages over teacher coefficients
and observation noise. Within each width allocation and initialization,
the same Q/K/V/O weights are used across temperatures.
All $\ntoken$ Gram eigenvalues enter the finite-matrix ridge error.
Each plotted point is the ratio of the two sample means.
Its nominal 95\% Fieller interval consists of ratios $r$ satisfying
$|\bar Y-r\bar X|\le t_{9,0.975}s_{Y-rX}/\sqrt{10}$,
where $X$ and $Y$ are the replicate-level errors for $(128,128)$ and
$(64,192)$, respectively, and $s_{Y-rX}$ is their sample standard deviation
after forming $Y-rX$ within each replicate.

Across the six plotted ratios, the largest relative difference
between the softmax estimate and GE is $1.52\%$ (rounded upward).

The two allocations have total Q/K-to-token ratios $1.5$ and $0.75$,
respectively, while both total V widths exceed $\ntoken$.
At low inverse temperature, the narrow-Q/K model has weak directions
associated with its smaller Q/K span. As $\beta$ increases,
$\beta^2/(e^{\beta^2}-1)$ decreases and the nonlinear contribution
to the normalized spectrum grows. The ridge coefficient error reflects
both this spectral change and the wider V projections.

\paragraph{Relation to existing width choices.}
The original Transformer uses $\dimqk=\dimvalue=d/h$ as a
computationally balanced default, but its Table~3(B) also tests
$\dimqk=16,32$ at $\dimvalue=64$ \citep{vaswani2017attention}.
Those smaller-key models have lower translation quality, but
also fewer parameters.

LeViT explicitly uses $\dimvalue=2\dimqk$ in ordinary attention
and $4\dimqk$ in downsampling attention. Its positional bias permits
smaller keys, reducing score-computation cost while retaining wider
values \citep[Section~4.2]{graham2021levit}.
\citet{yao2026thinkeys} study smaller Q/K projections with full-width
values for language-model compression.
DeepSeek-V2 provides another distinction: its MLA scores combine
128 content dimensions with 64 positional dimensions, while each
value head has width 128 \citep{deepseek2024v2}. The coupled latent
projections and positional transformations make this a different
architecture from the independent projections tested here.
Our example adds a GE calculation of the temperature-dependent
ranking of two width allocations under the stated ridge objective
and parameter constraint.

%% file: 121_cross_head_learning_experiment.tex
\subsection{Across-Head Sharing and Gradient-Flow Time}
\label{app:cross-head-learning-experiment}
\Cref{fig:sharing-comparison}(a) uses $h=64$, $d=\ntoken=4096$,
and $\dimout=2\ntoken$.
The feature scaling and readout protocol are those of
\cref{app:head-learning-experiment}, including the same fixed target.
We draw $h_{\mathrm{KV}}\in\{64,32,16,8,4,2,1\}$ key projections and $h_{\mathrm{KV}}$ value
projections independently, and share each K/V pair among $h/h_{\mathrm{KV}}$
query heads. All query projections remain
independent, and the output projection is independent of all
heads. The horizontal coordinate $h/h_{\mathrm{KV}}$ therefore runs from $1$
(no sharing) to $64$ (multi-query attention
\citep{shazeer2019fast}).

\paragraph{Width and parameter-count constraints.}
For $\dimqk=\dimvalue$ the number of distinct projection weights is
\[
 \nparam=dh\dimqk+2dh_{\mathrm{KV}}\dimqk+h\dimqk\dimout.
\]
The first, second, and third terms count Q, K/V, and O weights,
respectively.
The fixed-width path keeps $\dimqk=\dimvalue=64$, so $\nparam$ decreases
with $h_{\mathrm{KV}}$. The fixed-parameter-count path retains the unshared
baseline's budget $\nparam=5\ntoken^2$.
Since $d=\ntoken$ and $\dimout=2\ntoken$, its continuous-width
relation is
\[
 \keyratio=\valueratio=\frac{5}{3+2h_{\mathrm{KV}}/h}.
\]
For the seven values of $h_{\mathrm{KV}}$ listed above, rounding gives head
widths $64,80,91,98,102,104,106$. The realized parameter counts differ
from the target by at most $0.47\%$.
The smooth theoretical curve evaluates the continuous-width
relation over $h/h_{\mathrm{KV}}\in[1,64]$, solving
\eqref{eq:learning-curve-bulk} for the arrival time at each value.
Rounding the head widths of these configurations changes the theoretical arrival times
by at most $0.99\%$ relative to the continuous-width curve.
The two paths coincide at $h_{\mathrm{KV}}=h$.

\paragraph{Arrival times and uncertainty.}
For each configuration, we average the exact finite-matrix
learning curves over ten feature initializations, retaining the
fixed target's projections onto the empirical eigenvectors as
in \cref{app:head-learning-experiment}. The plotted time is the
root at which this mean loss equals $0.1$; it is not the mean of
the ten individual arrival times.
We generate ten softmax draws for each of the six values $h_{\mathrm{KV}}<h$
under each constraint, giving $120$ shared-architecture
draws. The ten existing $h=64$ unshared draws supply the common
baseline from \cref{app:head-learning-experiment}, displayed once as a gray diamond.

The theoretical times solve \eqref{eq:learning-curve-bulk} at loss
$0.1$, using \eqref{eq:volaw} and the same scaling as the
measurements. Fixed widths give the common prediction $41.22$.
At fixed target parameter count, the continuous-width prediction
decreases to $7.80$ at $h_{\mathrm{KV}}=1$, or $7.76$ at its rounded width.
For the fully shared configuration, the measured times are $39.11$
at fixed widths and $7.57$ at fixed parameter count.
Over both paths, the largest relative difference
between a measured time and the prediction evaluated at the rounded
head width is $5.14\%$.

The 32- and 48-term inverse-Laplace calculations in
\cref{app:numerical-precision} agree in loss to within
$3\times10^{-12}$ at the predicted arrival times.

%% file: 127_sharing_head_count_experiment.tex
\subsection{Head Count and Sharing across or within Heads}
\label{app:sharing-head-count-experiment}
\Cref{fig:sharing-comparison}(b) compares GQA with complete within-head
K/V tying at equal widths and projection parameter count.
We set $\ntoken=4096$, $d=8192$,
$h\dimqk=h\dimvalue=\dimout=\ntoken/2$.
The head counts $h=2,4,8,16,32$ give
$\dimqk=\dimvalue=1024,512,256,128,64$, respectively.
Projection weights follow \cref{app:readout-experiment-protocol}.

GQA partitions the query heads into pairs, each with a shared key
projection and a separate shared value projection. Distinct pairs
are independent. Complete within-head tying sets $W_r^V=W_r^K$
in every head, with distinct heads independent.
Both schemes therefore have
\[
 \nparam=h\dimqk(2d+\dimout)
\]
distinct Q/K/V/O weights, excluding biases and the fitted readout.
The unshared counterpart at these widths uses
$h\dimqk(3d+\dimout)$ weights.
The comparison between the two sharing schemes thus holds total
K/V widths, output width, and parameter count fixed as $h$ varies.

We scale the full uncentered features by $1/\sqrt m$ as in
\cref{sec:readout-setting}. For GQA,
$m=\outputratio\theta_1$; for complete tying,
$m=\outputratio(\theta_1+\theta_2\ntoken/\dimqk)$.
GQA uses the independent-K/V law in \cref{cor:vo}.
Complete tying uses the law in \eqref{eq:shared-value-output-law}.
After scaling by $1/m$, we compute its coefficient error from
\cref{eq:readout-tied-root,eq:readout-scalar-risk}, using the
scalar solver in \cref{app:readout-computation}.
Curves interpolate the GE formula between integer head counts.

Each marker averages ten independent softmax initializations,
with teacher coefficients and observation noise integrated
analytically as in \cref{app:readout-observables}.
The two schemes share query and output weights within each draw;
each GQA pair uses the first head's key and value weights from
an independent-K/V realization.
The nominal 95\% intervals follow
\cref{tab:section5-uncertainty}. Applying the same Student-$t$
construction to the paired differences gives
$0.04600\pm0.00379$ for tied minus GQA error at $h=4$;
the difference intervals exclude zero at all five head counts,
supporting the reported ordering.
Across the displayed points,
the largest relative difference between the softmax mean and GE
is $0.75\%$, supporting the predicted change in the preferred
sharing scheme at these finite dimensions.

%% file: 098_mha_experiments.tex
\subsection{MHA Equivalence and Architectural Comparisons}
\label{app:mha-experiments}
\Cref{fig:architectural-spectra} uses 10
independent realizations per condition and $\beta=1$.
The panels plot squared singular values of $\sqrt{\ntoken/h}\Aperp$ for attention and
$\MHAperp(X)$ for MHA; the corresponding Gaussian
matrices are $\YG/\sqrt h$ and $\ntoken^{-1/2}\MHAG(X)$.
All $\ntoken$ eigenvalues of each left Gram matrix, including zeros,
enter each empirical distribution with mass $1/\ntoken$.

\input{022_spectral_figures}

\paragraph{Controlled architectural comparisons.}
\Cref{fig:architectural-spectra} fixes $\ntoken=256$ and varies one
architectural choice per panel. For attention, $h=4$ and $h=16$
use $\dimqk=1024$ and 256 respectively, keeping $h\dimqk=4096$.
The MHA baseline has $h=16$, $\dimqk=\dimvalue=256$, and $\dimout=4096$.
Panel (b) changes only $\dimout$ to 256. Panel (c) compares independent
heads, one key matrix and one independent value matrix shared by
all heads, and $V_r=K_r$ within otherwise independent heads.
The independent and within-head-tied comparisons share queries,
keys, output weights, and comparison noise within each draw;
the across-head-shared condition uses a separate seed sequence.
All empirical histograms use 64 bins covering their full ranges.

\paragraph{Deterministic theoretical curves.}
The attention curve is \eqref{eq:limiting-law}; independent and
across-head-shared MHA use \eqref{eq:volaw}; within-head tying with
Gaussian output uses \eqref{eq:shared-value-output-law}.
These curves depend only on $\beta$ and the dimension ratios.
We use multiplicativity of the $S$-transform and
\eqref{eq:column-pop-S} to form the Cauchy-transform equation,
then track its solution from the upper-half-plane asymptotic
$G(z)\sim1/z$. Numerical Stieltjes inversion evaluates
$-\pi^{-1}\operatorname{Im}G(x+10^{-6}i)$, with 192-point
Gauss--Legendre quadrature for the Marchenko--Pastur integral.
At $\dimout=256$, the limiting density diverges at zero,
whereas each histogram bin records a finite average density.

%% file: 022_spectral_figures.tex
\ifdefined\PaperArxivLayout
 \begin{figure}[t]
\else
 \begin{figure*}[t]
\fi
 \centering
 \includegraphics[width=\textwidth]{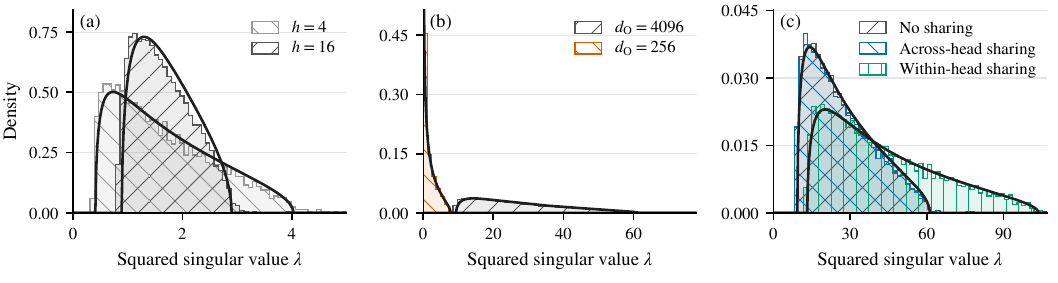}
 \caption{Head splitting concentrates the attention bulk (a),
 output width changes the MHA spectral scale and shape (b), and
 within-head key--value tying changes the limit while across-head
 sharing preserves it (c).
 Histograms show original-model spectra; black curves show the
 theoretical limits. Experimental details are in
 \cref{app:mha-experiments}.}
 \label{fig:architectural-spectra}
\ifdefined\PaperArxivLayout
 \end{figure}
\else
 \end{figure*}
\fi